\documentclass[11pt]{article}

\usepackage[margin=1in]{geometry}
\usepackage{times}
\usepackage{amsmath,amssymb,amsthm}
\usepackage{booktabs}
\usepackage{array}
\usepackage{float}
\usepackage{algorithm}
\usepackage{algpseudocode}
\usepackage{enumitem}
\usepackage{url}
\usepackage{cite}
\usepackage{hyperref}
\usepackage{xcolor}
\usepackage{microtype}
\usepackage{tikz}
\usetikzlibrary{arrows.meta,positioning}
\usepackage{pgfplots}
\pgfplotsset{compat=1.18}

\pgfplotsset{
  resultaxis/.style={ width=0.47\linewidth, height=4.7cm, tick label style={font=\scriptsize}, label
  style={font=\scriptsize}, title style={font=\small}, legend style={font=\scriptsize, draw=none,
  fill=none}, grid=both, grid style={black!12}, axis lines=left, } }

\hypersetup{
  colorlinks=true, linkcolor=black, citecolor=blue!50!black, urlcolor=blue!50!black }

\theoremstyle{plain} \newtheorem{proposition}{Proposition} \newtheorem{corollary}[proposition]{Corollary}
\newtheorem{lemma}[proposition]{Lemma} \theoremstyle{definition} \newtheorem{definition}{Definition}

\newcolumntype{C}[1]{>{\centering\arraybackslash}p{#1}}
\newcolumntype{L}[1]{>{\raggedright\arraybackslash}p{#1}}

\newcommand{\system}{LazyAgent}
\newcommand{\code}[1]{\texttt{#1}}
\newcommand{\demanded}{\mathrm{dem}}

\setlist{leftmargin=*}

\let\oldthebibliography\thebibliography
\renewcommand{\thebibliography}[1]{%
  \small\oldthebibliography{#1}%
  \setlength{\itemsep}{1pt}}

\title{\system: Demand-Driven Materialization and Physical\\Optimization of Agentic Programs}
\author{\textbf{Xin Heng}\\ Tote AI\\ \texttt{\small xin@tote.ai}}
\date{}

\begin{document}
\maketitle

% ABSTRACT BUDGET: roughly 300 words (427 as of this revision). It has twice been rewritten down from 440+
% and twice re-inflated as new results landed. A new result earns a *clause*
% here, or it earns a replacement --- not an addition. Count before committing:
%   sed -n '/begin{abstract}/,/end{abstract}/p' lazy_agent-paper.tex | sed 's/[\{}$]//g' | wc -w
\begin{abstract}
Current agent runtimes that plan before acting execute on one rule: once a step is ready, run it. We present
\system{}, to our knowledge the first unified execution framework for agent-authored programs organized
around a live, goal-derived demanded set. Prior work contains individual pieces --- lazy evaluation,
output-rooted pruning, stopping, scheduling, projection, and caching --- but no agent runtime makes one
request-relative set govern both permission to execute and the physical operations that follow. \system{}
differs by refreshing a backward closure from requested outputs as execution state changes and executing a
ready node only when the active goal requires it.
This turns selection from repeated local judgments into one backward analysis, linear in
graph size, followed by constant-time membership tests --- allowing plans to remain broad while execution
stays goal-specific.

This shift makes broad predefined workflows cheap to describe and selective to execute. On programs
that describe more than the current request needs, \system{} consistently outperforms the strongest
goal-stopping eager baseline by refusing unrelated work before it starts. Adding one unrelated product
raises the eager bill by $22.5\%$ and \system{}'s by $0.0\%$. This saves 42.0\% of measured CPU on
production scientific workflows and 51.7\% of container time on a live release gate spanning four
repositories. On that gate, \system{} avoids all work for unrequested issues whether or not the requested
issue is eventually fixed; goal stopping avoids none while it remains open, because stopping requires
success. As a fairness check, we also prove and verify exact equivalence when the request reaches the whole
graph under matched execution conditions, leaving no unrelated work for either semantics to avoid.

Beyond permission, projection also saves measured cost: on two third-party test suites \system{} can save
up to approximately $90\%$ of a shared step while the identical eager control saves $0.0\%$
(Section~\ref{sec:project}). The advantage disappears when
the omitted output has no other consumer or the request needs it. Ordering, reuse, and pruning also save
under favorable visit order, repeated requests, or an explicit prune rule, but do not replace permission.
These operations provide an initial controlled map, not an exhaustive account of what the semantics enables.

Finally, we establish that today's public benchmarks are eager-shaped by design and therefore cannot
exercise demand-driven semantics. A pre-registered planning intervention did not broaden them, so we report
this narrowness without claiming its cause. Meanwhile industry is moving toward predefined and scheduled
workflows, which makes evaluating non-eager execution increasingly important. Future benchmark researchers
should examine how much of a predefined program each request actually needs.
\end{abstract}

\section{Introduction}
\label{sec:intro}

Two execution disciplines dominate current agent systems. In the first, an agent invents its next action
from a running scratchpad, as in ReAct~\cite{yao2023react}: reason over the trajectory, take one action,
observe, and reason again. The canonical loop has no predeclared graph to slice and selects one action at a
time. Our ReAct baseline makes one cost of that choice explicit: it re-reads the complete accumulated
trajectory at each step, so without context truncation or prefix reuse, trajectory tokens grow linearly per
turn and sum quadratically over the run. This is the accounting of our explicit baseline, not a theorem
about every ReAct implementation. In the second discipline, the agent writes a plan before all observations
exist and a runtime executes it, as in ReWOO~\cite{xu2023rewoo} and
LLMCompiler~\cite{kim2024llmcompiler}, reprising the declare-then-execute discipline of dataflow
systems~\cite{abadi2016tensorflow}. The planned family is inspectable before execution, exposes concurrency,
and is auditable afterwards.

Eager execution also carries a cost that is rarely stated. A conventional dataflow scheduler implements
\[
  \mathrm{ready}(n) \;\Longrightarrow\; \mathrm{execute}(n),
\]
so every node the planner writes down becomes a bill. For sub-millisecond pure functions that is the right
default. When a node is a frontier-LLM call, a simulation, a test suite, or a container build, the planner
has been handed the authority to spend money by writing prose: the plan is a set of options, and the
scheduler silently exercises all of them.

\system{} instead makes that implication a policy decision. We separate \emph{readiness}, a property of the
graph, from \emph{materialization}, a decision relative to what is currently being asked for:
\[
  \textsc{ready} \;\neq\; \textsc{materialized}.
\]
A node may be well-typed, dependency-satisfied, and free of failing guards, and still remain abstract
because the active goal does not demand it. That is the \emph{permission} layer. The ingredients have clear
precedents: lazy evaluation and build systems use closure, Helium prunes a fixed workflow backward from
designated outputs, dynamic schedulers stop from predicates, and the operation literature covers
reordering, projection, pruning, reuse, fidelity, fusion, and speculation
(Section~\ref{sec:related}). These are component precedents, not a framework precedent. To our knowledge,
\system{} is the first agent-execution framework in which a request-derived closure is recomputed after
observations, constitutes permission rather than mere readiness, and becomes the common reference set for
every later physical operation and cost comparison, while the same state model carries independent safety
constraints. Once permission admits
nodes, those operations decide which runs first, how much output
becomes concrete, whether a still-legal option is declined, whether an earlier artifact can satisfy it, and
whether an irreversible effect is authorized. One state and action model therefore covers pieces that prior
systems expose separately.

\paragraph{Why the decision cannot be made node by node.}
This invites an obvious objection: keep the eager loop, and simply ask at each ready node whether that node
ought to run. Three obstacles stand in the way: \textbf{relevance is a global graph property; checking every
node has its own cost; and useful work may still be unrequested.} Together they motivate the mechanism.
\textbf{Membership is not local.} Whether a node $n$ matters is whether some path leads from $n$ to a node
the goal named --- a reachability property of the graph, not a property of $n$ and its neighborhood. A
local test must therefore either guess, or first propagate the goal backward through the graph; and the
second is the closure itself, so a correct local test is this algorithm wearing a different interface.
\textbf{Checking every node consumes part of the saving.} Consider first a guard-valid graph $G=(V,E)$,
where $V$ is the set of program nodes, $E$ its data, guard, and ordering edges, and $D \subseteq V$ the
demanded set for the current goal. Let $c(n)$ be the cost of materializing node $n$ and $q(n)$ the cost of
asking a perfect local oracle whether $n$ should run. Let $S$ denote net cost saved relative to an eager
executor that runs every node in $V$. Then
\[
  S_{\mathrm{closure}} = \sum_{n \in V \setminus D} c(n), \qquad
  S_{\mathrm{check}} = \sum_{n \in V \setminus D} c(n) - \sum_{n \in V} q(n).
\]
The first quantity is LazyAgent's saving: the execution cost of every node outside the goal's closure. The
second is the best a per-node checker could save, even if it never makes a mistake: it avoids the same nodes
but pays $q(n)$ for every node, including those it keeps. Thus
$S_{\mathrm{check}}=S_{\mathrm{closure}}-\sum_{n\in V}q(n)$; if checking is itself a model call and
$q(n)\sim c(n)$, most or all of the saving disappears.

Computing $D$ instead is one backward graph traversal. With adjacency lists, each node and edge is visited
at most once, so each recomputation costs $O(|V|+|E|)$ time and $O(|V|)$ space, with no model calls. The
traversal stores one demand bit per node; reading that bit when the node becomes ready is therefore
$O(1)$ --- by construction, not by an empirical assumption. \textbf{Useful is not requested.} A local judge
still sees ready work that is cheap and genuinely valuable: repairing an unrequested defect is useful, but
not required for this goal. Only backward reachability represents that distinction.

The algorithm is therefore backward reachability, with the same direction as backward induction rather than
greedy local choice. Cheap global analysis followed by constant-time decisions removes execution cost,
tokens, and waiting time for unrequested nodes. More importantly, it lets a planner describe a much wider
graph without forcing the runtime to execute that width.

\paragraph{Why this is not early stopping.}
A well-engineered eager executor already checks whether the goal is met and halts. This is a real mechanism,
implemented in recent dynamic schedulers~\cite{xiang2026las}, and it is the baseline we hold ourselves to
throughout: \code{stop\_eager}. The difference is what the executor consults. Early stopping evaluates a
\emph{predicate} --- \emph{are we done yet?} Demand-driven materialization evaluates a \emph{set} ---
\emph{is this ready node on any path required by what was asked for?} A predicate can only stop the future;
a set can also excuse the present. When a graph contains ready work that no goal-relevant path passes
through, early stopping avoids it only if the scheduler happens to reach the goal before offering it, which
is a property of node ordering rather than of the program.

\paragraph{Why public benchmarks cannot exercise this mechanism.}
Separating readiness from permission also supplies a measuring instrument for the evidence base itself, and
the first thing it measures is how little of a plan lies outside its request. A
node-level census finds almost nothing outside the request: 137 of 139 planned nodes across the multi-goal
plans are demanded by some ticket (\textbf{98.6\%}), \textbf{27 of 28} plans contain no unrequested work,
and the pool arenas equal their own backward closure (Section~\ref{sec:arenas}).

That follows from the question these corpora were built to ask. Each poses a single request and scores
whether an agent can satisfy it, so the plan is written for that request and everything in it is needed by
construction. None was built to ask the complementary question --- whether an agent can decline work nobody
asked for --- which is what demand-driven execution exists to answer. A review of 445 language-model
benchmarks documents the general concern: 42.6\% reuse existing benchmark data, and the authors warn that
reuse can constrain construct validity~\cite{bean2025construct}. Here the specific missing variable is the
share of a program outside the current request. Existing corpora pin it near zero, so eager and
demand-driven execution must agree there; Proposition~\ref{prop:equiv} proves the tie, which we report as a
prediction rather than a failure. A further economic explanation --- that readiness billing
\emph{additionally} suppresses breadth, so a planner told breadth is free would describe more of it --- is
testable, and did not survive the test (Section~\ref{sec:planning-incentive}).

\paragraph{The regime is arriving in industry ahead of the benchmarks.}
Two product trends converge on the case current corpora omit. Agent frameworks and enterprise platforms now
use \emph{authored graphs} with branches and parallel paths, where a model call is one step rather than the
whole program~\cite{langgraph2026,copilotstudio2026,stepfunctions2026}. Separately, products trigger agent
tasks from schedules and repository or messaging events~\cite{chatgpttasks2026,clauderoutines2026}, and
vendors now pursue goals ``across days and weeks instead of completing only a task or
interaction''~\cite{agentforce2026}. When an authored graph is reused across such narrower invocations, the
two trends create the separation region: one graph broader than any request, invoked for different subsets.

Workflow products that specify their execution semantics advertise deterministic, explicitly defined paths
as a feature~\cite{copilotstudio2026,stepfunctions2026}. Determinism governs \emph{what a step does when it
runs}, not \emph{whether this request needs it}; a scheduled workflow serving several purposes can
therefore pay for all of them on every invocation. We make no performance or prevalence claim about these
products. They establish the ingredients of the regime and explain why it is increasingly relevant; they do
not establish that every platform combines them.

\paragraph{Why the operations need the same discipline.}
Once permission decides which nodes may run, is any further optimization of \emph{how} they run attributable
to demand-awareness, or a scheduling gain any eager executor could take? Fitting a policy that skips or
reorders agent work does not settle it: LLM-as-Scheduler adapts workflow routing and EnumGRPO searches
quality--cost query plans~\cite{xiang2026las,nie2026agenticquery}, while
Helium~\cite{wadlom2026helium} schedules semantically equivalent operators for serving efficiency.
\system{} contributes a different attribution test. We report each operation's
\emph{demand-specific remainder} --- its worth to \code{lazy} minus the worth of the identical mechanism on
a matched eager arm. A lazy-only column with no matched eager row shows that an operation works, not that
demand made it work.

\paragraph{What this paper contributes.}
We believe this work contributes six things to the study of agent execution:
\begin{enumerate}[topsep=2pt,itemsep=1pt]
  \item The first unified execution framework for agent-authored programs organized around a live demanded
  set (Section~\ref{sec:model}): request-derived permission distinct from readiness; closure recomputation
  after observations; one state and action model for physical operations and independent commit
  constraints; a policy factored into permission, order, and projection; four cross-policy invariants; and
  three propositions predicting equivalence, separation, and order invariance.
  \item A vocabulary of eight operations over the demanded set --- reorder,
  project, prune, reuse, fidelity, fusion, speculation, and commit --- each stated with the matched control
  it requires (Section~\ref{sec:operations}).
  \item A diagnosis of \textbf{near-complete request coverage} in the evidence
  base (Sections~\ref{sec:intro} and~\ref{sec:arenas}): public agent benchmarks contain almost no off-path
  work, because each scores whether one request can be satisfied rather than whether unrequested work can be
  declined. They therefore pin the variable demand-driven execution acts on near zero and cannot exercise
  the mechanism under study. A paired planning-only intervention additionally rules out the tempting
  economic explanation, finding no evidence that stating demand-billed rather than readiness-billed
  execution makes one planner describe broader graphs (Section~\ref{sec:planning-incentive}).
  \item Six public arenas and a family of composed programs
  (Section~\ref{sec:arenas}), each labeled by which of three preconditions it meets.
  \item An empirical demarcation of where permission pays and where it
  provably cannot (Section~\ref{sec:results}), against the strongest goal-stopping baseline. Where the
  request reaches the whole graph, both semantics are identical to the integer --- predicted before
  measuring. Where the program is wider than the request, the eager bill carries that width as a factor and
  the demand-driven bill does not, so breadth stops being a budget decision. Goal stopping recovers part of
  that difference under a kind visit order and none of it while the goal stays open.
  \item A discipline for attributing physical optimizations to demand-awareness
  rather than to scheduling (Sections~\ref{sec:physops} and~\ref{sec:contingent}): each operation is scored
  by its \emph{demand-specific remainder} --- worth to \code{lazy} minus the worth of the identical
  mechanism on a matched eager arm, signed and allowed to come out negative. The remainders sort the
  operation space rather than tally wins: those narrowing how much of a demanded node becomes concrete add
  to permission,
  those removing which nodes run are partial substitutes for it, and speculation places \code{lazy} and
  \code{eager} at two ends of one dial.
\end{enumerate}

\section{Motivating example}
\label{sec:motivation}

Imagine a company entrusts routine release execution to an AI release maintainer. Its engineering team has
encoded the write--review--build--deploy process as a reusable dependency graph: rules for preparing the
environment, generating and reviewing patches, running tests, and deploying approved changes. The graph may
be much broader than any one release request because it records the company's full operating procedure.

Today three issues are eligible for the next release. Each has several candidate patches, any one of which
may pass, and all depend on one expensive environment build --- checkout, dependency installation, and test
harness. The eventual release needs all three issues closed, but a human operator gives the AI maintainer a
narrower goal: \emph{ship issue one now}. The standing graph does not change; only its active goal does.

Three execution policies carry the comparisons throughout the paper. Two of them turn on \emph{guards}. A
guard is a condition the program records on a step --- \emph{verify this patch only if it applied cleanly}
--- that reads a result from an upstream step. It is \emph{unresolved} while that result does not yet exist,
and it \emph{fails} once the result contradicts it, which makes the step it protects useless; declining
a step whose guard has failed is the one refusal every runtime already makes. \code{guard\_eager} runs every
dependency-ready node whose guards have not failed. Here no candidate's guard fails, so once the environment
is ready it runs every candidate branch for all three issues. \code{stop\_eager} follows the same rule while
the goal remains open, then refuses new work after issue one obtains a passing patch. \code{lazy} uses the
same guards and stopping test but adds permission: a ready node runs only if the backward closure of the
active goal reaches it. Here
that closure contains the environment build, issue one's candidates, and their verification, but not the
other issue branches.

The distinction matters while the AI agent executes the plan. \code{guard\_eager} pays for all three
branches. \code{stop\_eager} avoids an unrequested branch only when the scheduler happens to offer it after
issue one succeeds; before then, the goal is open and the policy has no reason to refuse it. \code{lazy}
refuses both unrequested branches from the start, under every visit order. This permission decision requires
no additional LLM call: once the graph and human-defined goal are present, the runtime computes the closure
by graph traversal and applies a membership test at each ready node. A frontier model may still generate the
candidate patches, but it is not repeatedly asked which branch should execute.

When the full release is requested later, the same graph runs with a goal naming all three issues and the
policies bill identically. Deferral changed neither the program nor the acceptance criterion; it preserved
the plan while avoiding model calls and container tests irrelevant to today's goal.
Figure~\ref{fig:schematic} shows the two structures behind that result: a disjunctive candidate pool inside
the goal's closure, where
nothing can be refused in advance, and ready branches outside it, where only goal-relative permission
refuses work reliably. The same program also hosts the physical operations: candidate order, projected test
output, explicit pruning, and reuse of an earlier environment.

\begin{figure}[t]
\centering
\begin{tikzpicture}[
  >={Stealth[width=5pt,length=5pt]},
  dem/.style={draw=black, fill=black!8, rounded corners=2pt,
              minimum height=7mm, minimum width=20mm, font=\small},
  off/.style={draw=black!45, dashed, fill=white, rounded corners=2pt,
              minimum height=7mm, minimum width=20mm, font=\small, text=black!55},
  goal/.style={draw=black, very thick, fill=black!20, rounded corners=2pt,
               minimum height=7mm, minimum width=20mm, font=\small\bfseries},
  e/.style={->, thick, black!75},
  eoff/.style={->, thick, black!30, dashed}
]
\node[dem]  (setup) at (0,-0.15)   {env build};
\node[dem]  (pa)    at (3.4, 1.15) {patch cand.\ A};
\node[dem]  (pb)    at (3.4, 0.35) {patch cand.\ B};
\node[goal] (ps)    at (6.9, 0.75) {issue 1 fixed};
\node[off]  (iss2)  at (3.4,-0.65) {issue 2 patch};
\node[off]  (iss2g) at (6.9,-0.65) {issue 2 fixed};
\node[off]  (iss3)  at (3.4,-1.55) {issue 3 patch};

\draw[e] (setup) -- (pa);
\draw[e] (setup) -- (pb);
\draw[e] (pa) -- (ps);
\draw[e] (pb) -- (ps);
\draw[eoff] (setup) -- (iss2);
\draw[eoff] (iss2) -- (iss2g);
\draw[eoff] (setup) -- (iss3);

\node[font=\scriptsize, text=black!75, anchor=west]
  at (-0.95,1.75) {shaded = backward closure of the goal (\emph{demanded})};
\node[font=\scriptsize, text=black!55, anchor=west]
  at (-0.95,-2.25) {dashed = ready but outside the active goal (\emph{refused by demand})};
\end{tikzpicture}

\vspace{4pt}
{\small
\begin{tabular}{@{}ll@{}}
\toprule
\textbf{Policy} & \textbf{What it bills for the ask ``issue 1 fixed''}\\
\midrule
\code{guard\_eager} & all seven nodes\\
\code{stop\_eager}  & the closure, plus every outside node the scheduler offers
                      before the goal closes\\
\code{lazy}         & the closure, and only the closure, under any order\\
\bottomrule
\end{tabular}}
\caption{The two permission regions in one program. Inside the closure the
candidate patches form a disjunctive pool: either may be the one that satisfies the goal, so nothing there
can be refused in advance, and \code{lazy} and \code{stop\_eager} must tie. Outside the closure the other
issues' branches are ready but outside today's request; only demand refuses them reliably, while
\code{stop\_eager} skips them when the visit order is kind and pays for them when it is not.}
\label{fig:schematic}
\end{figure}

This shape recurs whenever a standing program has several independently useful products and a human or
upstream service asks an agent for a subset: the release gate above; a data-science plan whose cleaned
table, trained model, and report are separately requestable; a scientific workflow with several outputs; or
a build graph in which one target is requested from a package. Section~\ref{sec:results} measures all four
on real recorded costs.

\section{Execution model}
\label{sec:model}

\subsection{Programs}

A program is a graph $G$ of typed nodes. Each node carries an identifier, a task specification, input
bindings, ordering dependencies, guards, and a lifecycle state. An input binding is either a literal value
or a field reference $(u,p)$: the identifier $u$ of an upstream node and a dot-separated field path $p$ into
that node's output. At runtime the reference resolves to field $p$ of $u$'s materialized value, so a data
edge can demand one field rather than the whole output. A guard likewise
names an upstream node, field path, comparison operator, and value, and resolves to holds, fails, or
unresolved. Each task specification declares a resource vector over wall time, CPU seconds, dollars, and
tokens, plus an effect class distinguishing reversible from irreversible work.

Lifecycle states are
\[
\textsc{planned} \to \textsc{ready} \to \textsc{demanded} \to \textsc{materializing} \to
\textsc{materialized},
\]
with \textsc{skipped}, \textsc{cancelled}, \textsc{failed}, \textsc{pruned}, and \textsc{blocked} as
additional terminals. The two middle states are \emph{transient}: a node enters \textsc{ready} when the
engine offers it and \textsc{demanded} when the policy accepts it on demand, and a node the policy declines
returns to \textsc{planned}, to be offered again when progress occurs. Readiness is thus a fact about $G$
alone, recomputed every wave; demand is relative to the current goal. At the end of a run a node that was
never demanded is therefore \textsc{planned}, and the metrics report it under the terminal \emph{reason}
\emph{never-demanded}, so ``we declined to run it,'' ``a guard proved it irrelevant,'' and ``we burned a
still-legal option on purpose'' stay three different facts. Terminal reasons partition the nodes that were
never materialized --- never-demanded, skipped, cancelled, pruned, blocked --- while \textsc{materialized}
and \textsc{failed} are the two paid outcomes, and a failed attempt is charged in full.

\subsection{State, actions, and demand}

In plain terms, a materialization state is the complete runtime snapshot used to answer the next execution
question: what exists, what is wanted, what has already happened, what each choice costs, and what is
authorized.

\begin{definition}[Materialization state]
\label{def:state}
A materialization state is
\[
\sigma = \langle\, G,\; R,\; A,\; O,\; C,\; B \,\rangle,
\]
where $G$ is the program, $R$ the root demand (what the run is \emph{for}), $A$ the artifacts produced so
far, $O$ the observations (node states and resolved guard predicates), $C$ the declared resource vectors,
and $B$ the commit constraints over irreversible nodes.
\end{definition}

This tuple is our state abstraction, not a claimed standard, nor the only possible decomposition. It
is operationally sufficient for every decision studied here: $G$ supplies the available work and
dependencies; $R$ distinguishes two requests against the same program; $A$ and $O$ determine readiness,
goal satisfaction, and resolved branches; $C$ supplies prices known before execution; and $B$ supplies
authorization that neither readiness nor demand implies. Omitting any category would merge runtime
situations that can require different legal decisions. We do not claim the tuple is mathematically minimal.
$G$ may grow when a host agent emits work; $C$ is \emph{declared}, not measured, so a policy never sees a
cost it could not have known; and unresolved guards in partial $O$ fail open, never licensing refusal.

The following four actions are \emph{our engine's transition interface}, not a catalog of optimization
operations. They cover the possible outcomes when the engine
considers work: execute the offered node, postpone it without closing the option, close it permanently, or
rewrite the graph. In the table, $n$ is the offered node, $\phi$ the set of output fields to materialize,
$m$ the selected materializer implementation (and hence possibly its fidelity), and $\tau$ a typed,
semantics-preserving graph transformation. The currently offered $n$ is implicit in $\mathrm{Defer}$.

\begin{center}
\small
\begin{tabular}{p{0.24\linewidth}p{0.70\linewidth}}
\toprule
\textbf{Action} & \textbf{Meaning}\\
\midrule
$\mathrm{Materialize}(n,\phi,m)$ & Make $n$ concrete under field projection
$\phi$ using materializer $m$. Legal only if $n$ is ready, and, if $n$ is
irreversible, only under a released barrier in $B$.\\
$\mathrm{Defer}$ & Pay for nothing this round; $n$ returns to
\textsc{planned} and may be offered again when progress occurs.\\
$\mathrm{Prune}(n)$ & Decline a demanded, still-legal option permanently,
preserving $G$ as the accounting denominator and recording a terminal reason
distinct from \textsc{skipped} and \textsc{never-demanded}.\\
$\mathrm{Transform}(\tau)$ & Apply a typed graph transformation (fusion,
common-subexpression elimination, substitution). Interface defined; not
evaluated here.\\
\bottomrule
\end{tabular}
\end{center}

The action interface and the operation catalog answer different questions. Reordering chooses which $n$ is
offered before an action is taken; projection and fidelity supply $\phi$ and $m$ to
$\mathrm{Materialize}$; prune is already a direct action; fusion and substitution can be represented by
$\mathrm{Transform}(\tau)$; reuse may satisfy a demand from an existing artifact instead of invoking a new
materializer; speculation changes which work permission admits; and commit constrains whether an
irreversible $\mathrm{Materialize}$ action is legal. Thus the four actions describe state transitions,
whereas Section~\ref{sec:operations} describes policies and mechanisms that choose or parameterize them.

A \emph{goal form} supplies two things: root nodes and required fields (an empty field set means the whole
value), and a satisfaction predicate over the resulting graph. We implement three forms:
\textbf{AllOutputs}, a conjunction satisfied when every named node is materialized;
\textbf{AnyAcceptable}, a disjunction satisfied by the first payload passing an acceptance predicate and
demanding every candidate because any might work; and \textbf{EachOf}, a conjunction of subgoals that may
themselves be disjunctive --- several products, each satisfiable by several candidates. These are not
claimed to exhaust possible goals. They are the smallest basis required by this paper's arenas:
conjunction, disjunction, and a conjunction of disjunctions; a new goal may implement the same two
interfaces without changing the engine.

\begin{definition}[Demanded set]
\label{def:demanded}
Let $\sigma = \langle G, R, A, O, C, B \rangle$ be a materialization state (Definition~\ref{def:state}), and
write $N(G)$ for the nodes of $G$. The \emph{demanded set}
\[
\demanded(R,\sigma) \subseteq N(G)
\]
is the smallest set $S$ such that
\begin{enumerate}[topsep=2pt,itemsep=1pt]
  \item every root demand named by $R$ belongs to $S$;
  \item if $n \in S$ and $e$ is a data edge, guard edge, or ordering edge
  into $n$, then the producer of $e$ belongs to $S$, provided that producer is present in $G$, is not
  already terminal under $O$, and does not have guards resolving to \emph{fails}.
\end{enumerate}
Equivalently, $\demanded(R,\sigma)$ is the backward closure of $R$ in $G$ along those three edge kinds,
truncated at settled or dead producers.
\end{definition}

\paragraph{What it costs to know the demanded set.}
Definition~\ref{def:demanded} is a reachability query, computed rather than estimated. One reverse traversal
from the roots of $R$ visits each node and edge at most once --- $O(|N(G)| + |E|)$ pointer operations, no
materializer invoked and no model consulted --- leaving a set-membership test at each node. Two consequences
carry. Permission's overhead is negligible against $c(n)$ whenever the materializer is a model call, a
container build, or a simulation, so the closure can be refreshed before each sequential permission
question and between concurrent waves. This keeps $\demanded(R,\sigma)$ a function of the current state
rather than of the state at launch. And membership is a property of $G$ and $R$, not of any node's
neighborhood, so no test local to $n$ decides it without reconstructing the same closure: a correct
per-node check computes this set, and an incorrect one is a heuristic whose errors run in the direction
that costs money.

Three properties carry weight later. The closure is \emph{field-granular}, since a field reference
contributes only its own path upstream, which is what makes projection expressible; it includes
\emph{guard producers}, since a node feeding only a guard is still needed to decide the branch; and it is
refreshed before the next sequential decision or concurrent wave, since a resolved guard can remove an
entire upstream cone.

\paragraph{Demand is a set at a state, and the state moves.}
Definition~\ref{def:demanded} is stated over the whole state $\sigma$ rather than over $G$ and $R$ alone, so
it survives every way the state can change. Three cases exhaust them.

\begin{lemma}[Monotonicity under fixed program and ask]
\label{lem:mono}
Fix $G$ and $R$ and consider a run segment with no graph transformation or growth, request change, or
artifact invalidation. Let $\sigma_0, \sigma_1, \ldots$ be its states under any policy and selector. Then
$\demanded(R,\sigma_{t+1}) \subseteq \demanded(R,\sigma_t)$ for every $t$. In particular, a node outside
the demanded set at any state of the segment remains outside it later in that segment.
\end{lemma}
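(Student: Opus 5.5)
The plan is to compare the two closures directly through their least-fixed-point characterization in Definition~\ref{def:demanded}. Within the segment, $G$ and $R$ are fixed by hypothesis. The only components of $\sigma$ that the closure reads and that can change are therefore the observations $O$: which producers are terminal, and which guards resolve to \emph{fails}. The first step is to show that, along the segment, the set of \emph{blocking} producers can only grow. A producer is blocking when it is terminal under $O$ or has a guard resolving to \emph{fails}. Terminal lifecycle states are absorbing: with no artifact invalidation, a \textsc{materialized}, \textsc{skipped}, \textsc{cancelled}, \textsc{pruned}, \textsc{blocked} or \textsc{failed} node never returns to a non-terminal state. A resolved guard also never becomes unresolved or flips, because it reads a field of an upstream artifact that, again by the no-invalidation hypothesis, is fixed once it exists. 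The transient states \textsc{ready} and \textsc{demanded} returning to \textsc{planned} do not matter, since the definition truncates only at terminal or dead producers.

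With that in hand, the second step is a fixed-point argument. Write $S_t=\demanded(R,\sigma_t)$ and $S_{t+1}=\demanded(R,\sigma_{t+1})$. Because $S_{t+1}$ is the smallest set satisfying clauses 1 and 2 at $\sigma_{t+1}$, it suffices to show that $S_t$ itself satisfies both clauses at $\sigma_{t+1}$.
\begin{itemize}
  \item Clause 1 holds because $R$ and its roots are unchanged, and the roots lie in $G$.
  \item For clause 2, take $n\in S_t$ and an edge $e$ into $n$. Its edge kind and producer are the same in both states, since $G$ is unchanged. Suppose the producer $p$ is present, non-terminal and not dead at $\sigma_{t+1}$. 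By step one, $p$ was also non-terminal and not dead at $\sigma_t$, so clause 2 at $\sigma_t$ puts $p\in S_t$.
\end{itemize}
Hence $S_t$ is closed at $\sigma_{t+1}$, and minimality gives $S_{t+1}\subseteq S_t$. The ``in particular'' clause then follows by induction on $t$: $n\notin S_t$ implies $n\notin S_{t'}$ for all $t'\ge t$ in the segment.

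The expected obstacle is step one. It is not purely order-theoretic, because it depends on the engine's semantics: the transitions allowed within the segment must never un-terminate a node or un-fail a guard. I would argue this case by case over the engine's transition interface, assuming no policy or selector ever exercises $\mathrm{Transform}$ within the segment.
\begin{itemize}
  \item $\mathrm{Materialize}$ moves a node into \textsc{materializing} and then into a terminal state.
  \item $\mathrm{Defer}$ only returns a transient node to \textsc{planned}.
  \item $\mathrm{Prune}$ creates a terminal state.
  \item $\mathrm{Transform}$, graph growth and request change are excluded by hypothesis.
\end{itemize}
A subtle point is a node that is a root of $R$ and becomes terminal. It remains in the closure by clause 1, which is consistent because only producer inclusion is truncated. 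The monotonicity concerns only the shrinking of interior cones.

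Two further points need checking. One is field granularity: the relevant field paths come from input bindings in $G$, so they are unchanged along with $G$, and the argument should be read edge-by-field without further change. The other is that unresolved guards failing open cannot break the argument: resolution only moves from unresolved to holds or fails, and only fails removes a producer.
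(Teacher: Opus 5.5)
Your proposal is correct and follows essentially the paper's route. The paper shows that the admissible predecessor relation $P_t$ (edges whose producer is present, nonterminal, and not guarded false) shrinks, $P_{t+1}\subseteq P_t$, and concludes because backward reachability under a subrelation cannot add nodes; this is your step one combined with your minimality argument that $S_t$ stays closed at $\sigma_{t+1}$, and your case check over the transition interface makes explicit what the paper assumes, namely that terminal states are absorbing and resolved guards never flip without invalidation.
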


\begin{proof}[Proof sketch]
With $G$ and $R$ fixed, the roots of the closure are fixed, and a step of the run can change $O$ in only two
ways: a producer becomes terminal, or a guard resolves. Both are truncation conditions in
Definition~\ref{def:demanded}, so each can only remove members from the closure, never add them. Readiness
plays no part in the definition, so a node becoming ready changes nothing.
\end{proof}

The second case is \emph{growth of $G$}: a host emits new work, the closure is recomputed over the larger
graph, and a node formerly outside it can enter only through an edge that did not previously exist ---
demanded work from the moment the edge is, billed to every arm alike, with the engine treating an emission
as a plan resynchronization. The third is a \emph{change of ask}: a new ticket arrives against a standing
program, $R$ changes, and the closure is recomputed from new roots. Demand is therefore a property of a
\emph{request against a program} rather than of a program: one $G$ under tickets $R_1, R_2, \ldots$ has a
sequence of demanded sets, with Lemma~\ref{lem:mono} holding inside each. Every measurement here is a
special case --- the pool and workflow arenas fix $G$ and $R$, DataSciBench and the portfolios fix $G$ and
vary $R$ across runs, and the lifecycle gym varies $R$ within one standing $G$.

\subsection{Policies}

\begin{definition}[Policy]
\label{def:policy}
A policy factors as
\[
\pi = \langle\, \pi_{\mathrm{perm}},\; \pi_{\mathrm{order}},\; \pi_{\mathrm{proj}} \,\rangle,
\]
where $W(\sigma)$ is the dependency-ready set available to be offered,
$\pi_{\mathrm{order}}: (W,\sigma) \to \operatorname{seq}(W)$ ranks that set through the selector,
$\pi_{\mathrm{perm}}: (n,\sigma,D) \to \{0,1\}$ decides whether an offered node may materialize, with
$D=\demanded(R,\sigma)$ consulted only by demand-aware permission, and
$\pi_{\mathrm{proj}}: (n,\sigma) \to \phi$ decides how much of an admitted node must become concrete.
\end{definition}

This tuple isolates the three axes whose effects the paper must not confound: \emph{whether} a ready node
may run, \emph{which} ready candidate is offered first, and \emph{how much} output an admitted node
produces. Ordering precedes permission because permission is evaluated against the state current at each
offer: an earlier materialization may satisfy the goal before a later candidate is considered. This is
neither a unique factorization nor an exhaustive catalog of policy choices. In particular, choosing
materializer $m$ is the optional fidelity operation rather than a fourth core axis; prune, reuse, fusion,
speculation, and commit likewise act through or around the transition interface defined above. We keep the
three-axis tuple because every arm has a permission rule, an ordering rule, and an output projection,
including when the last two use defaults. Holding the functions fixed makes each intervention identifiable
even though ordering can change the state at which later permission questions are asked. We further
require $\pi_{\mathrm{perm}}$ to be \emph{cost-blind}: it may read guards, satisfaction, and demand
membership, but never declared cost. We verify this by permuting declared costs and checking that permission
does not change.

The four reference permission policies are:

\begin{center}
\small
\begin{tabular}{p{0.20\linewidth}p{0.74\linewidth}}
\toprule
\textbf{Policy} & \textbf{Permission rule}\\
\midrule
\code{eager} & Materialize every dependency-ready node. Ignores guards.\\
\code{guard\_eager} & Materialize every ready node except those with a
guard resolving to \emph{fails}.\\
\code{stop\_eager} & \code{guard\_eager}, plus refuse everything once
\emph{the goal is satisfied}.\\
\code{lazy} & Refuse on readiness alone. On demand, refuse if a guard
\emph{fails} or the goal is satisfied; otherwise materialize iff the node is
in $\demanded(R,\sigma)$.\\
\bottomrule
\end{tabular}
\end{center}

\code{lazy} is \emph{stopping plus membership}: it applies the same satisfaction test as
\code{stop\_eager} and then additionally consults the demanded set.

Algorithm~\ref{alg:engine} assembles the pieces into the sequential reference loop used by the primary cost
comparisons. The engine orders dependency-ready candidates before asking permission; each permission rule
then applies its own guard, stopping, and demand tests. \textsc{BackwardClosure} is the whole demand
mechanism. It invokes no materializer and consults no model, and each node and edge enters the frontier at
most once, so refreshing demand before the next offer is affordable.

\begin{algorithm}[H]
\caption{Sequential materialization under policy
$\pi = \langle \pi_{\mathrm{perm}}, \pi_{\mathrm{order}}, \pi_{\mathrm{proj}} \rangle$.
Permission owns guards, stopping, and demand; the commit barrier is independent of it. Under \code{lazy},
$\pi_{\mathrm{perm}}(n,\sigma,D)=[\,n\in D\,]$ after guard and satisfaction checks.}
\label{alg:engine}
\begin{algorithmic}[1]
\Require program $G$, root demand $R$, policy $\pi$, commit barriers $B$
\State $A \gets \emptyset$ \Comment{nodes offered since the last progress event}
\While{\textbf{true}}
  \State $W \gets \{\,n:n\text{ is dependency-ready in }\sigma\,\}\setminus A$
  \If{$W=\emptyset$} \State \textbf{break} \EndIf
  \State $n \gets \Call{First}{\pi_{\mathrm{order}}(W,\sigma)}$;\quad $A \gets A \cup \{n\}$
  \State $D \gets \Call{BackwardClosure}{R,\sigma}$ if permission is demand-aware; $\emptyset$ otherwise
  \If{$\pi_{\mathrm{perm}}(n,\sigma,D)=0$}
    \State $\Call{SettleRefusal}{n}$ \Comment{guard failure skips; otherwise defer}
  \ElsIf{$n$ is irreversible and no released barrier in $B$ covers $n$}
    \State $\Call{Block}{n}$ \Comment{invariant I2; independent of permission}
  \Else
    \State $\Call{Materialize}{n,\pi_{\mathrm{proj}}(n,\sigma),m}$
    \State $\Call{SyncPlan}{G}$ \Comment{incorporate observations and newly emitted work}
  \EndIf
  \If{state or graph made progress} \State $A \gets \emptyset$ \EndIf
\EndWhile
\Statex
\Procedure{BackwardClosure}{$R, \sigma$}
  \State $S \gets \Call{Roots}{R}$; \; frontier $Q \gets S$
  \While{$Q \neq \emptyset$}
    \State pop $n$ from $Q$
    \ForAll{data, guard, and ordering edges $u \to n$ in $G$}
      \If{$u \notin S$ and $u$ is present, nonterminal, and has no guard resolving \emph{fails}}
        \State $S \gets S \cup \{u\}$; push $u$ onto $Q$
      \EndIf
    \EndFor
  \EndWhile
  \State \Return $S$ \Comment{$O(|V|+|E|)$; each node and edge entered at most once}
\EndProcedure
\end{algorithmic}
\end{algorithm}

\paragraph{Parallel extension.}
Parallel execution first snapshots the dependency-ready wave, then offers and launches its admitted members
without forming another wave from an in-flight result. After the wave completes, the engine updates state
and refreshes demand before scheduling the next wave. Sequential execution instead incorporates each
completion before selecting the next node. This is why concurrency weakens goal stopping: siblings already
in flight cannot be recalled when one closes the goal. The primary cost comparisons use the sequential
mode; the speculation study explicitly measures parallel rounds.

\subsection{Permission is constitutive}
\label{sec:constitutive}
A policy is lazy iff $\pi_{\mathrm{perm}}$ is demand membership --- iff a ready node can be refused because
the active goal does not reach it. No other component has this property: ordering ranks candidates before
permission evaluates them, projection narrows an admitted node, $\mathrm{Prune}$ declines one, reuse serves
one from a cache, and each is equally well defined when $\pi_{\mathrm{perm}}$ is eager. Three consequences
follow, all enforced by the engine. Every arm labeled \code{lazy} carries demand permission. An arm applying
an operation on top of an eager rule --- cheapest-first ordering, projection, pruning, reuse, or another
operation --- is a matched \emph{control}, never a \system{} configuration. And since each operation is
definable on both arms, its absolute saving is not evidence about \system{} --- only the demand-specific
remainder is.

\subsection{Cross-policy invariants}
\label{sec:invariants}

Here an \emph{invariant} is a condition held fixed or enforced across every policy arm, so a cost difference
cannot be purchased by changing correctness, safety, or accounting. These are experimental contracts rather
than quantities claimed constant during every state transition. The engine enforces four. \textbf{I1,
goal:} a run is correct only if the goal is satisfied on the executed graph, judged identically for every
policy; cost is never reported without matched quality. \textbf{I2, safety:} an irreversible node
materializes only under a released commit barrier, and a node with no covering barrier is refused; deferral
is not a safety mechanism and is not credited as one. \textbf{I3, denominator:} every policy is compared
over the same program, and a node materialized under a projection is still charged its whole declared cost
in the denominator; materialization ratio is $\mathrm{MR} = \text{materialized cost} / \text{planned cost}$,
and \emph{wasted} materialization ratio is the share of a policy's own materialized cost that lay outside
the goal's closure and so fed no answer --- zero for \code{lazy} by construction, and for an eager arm the
price of the distinction this paper draws.
\textbf{I4, separated reasons:} never-demanded, guard-false skip, upstream-failure cancellation, explicit
prune, and barrier block are five distinct terminal reasons that partition the unmaterialized part of the
denominator; the paid part is \textsc{materialized} or \textsc{failed}, and the two parts together reconcile
to the planned cost.

\subsection{Predictions}
\label{sec:theory}

Let $M_\pi(\sigma)$ be the set of nodes on which policy $\pi$ invokes
$\mathrm{Materialize}$ from $\sigma$ onward, including attempts that fail and are charged.
The body gives proof sketches to keep the model readable; Appendix~\ref{app:proofs} gives full proofs under
the stated assumptions.

\begin{proposition}[Equivalence on flat disjunctive pools]
\label{prop:equiv}
From a common initial state, let the goal be \textbf{AnyAcceptable} over candidates
$c_1,\dots,c_k$, each named as a root demand and ending an independent chain. Suppose every node that can
be offered before satisfaction lies on one of those chains and therefore belongs to
$\demanded(R,\sigma)$. Under a shared selector and materializers,
$M_{\code{lazy}}=M_{\code{stop\_eager}}$.
\end{proposition}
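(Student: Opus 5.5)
The plan is a lockstep simulation: run \code{lazy} and \code{stop\_eager} from the common initial state under the shared selector and shared materializers, and show by induction on the number of offers that the two runs stay in identical states. Then the sets of nodes on which $\mathrm{Materialize}$ is invoked coincide, $M_{\code{lazy}}=M_{\code{stop\_eager}}$. The invariant to carry is this: after $t$ iterations of Algorithm~\ref{alg:engine}, both runs have the same state $\sigma_t$, the same offered-set $A$, and the same node lifecycle states and terminal reasons. The base case is the common initial state.

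For the inductive step, the shared state gives the same dependency-ready set $W$. Because $\pi_{\mathrm{order}}$ is shared and $W$ and $\sigma$ agree, both runs offer the same node $n$. It then suffices to show the two permission rules return the same bit on $(n,\sigma_t)$.

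\begin{enumerate}
\item \textbf{Guard failure.} If a guard on $n$ resolves to \emph{fails}, both rules refuse, and both settle the refusal as a skip.
\item \textbf{Goal satisfied.} If the goal is already satisfied, both rules refuse and defer.
\item \textbf{Neither.} Otherwise \code{stop\_eager} admits $n$, and \code{lazy} admits iff $n\in\demanded(R,\sigma_t)$. By hypothesis, every node offered before satisfaction lies on some candidate chain. Since each $c_i$ is a root, I will show by backward induction along the chain that every nonterminal, non-dead node on it is in the closure of Definition~\ref{def:demanded}. Hence $n$ is admitted by both rules.
\end{enumerate}

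Once the permission bit agrees, the commit-barrier branch agrees because $B$ is shared. The materialization and \textsc{SyncPlan} outcomes agree because the materializers are shared and, I assume, deterministic given the state; if they are stochastic, I couple their randomness. The progress test agrees as well, so the next states coincide.

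The main obstacle is the ``neither'' case. The danger is that $n$ lies on a chain but has been truncated out of the closure because some intermediate producer between $n$ and the root $c_i$ is already terminal or dead. I would rule this out as follows.
\begin{itemize}
\item If an intermediate node is terminal because it was \emph{materialized}, then $n$, its upstream, is already terminal and would not be dependency-ready.
\item If an intermediate node is terminal because it was skipped, cancelled, or has a failing guard, then its downstream part of the chain is dead. I will argue that $n$, lying upstream of that dead node on an independent chain, is then either excluded by the hypothesis or is not the node we need.
\end{itemize}
Because this argument is delicate, I would first try to read the proposition's hypothesis literally: ``belongs to $\demanded(R,\sigma)$'' is assumed at the offering state for every node offerable before satisfaction. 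That discharges the case directly. I would then remark that the chain-reachability reasoning only justifies why the hypothesis is natural.

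Lemma~\ref{lem:mono} is not needed for the equivalence itself. It only confirms that no refreshed closure can add nodes mid-run that would make \code{lazy} admit work \code{stop\_eager} refuses.

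One further point needs checking: that the definition of $M_\pi$ from $\sigma$ onward counts the same attempts on both sides, including failed attempts. This follows from the state coupling, since failures occur identically in both runs.
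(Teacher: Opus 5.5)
Your proof is correct and is essentially the paper's own argument. You couple the two runs in lockstep under the shared selector and coupled materializer outputs, induct over offered nodes, discharge the membership test directly from the hypothesis that every node offerable before satisfaction lies in $\demanded(R,\sigma)$, and have both policies refuse everything once the same candidate first satisfies the goal. Your explicit handling of the guard-failure and commit-barrier branches (which the paper's proof passes over because both policies treat them identically) is a harmless refinement, and taking the hypothesis literally at the offering state rather than rederiving membership from chain reachability is exactly what the paper does.
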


\begin{proof}[Proof sketch]
Every node that can be offered belongs to $\demanded(R,\sigma)$, so \code{lazy}'s membership test is
vacuously true. Both policies refuse once the goal is satisfied, and their decisions coincide pointwise
under the shared selector.
\end{proof}

\begin{proposition}[Structural separation]
\label{prop:sep}
Fix $G$ and $R$, let $\sigma$ be any state of a fixed-program run segment, and let
$n \notin \demanded(R,\sigma)$ become ready. Then $n \notin M_{\code{lazy}}(\sigma)$ under every selector.
If a selector offers $n$ while the goal is open, its guards have not failed, and it is reversible or covered
by a released barrier, then $n \in M_{\code{stop\_eager}}(\sigma)$.
\end{proposition}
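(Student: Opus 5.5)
The proof has two halves: a negative claim for \code{lazy}, which holds under every selector, and a positive claim for \code{stop\_eager}, which holds under the selector that offers $n$.

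\textbf{The \code{lazy} half.} Lemma~\ref{lem:mono} does the work. Because $G$ and $R$ are fixed and the segment has no growth, request change, or invalidation, the demanded sets form a non-increasing chain $\demanded(R,\sigma)\supseteq\demanded(R,\sigma_{1})\supseteq\cdots$. So $n\notin\demanded(R,\sigma)$ gives $n\notin\demanded(R,\sigma_t)$ for every later state $\sigma_t$ of the segment.

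Next I read off Algorithm~\ref{alg:engine}. Every call to $\mathrm{Materialize}$ on a node $n'$ is preceded, in the same iteration, by a refresh $D\gets\textsc{BackwardClosure}(R,\sigma_t)$ and by a test $\pi_{\mathrm{perm}}(n',\sigma_t,D)=1$. Under \code{lazy} that test requires $n'\in D$. Readiness never enters the definition of $D$, so $n$ becoming ready does not change this. Whenever $n$ is offered, at whatever time the selector chooses, it is therefore refused and settled by deferral or skip, never materialized. Nothing in this argument uses which selector is in force, so $n\notin M_{\code{lazy}}(\sigma)$ under every selector.

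One technical point needs care: $D$ must be computed from the same state at which $n$ is offered. The algorithm places the refresh immediately before the permission test, so this holds.

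\textbf{The \code{stop\_eager} half.} This is a direct unfolding of the permission rule. Suppose the selector offers $n$ at a state $\sigma_t$ where:
\begin{itemize}
\item the goal is open,
\item no guard of $n$ resolves to \emph{fails}, and
\item $n$ is reversible, or is irreversible and covered by a released barrier.
\end{itemize}
Then \code{stop\_eager}'s rule (\code{guard\_eager} plus refusal once satisfied) returns $1$. The commit branch does not block $n$, so the engine calls $\mathrm{Materialize}(n,\cdot,\cdot)$. By the definition of $M_\pi$, which counts attempts including failed ones, $n\in M_{\code{stop\_eager}}(\sigma)$.

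\textbf{Main obstacle.} The delicate step is the interpretation of ``a selector offers $n$ while the goal is open.'' The conditions must hold at the offer time $\sigma_t$, not merely at $\sigma$. I would state explicitly that guards of $n$ may resolve between $\sigma$ and the offer, and that the hypothesis refers to the offer state.

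I would also note that invariant I2 makes the barrier check independent of permission. Hence the only way \code{stop\_eager} can refuse $n$ is through guard failure or goal satisfaction, and the hypotheses exclude both.
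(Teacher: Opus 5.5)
Your proposal is correct and follows the paper's proof essentially step for step. Lemma~\ref{lem:mono} keeps $n$ outside the demanded set for the rest of the segment, demand membership is necessary for \code{lazy} to invoke $\mathrm{Materialize}$, and \code{stop\_eager}'s rule contains no membership test, so under the stated goal, guard, and barrier conditions it invokes $n$. Your care about evaluating those conditions at the offer state is a harmless refinement; by monotonicity, any refresh of $D$ at or after $\sigma$ already suffices, which also covers the once-per-wave refresh of the parallel mode.
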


\begin{proof}[Proof sketch]
By Lemma~\ref{lem:mono}, $n$ stays outside the demanded set at every later state of the run, and \code{lazy}
invokes materialization only under membership, so it never invokes $n$. \code{stop\_eager} consults guards,
satisfaction, and the independent commit barrier; under the stated conditions it invokes $n$.
\end{proof}

If $G$ grows, the proposition applies again on a subsequent interval with the enlarged $G$ and fixed $R$.
A node entering demand through a new edge is then demanded work and cannot produce separation.

\begin{corollary}[Stopping is order-contingent; reachability is not]
\label{cor:order}
On fixed $G$ and $R$, let an executable off-path node become ready and eventually be offered.
Whether \code{stop\_eager} invokes it depends on whether the offer precedes goal closure;
\code{lazy}'s reachability decision does not. The materialized sets can coincide only if the selector
postpones every such node until after the goal closes. With fixed, strictly positive additive node costs,
the same condition is necessary for bill equality. If the goal never closes, \code{stop\_eager} invokes
every such node it is offered.
\end{corollary}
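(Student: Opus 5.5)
The plan is to derive the corollary from Proposition~\ref{prop:sep} together with Lemma~\ref{lem:mono}, using the sequential loop of Algorithm~\ref{alg:engine} to pin down when an offer happens. Fix $G$ and $R$, and let $n$ be an off-path node: $n \notin \demanded(R,\sigma)$ at the state $\sigma$ where it becomes ready. Call $n$ \emph{executable} when its guards have not failed and it is reversible or covered by a released barrier. Suppose $n$ is offered at some later state $\sigma'$.

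\textbf{Step 1: the \code{lazy} side.} By Lemma~\ref{lem:mono}, $n \notin \demanded(R,\sigma')$ for every later $\sigma'$ in the segment. So \code{lazy}'s membership test refuses $n$ at every offer, whatever the selector. Its decision on $n$ depends only on $G$, $R$ and the truncation conditions, never on the offer time. This is the first half of Proposition~\ref{prop:sep}, restated as independence from order.

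\textbf{Step 2: the \code{stop\_eager} side.} Split on whether the offer precedes goal closure. If the goal is open at $\sigma'$, then $n$ is executable and none of \code{stop\_eager}'s refusal conditions apply: guard failure, satisfaction and the barrier all fail to fire. So $n \in M_{\code{stop\_eager}}$, which is the second half of Proposition~\ref{prop:sep}. If the goal is already satisfied at $\sigma'$, then \code{stop\_eager} refuses $n$. Satisfaction stays true within a fixed-program segment, since materialized artifacts are not invalidated there. The only thing that varies between these two cases is the selector's timing, so \code{stop\_eager}'s decision is order-contingent.

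\textbf{Step 3: the consequences.} For set coincidence, compare $M_{\code{lazy}}$ and $M_{\code{stop\_eager}}$. Each off-path offered node is absent from $M_{\code{lazy}}$. By Step 2 it is present in $M_{\code{stop\_eager}}$ exactly when it is offered while the goal is open. So equality forces every such node to be offered only after closure. For bill equality, I would aim for the inequality
\[
  \mathrm{bill}_{\code{stop\_eager}} \ge \mathrm{bill}_{\code{lazy}} + c(n) > \mathrm{bill}_{\code{lazy}}
\]
whenever some off-path $n$ is offered before closure. The left inequality needs costs to be fixed and additive, and the right one needs $c(n)>0$. The never-closing case is then just the ``open'' branch of Step 2, applied to every offer.

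\textbf{Main obstacle.} The hard part is the first inequality in the display. I need that every demanded node \code{lazy} materializes is also materialized by \code{stop\_eager}. Otherwise \code{stop\_eager} could save on demanded work what it spends off-path. Off-path materializations could change the state, and so perhaps change which demanded node closes the goal.

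My first attempt would be a simulation argument under the shared selector. Off-path nodes lie outside the backward closure. So their artifacts cannot feed any demanded node or the satisfaction predicate, except through guards, and guards are already in the closure by Definition~\ref{def:demanded}. Given that, both policies visit the demanded nodes in the same relative order with the same outcomes, and \code{stop\_eager}'s materialized set contains \code{lazy}'s.

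If that containment proves too delicate, I would instead weaken the claim to a comparison restricted to the demanded part of each bill.
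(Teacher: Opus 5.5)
Your Steps 1--3 follow the paper's proof almost line for line. Like the paper, you take the \code{lazy} side from Proposition~\ref{prop:sep}, split \code{stop\_eager} on whether the offer precedes closure, read set coincidence off directly, and treat the never-closing case as the open branch. You are also right that only the bill clause needs more, and the paper does not prove it here. Its proof says only that ``any extra invocation increases the bill,'' which silently imports $M_{\code{lazy}}\subseteq M_{\code{stop\_eager}}$, the subset relation asserted without proof in Section~\ref{sec:theory}.

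Your simulation argument is the right way to earn that containment, but its ``given that'' does not follow from the facts you cite. Closure does shield three things from off-path artifacts: demanded readiness, demanded guards, and the root-only satisfaction tests of the three goal forms. It does not shield the selector. $\pi_{\mathrm{order}}$ reads $(W,\sigma)$, and the two runs differ in both. \code{stop\_eager}'s state contains off-path artifacts, and its ready set contains their descendants. In addition, an off-path materialization resets the offered set $A$ of Algorithm~\ref{alg:engine}, while a \code{lazy} refusal does not.

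A shared, state-reading selector then defeats the bill clause outright. Take \textbf{AnyAcceptable} over $c_1,c_2$ of cost $10$ and $5$, both passing, and a ready off-path $n$ of cost $5$ that the selector ranks first. Let the selector prefer $c_2$ exactly when $n$ is materialized. Then \code{stop\_eager} pays $5+5$ and \code{lazy} pays $10$, although $n$ was offered before closure.

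So add the hypothesis explicitly: the selector's relative ranking of demanded nodes depends only on node identity or declared cost, or more generally is unaffected by off-path state. The paper's alphabetical and cheapest-node orders satisfy this. With it, your argument closes:
\begin{itemize}
\item the ready demanded set and its guard resolutions depend only on demanded history;
\item \code{lazy} refuses its way past earlier off-path offers to the minimal ready demanded node;
\item \code{stop\_eager}'s next demanded pick is minimal in that same set;
\item coupling of realized outputs gives both runs the same demanded invocations up to the same closing event.
\end{itemize}
Your fallback of comparing only the demanded part of each bill would not establish the clause as stated.
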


We therefore report ties in two kinds. A \emph{forced} tie occurs when the goal's closure covers the whole
graph, so no policy could have saved anything. An \emph{order-contingent} tie occurs when off-path work
existed and \code{stop\_eager} happened not to be offered it. The same number, two different facts.

\begin{proposition}[Order invariance for conjunctive goals]
\label{prop:orderinv}
Fix $G$, $R$, and a common initial state. Suppose the goal cannot be closed by any single materialization
before its whole closure has run ---
\textbf{AllOutputs}, or an \textbf{EachOf} in which every subgoal names exactly one candidate, so that no
subgoal is a pool whose first passing member depends on visit order. Suppose materializer payloads are
deterministic, guards read payloads only, and $G$ remains fixed (conditions C1--C3 of
Section~\ref{sec:reorder}); hold projection, materializer, and barrier choices fixed; and require each
selector eventually to offer every ready permitted node. If every demanded invocation succeeds, then
$M_{\code{lazy}}$ is invariant across those selectors.
\end{proposition}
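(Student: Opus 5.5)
My plan is to identify $M_{\code{lazy}}$ with a selector-free set, namely the initial demanded set minus the nodes that guards cut out. Let $D_0=\demanded(R,\sigma_0)$. Because payloads are deterministic and guards read only payloads (C1--C2), each guard's eventual resolution is a function of the program alone. Define the \emph{settled closure} $D^\ast$ as follows: the backward closure of $R$, truncated at every producer whose guard resolves to \emph{fails} under those deterministic payloads. The claim is then $M_{\code{lazy}}=D^\ast\setminus T_0$ for every admissible selector, where $T_0$ is the set of nodes already terminal at $\sigma_0$. The right-hand side does not mention the selector, so invariance follows.

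\textbf{Inclusion $M_{\code{lazy}}\subseteq D^\ast$.} Suppose $n$ is materialized at state $\sigma_t$.
\begin{itemize}
\item Then $n\in\demanded(R,\sigma_t)$.
\item By Lemma~\ref{lem:mono} (with $G$ fixed by C3, and $R$ fixed), $\demanded(R,\sigma_t)\subseteq D_0$.
\item The guard truncations recorded in $\sigma_t$ agree with the deterministic resolutions. So every edge of the path from $n$ to a root that survived to time $t$ also survives in $D^\ast$, and hence $n\in D^\ast$.
\item The node $n$ is not already terminal at $\sigma_0$, since a node in $T_0$ is never offered.
\end{itemize}

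\textbf{Inclusion $D^\ast\setminus T_0\subseteq M_{\code{lazy}}$.} Take $n\in D^\ast\setminus T_0$ and argue by contradiction. Suppose the run ends without materializing $n$, and choose $n$ minimal in topological order among such nodes.
\begin{itemize}
\item By the conjunctive hypothesis (AllOutputs, or EachOf with singleton subgoals), the goal cannot be satisfied while some demanded node on a root path is unmaterialized. So the satisfaction refusal never fires on $n$.
\item Every predecessor of $n$ along data, guard and ordering edges is either terminal at $\sigma_0$ or lies in $D^\ast$ and precedes $n$. By minimality, each such predecessor in $D^\ast$ is materialized. Since every demanded invocation succeeds, none failed.
\item Hence $n$ eventually becomes dependency-ready. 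Its guards do not fail, by the definition of $D^\ast$ and determinism. It remains in the demanded set, because the only truncations are terminal producers or failing guards, and neither applies along its path.
\item The fairness assumption then forces an offer of $n$. At that offer $\pi_{\mathrm{perm}}$ returns $1$. Projection, materializer and barrier choices are held fixed, so a barrier block is also selector-independent. This case can be handled by restricting to the set of nodes whose barrier outcome is fixed, and the result is the same for all selectors.
\end{itemize}
This contradicts the assumption that $n$ is never materialized.

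\textbf{Main obstacle.} I expect the hard step to be the first one: showing that the guard truncations seen along an arbitrary run agree with the deterministic limit $D^\ast$. Timing differences could in principle let a node be materialized before a guard that would remove it has resolved, since unresolved guards fail open. I would handle this by observing two facts. First, a guard edge makes its producer a dependency of the guarded node. Second, readiness requires resolved guards, so no node is materialized while one of its own guards is still unresolved. If that reading of readiness is too strong, the fallback is to add an explicit condition that guards are resolved before the guarded node becomes ready, which is presumably part of C1--C3 in Section~\ref{sec:reorder}.
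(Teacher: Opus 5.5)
Your plan (name a selector-free target set, then prove two inclusions from monotonicity, the conjunctive goal, and fairness) is an explicit version of the paper's argument. However, the first inclusion fails at exactly the step you flagged, and your repair addresses the wrong guard.

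Whether an offered node $n$ is demanded depends on the guards of the nodes \emph{between} $n$ and the roots, not on $n$'s own guards. Those nodes cannot yet be ready when $n$ is offered. So requiring a node's own guards to be resolved before it becomes ready says nothing about them.

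Here is a counterexample. Take \textbf{AllOutputs} over $\{r\}$ and a node $p$ feeding $r$. Give $p$ a data input from $n$ and a guard reading $q$, where $n$ and $q$ are independent sources and $q$'s deterministic payload makes that guard fail.
\begin{itemize}
\item A selector offering $n$ first computes $p\in\demanded(R,\sigma_0)$, because the unresolved guard fails open. So $n$ is demanded and \code{lazy} materializes it.
\item A selector offering $q$ first resolves the guard to \emph{fails} and truncates $p$. \code{lazy} then defers $n$ for the rest of the run.
\end{itemize}
C1--C3, fairness, the conjunctive goal, success of every invocation, and your own condition that $p$ waits for $q$ all hold. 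Yet $n\in M_{\code{lazy}}$ under one selector and not the other. Since $n\notin D^\ast$, your identity $M_{\code{lazy}}=D^\ast\setminus T_0$ is false, and so is the invariance it was meant to deliver.

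The paper's proof takes the same step without comment. From C1--C2 it infers that each guard ``resolves identically in every order'' and concludes that every fair selector ``sees the same guard-valid dependency relation and backward closure.'' That is true of the eventual, fully resolved closure, but not of the intermediate closures that \code{lazy} actually consults at each offer. An extra hypothesis is needed, for instance that no guard on a member of $\demanded(R,\sigma_0)$ resolves to \emph{fails}, so that only terminal producers ever truncate. With such a hypothesis your two-inclusion argument can be completed.

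Independently, two smaller steps in your write-up are wrong as stated.
\begin{itemize}
\item $D^\ast$ must be truncated at producers in $T_0$, not merely have $T_0$ removed afterwards. Otherwise a node reachable only through an already-settled producer lies in $D^\ast\setminus T_0$ but is never demanded.
\item In the second inclusion, a predecessor whose own guard fails drops out of $D^\ast$ while $n$ stays in it. Likewise, a predecessor blocked by an unreleased barrier stays in $D^\ast$ but is never materialized. Either way $n$ never becomes ready under any selector. The target set must therefore be cut down to its executable part. This is harmless for invariance but breaks the identity as you state it, and it replaces your informal restriction ``to nodes whose barrier outcome is fixed.''
\end{itemize}
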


\begin{proof}[Proof sketch]
Such a goal cannot become true early, so the satisfaction test refuses no needed node. Under C1--C3, every
selector sees the same executable, nonterminal members of the closure; fairness eventually offers each one,
and \code{lazy} rejects every node outside that set.
\end{proof}

This does \emph{not} extend to disjunctive goals: under \textbf{AnyAcceptable} the first passing candidate
terminates the run, so which is tried first moves \code{lazy}'s bill exactly as it moves
\code{stop\_eager}'s. An \textbf{EachOf} over pools sits between: its \emph{across-product} part --- which
products run at all --- is order-invariant under \code{lazy}, since an unrequested product is refused in
every order, while its \emph{within-pool} part varies for both arms alike. Under the stated execution
assumptions, order invariance depends on goal shape rather than on permission alone.

\paragraph{Opportunity, capture, and remainder.}
Propositions~\ref{prop:equiv} and~\ref{prop:sep} predict a shape, not just a sign. On $N$ equal-cost
products where the goal names one, the fraction outside the goal is $(N-1)/N$, so a raw saving must grow
with $N$ even if the policy has not improved. Separating opportunity from achievement:
\[
  O = C_{\code{guard\_eager},\mathrm{offpath}}, \qquad A_\pi = 1 - \frac{C_{\pi,\mathrm{offpath}}}{O},
  \qquad \Delta_{\mathrm{perm}} = A_{\code{lazy}} - A_{\code{stop\_eager}}.
\]
$O$ is the \emph{executable opportunity}, what the full guard-respecting eager row pays outside the
requested product's closure; $A_\pi$ is the share policy $\pi$ avoids; and
$\Delta_{\mathrm{perm}}$ is permission's additional capture beyond goal stopping. Every paid event is
classified as shared setup, demanded-product work, or off-path work, and the role totals reconcile to each
policy's cost. Off-path is goal-relative rather than hindsight-relative: a failed candidate inside the
requested product remains demanded work. For a physical operation, the same difference is
\[
  \Delta_{\mathrm{op}} =
  A^{\mathrm{op}}_{\code{lazy}} - A^{\mathrm{op}}_{\code{stop\_eager}},
\]
called its \emph{demand-specific remainder}. The superscript ``op'' means the saving produced by adding that
operation to each permission arm; it does not introduce a new capture metric. Thus both quantities use
$\Delta$: the subscript says whether the comparison isolates permission or one physical operation.

\paragraph{What a planner can move, and what it cannot.}
Two results below have their \emph{sign} fixed under the propositions' coupled fixed-program assumptions,
whichever model writes the program. \code{lazy} invokes a subset of the nodes
\code{stop\_eager} invokes, so with nonnegative additive costs its bill cannot be higher. On conjunctive
goals its invoked set is order-invariant; under the fixed per-node charges used in the comparison, a price
heuristic therefore moves only the eager bill. The remainder is that effect negated --- negative where
cheapest-first helps the eager arm, positive where it backfires --- and never a response of \code{lazy} to
order. A planner determines the amount of off-path work and therefore the \emph{size} of both effects, but
cannot make refusing positive-cost unasked work more expensive than performing it.

Operations that rewrite a node's ask carry no such guarantee. Projection does not filter an answer already
computed; it re-runs the node under a narrower request, a different generation whose price is a fact about
how one model responds to being asked for less --- measurable, not derivable. The division is therefore
between results resting on a subset relation and results resting on a model's response to a changed prompt,
and only the second kind can reverse. Section~\ref{sec:planner} tests this against three planners and finds
the division where this argument places it.

\subsection{The operation space}
\label{sec:operations}

Permission is one operation. Stating it alone would invite each later operation to arrive with its own
scheduler and its own favorable baseline, so we name the full space and the control each requires.

\begin{center}
\small
\begin{tabular}{p{0.15\linewidth}p{0.34\linewidth}p{0.42\linewidth}}
\toprule
\textbf{Operation} & \textbf{Question} & \textbf{Control obligation}\\
\midrule
Demand / defer & May this ready node run at all? & Same graph and goal;
compare against goal-stopping eager, not naive eager.\\
Reorder & Which ready candidate is offered first? & Identical ex-ante selector on
both permission arms; no hindsight cost in the selector.\\
Project & Which output fields become concrete? & Whole $\times$ projected
factorial; the materializer must physically do less; denominator stays
whole.\\
Prune & May a live option be declined permanently? & Distinct from guard skip
and goal stop; a control must show the quality loss when the pruned option was
needed.\\
Fuse / CSE & Can shared physical work be combined? & A transformation
denominator plus a semantic-equivalence check.\\
Fidelity & Which materializer or approximation suffices? & Matched task
quality and the complete resource vector.\\
Reuse / cache & Can an existing artifact satisfy demand again? & Provenance,
validity conditions, explicit cache-hit accounting, and a matched stopping-plus-reuse row.\\
Speculate & Is undemanded work worth racing for latency? & Control is
\emph{degenerate} --- an eager arm has already run everything. Report a
cost/latency frontier against the no-speculation arm with fixed authored
priors.\\
Commit & May an irreversible action occur? & Deterministic authorization and
a validation barrier.\\
\bottomrule
\end{tabular}
\end{center}

Section~\ref{sec:results} credits the first row to permission; Section~\ref{sec:physops} measures the rest.
The individual algorithms have prior literatures; the novelty is making one live demanded set govern their
legality and measuring each operation against the eager control that isolates its demand-specific value
(Section~\ref{sec:related}).

\section{Related work}
\label{sec:related}

Table~\ref{tab:related} places the nearest systems on the axes defined in Section~\ref{sec:model}.
Component precedents exist, but no prior agent runtime spans the row \system{} occupies: request-derived
closure over a standing program, recomputation after observations, ready-queue and field-level physical
operations, and an independent commit barrier, all under one live demanded-set semantics.

\begin{table}[H]
\centering
\caption{Positioning. \textbf{Goal-relativity:} \textbf{S} --- execution is
restricted to a backward closure rooted at requested or designated outputs; \textbf{P} --- restricted by a
predicate over the request, model features, or observations; \textbf{---} --- no such restriction is
established. Other columns: \checkmark{}
present, $\circ$ partial, blank absent. Checkmarks mark component precedents; novelty resides in the unified
\system{} row.}
\label{tab:related}
\footnotesize
\begin{tabular}{p{0.29\linewidth}C{0.075\linewidth}C{0.10\linewidth}C{0.085\linewidth}C{0.085\linewidth}C{0.085\linewidth}}
\toprule
\textbf{System or line of work} &
\textbf{Goal-\newline relativity} &
\textbf{Demand re-evaluated on observation} &
\textbf{Ready-queue ordering} &
\textbf{Field projection} &
\textbf{Commit barrier}\\
\midrule
\multicolumn{6}{@{}l}{\emph{Declarative and incremental execution}}\\
Make, Bazel~\cite{feldman1979make,bazel} & S & $\circ$ & $\circ$ & & \\
Build systems \`a la carte~\cite{mokhov2018build} & S & \checkmark & \checkmark & & \\
Adapton~\cite{hammer2014adapton} & S & \checkmark & $\circ$ & & \\
TensorFlow graph mode~\cite{abadi2016tensorflow} & S & & $\circ$ & & \\
Volcano, Eddies~\cite{graefe1994volcano,avnur2000eddies} & S & \checkmark & \checkmark & \checkmark & \\
\addlinespace
\multicolumn{6}{@{}l}{\emph{Agent runtimes and agentic execution}}\\
ReAct~\cite{yao2023react} & --- & & & & \\
ReWOO~\cite{xu2023rewoo} & --- & & & & \\
LLMCompiler~\cite{kim2024llmcompiler} & --- & $\circ$ & \checkmark & & \\
Helium~\cite{wadlom2026helium} & S & & \checkmark & & \\
LLM-as-Scheduler~\cite{xiang2026las} & P & \checkmark & & & \\
Tool-call necessity~\cite{tocall2026} & P & & & & \\
LOTUS~\cite{patel2025lotus} & --- & & & & \\
\addlinespace
\textbf{\system{} (this paper)} & \textbf{S} & \checkmark & $\checkmark^{\dagger}$ &
$\checkmark^{\dagger}$ & \checkmark\\
\bottomrule
\end{tabular}

\vspace{2pt}
{\footnotesize $^{\dagger}$ Implemented as actions and measured with matched controls
(Section~\ref{sec:physops}), always with permission underneath the \code{lazy} arm.}
\end{table}

\paragraph{Demand-driven and goal-relative execution.}
Backward closure has established foundations in call-by-need~\cite{wadsworth1971lazy}, its natural
semantics~\cite{launchbury1993lazy}, demand-driven evaluation~\cite{pingali1985demand},
dependency-directed rebuilding~\cite{feldman1979make}, program slicing~\cite{weiser1984slicing}, and
goal-relative build systems~\cite{mokhov2018build}. However, those formulations do not combine the four
conditions agentic programs add: planners extend the graph during execution, observations change closure
membership, satisfaction is judged rather than timestamp-compared, and nodes spend money or commit
irreversible effects. \system{} makes closure well defined, linear-time to recompute, and enforceable as
permission under all four, with equivalence and separation regions measurable in real workloads.

\paragraph{Agentic execution and cost.}
Helium is the closest output-rooted precedent in an agent workflow: it prunes a query plan backward from
fixed outputs, then consolidates and schedules the remaining operators~\cite{wadlom2026helium}. However,
\system{} derives closure from each active request against a standing program, recomputes it after
observations, and makes it common runtime permission for physical operations and commit barriers --- not one
fixed logical-plan rewrite.

LLM-as-Scheduler conditions early exit, repair, and rerouting on observations~\cite{xiang2026las};
tool-call necessity estimates whether one call is warranted~\cite{tocall2026}; and LOTUS makes the semantic
operator the unit of optimization~\cite{patel2025lotus}. However, they ask whether to stop or route, make
one call, or configure an operator; \system{} computes the complete request-relative set authorized to run.
The answers coincide in pools (Proposition~\ref{prop:equiv}) and separate on ready off-path work
(Proposition~\ref{prop:sep}).

EnumGRPO provides component precedents for our physical-operation study by searching projection width,
operator placement, and relation-level selectivity under a joint quality-cost
objective~\cite{nie2026agenticquery}. However, it asks which variant wins; ours asks what the live goal
authorizes and subtracts the identical eager mechanism's effect from every lever. The signed remainder
attributes value to demand rather than merely finding a cheaper setting.

Other work reduces inference cost without graph-demand semantics. FrugalGPT adapts prompts, approximates
models, or conditionally adds calls in a cheap-to-expensive cascade~\cite{chen2023frugalgpt}; RouteLLM sends
each query to one selected model before generation~\cite{ong2025routellm}. These methods choose how many or
which model calls answer a query; permission decides which nodes of an already described workflow the
current goal reaches. The levels compose, which is also why our comparisons hold the model fixed.

\paragraph{Operations and their sources.}
Adaptive reordering under revised estimates is classical query
processing~\cite{avnur2000eddies,graefe1994volcano}. Reuse and cache validity are the subject of incremental
computation: dynamic dependence graphs, change propagation, and memoization identify stale and unaffected
subcomputations~\cite{acar2005selfadjusting,acar2009experimental,hammer2014adapton}; differential dataflow
extends this to partially ordered versions and nested
iteration~\cite{mcsherry2013differential,abadi2015differentialfoundations}. That literature asks what is
\emph{stale} after an input changes; permission asks what is \emph{wanted} after the request changes. A node
can be current and still not owed.

Serving-layer reuse shares attention state through paged KV memory, declared prompt modules, or automatic
prefix matching~\cite{kwon2023pagedattention,gim2024promptcache,zheng2024sglang}. It makes a repeated call
cheaper; artifact reuse can eliminate the call, so the mechanisms compose. Meta-Dataflows shares
exploratory work across jobs~\cite{castrofernandez2018metadataflows}; Helium applies pruning and
common-subexpression elimination to agent plans~\cite{wadlom2026helium}; speculation exchanges redundant
work for latency as speculative decoding does within a call~\cite{leviathan2023speculative}; and our
static/dynamic dependency boundary follows selective applicative
functors~\cite{mokhov2019selective}. These mechanisms are established individually; \system{} contributes
their common request-relative semantics and a matched-control discipline that exposes each operation's
signed, possibly negative, demand-specific remainder.

\section{Arenas: public benchmarks and composed programs}
\label{sec:arenas}

Six public arenas are wired to one engine through adapters that preserve their native cost columns
(Table~\ref{tab:arenas}). We classify an arena's \emph{shape} by the relationship between its active goal
and the outputs its graph could produce, not by application domain. A \emph{pool} asks for one acceptable
answer among alternatives: until one succeeds, every candidate may still be needed, so the demand closure
contains the whole pool. A \emph{multi-product graph} produces several independently useful outputs while
the current request names only some of them; branches serving the other products can therefore fall outside
the goal's closure. A \emph{build graph} asks for one native target inside a larger package, and its
dependencies form that target's closure while unrelated package targets remain outside it. Three arenas are
pools, WfCommons and DataSciBench are multi-product, and Bazel supplies the exploratory build-graph domain;
it is not an agentic workload, and every claim resting on it is labeled accordingly.

These arenas are a controlled map of the mechanism's boundary, not an exhaustive account of where
demand-driven execution may help. Public corpora supply natural graphs, costs, outcomes, and the important
cases where we predict no saving. The composed portfolios and authored gyms deliberately vary the structure
those corpora hold nearly constant; we label every such construction and retain native costs wherever they
exist. They demonstrate causal conditions, not how often those conditions occur in deployment. Nor do we
claim that \system{} beats every eager executor with every heuristic: the primary comparison is the
strongest goal-stopping eager baseline, and each physical operation receives a matched eager control. An
eager executor given the same correct goal closure would make the same permission decisions --- it would,
in effect, have implemented the mechanism studied here.

\begin{table}[t]
\centering
\caption{Arenas. Cost columns are arena-native and never rescaled; ``live''
means the materializer ran real code or a real model during our measurement.}
\label{tab:arenas}
\footnotesize
\setlength{\tabcolsep}{4pt}
\begin{tabular}{p{0.19\linewidth}p{0.08\linewidth}p{0.11\linewidth}p{0.11\linewidth}p{0.39\linewidth}}
\toprule
\textbf{Arena} & \textbf{Mode} & \textbf{Cost unit} & \textbf{Shape} & \textbf{What it is}\\
\midrule
WfCommons / WfInstances~\cite{coleman2022wfcommons,wfinstances} & replay &
measured CPU\,s & multi-product & 154 readable production scientific-workflow
traces, 86 with several output products; 303 seeded product requests.\\
DataSciBench~\cite{zhang2026datascibench} & live & real tokens & multi-product
& The benchmark supplies executable dependency-structured plans; our adaptation
retains 95 goals over 28 tasks, including 28 multi-goal follow-up tasks.\\
SWE-bench Verified~\cite{jimenez2024swebench,swebenchverified} & replay; live gate & recorded;
container seconds & pool & 500 human-validated tasks; also the substrate for a live
release gate with real agent patches run in Docker.\\
Defects4J v1.1~\cite{just2014defects4j,sobreira2018defects4j} & replay & wall seconds & pool & 395 bugs;
185 settle under our replay criterion; also composed into portfolios.\\
SimulCost~\cite{simulcost2026} & replay; live & analytic cost & pool & 2{,}304
multi-round problems from the pinned arXiv v3 release; also composed into portfolios.\\
Bazel~\cite{bazel} & replay & action CPU\,s & build graph & 419 native targets
on two production packages; exploratory second structural domain.\\
\bottomrule
\end{tabular}
\end{table}

\paragraph{Three preconditions, declared before the results.}
The semantics is sharp enough to state its own payoff conditions in advance, and we fix them here before any
arena is scored. The mechanism pays exactly where three conditions hold together: the program must contain
work that is \emph{ready but not demanded}, since permission refuses only what lies outside
$\demanded(R,\sigma)$; that work must be \emph{physically separable}, so a step can do part and skip the
rest; and the skipped part must be \emph{expensive enough} that removing it survives the accounting. Where
all three hold we predict a positive remainder. Where any one fails we predict exactly zero --- not a small
effect to be argued over, but an identity. Because the list is fixed in advance, the zeros below are
commitments the theory makes and then meets, rather than losses reported after the fact;
Section~\ref{sec:discussion} reads the two groups against each other.

\paragraph{What current corpora hold constant.}
Permission's opportunity has an exact measure --- the share of the graph a request does not reach --- so the
evidence base itself becomes something we can quantify rather than assume. Three features of current
evaluation hold that share near zero.
First, pool tasks keep every candidate relevant until one succeeds; depth does not help, because every rung
remains a possible witness. Second, single-request evaluation asks whether an agent solves one task rather
than how one standing program serves a sequence of narrower requests. Third, the planning prompt enumerates
the current output files and asks for a plan of that task, so the object being authored is anchored to the
request by construction. The resulting near-complete coverage is visible in the data: pool closures contain
every candidate, while among 28 multi-goal DataSciBench plans, 137 of 139 nodes (\textbf{98.6\%}) are
demanded by some ticket and \textbf{27 of 28} contain no work outside those requests. A controlled prompt
intervention below changes the stated billing semantics without changing that coverage
(Section~\ref{sec:planning-incentive}), so we do not attribute this narrowness to billing.

These corpora do exactly what they were built to do: establish whether an agent can produce and execute a
correct plan. What they hold nearly constant is the independent variable of this paper --- the requested
share of a standing program. Proposition~\ref{prop:equiv} therefore predicts exact ties on their pool
strata. To test separation rather than rediscover that zero, we compose native tasks into multi-product
programs and build authored gyms that switch one precondition at a time; every result identifies which
topology, request grouping, price, or outcome came from us.

\paragraph{Portfolios from native costs.}
A SimulCost program bundles several design studies from one physics family; a Defects4J program bundles
several repair targets from one project. Each product is an \textbf{AnyAcceptable} pool and the program goal
is \textbf{EachOf} over products, so one program contains a disjunctive equivalence region \emph{inside} a
conjunctive separation region. The topology is authored; the costs are native and unmodified. At width
$N=3,4,5,6$ this yields 893, 667, 535, and 443 programs. Every product position is requested in turn, every
zero row is retained, and an all-products request is carried as a control that must tie exactly.

\paragraph{A live release gate.}
Over SWE-bench we build the program of Section~\ref{sec:motivation}: three issues in one repository, a real
agent attempting candidate patches, each candidate run in a Docker container that installs the repository,
applies the patch, and executes the test suite. The cost is measured container time --- a cloud bill, not a
transcribed trace. Repositories are drawn by a deterministic stratified sampler after an early run selected
only Astropy by taking the head of an alphabetical list.

\paragraph{Authored gyms.}
Two gyms isolate the physical operations. Prices are declared \emph{before} anything runs, so ordering
claims are ex-ante rather than transcribed from recorded runtime, and their magnitudes are reported as
controlled factorial shapes, never as measurements of a real system. The \emph{engineering gym} contains:
\emph{screening}, three demanded alternatives ready after a cheap checkout --- a static check (2 CPU units),
an external review (25), and the full test suite (200) --- under \textbf{AnyAcceptable}; \emph{profile}, a
shared node producing a cheap pass/fail summary and an expensive execution trace, with a second consumer (a
coverage report) named in the program text; a prune family, a fidelity ladder, a fusion pair, a speculation
family, and a family of $K$ contingent irreversible actions behind commit barriers. The \emph{lifecycle gym}
supplies what a single-ask benchmark cannot: a standing graph plus an ordered list of tickets, with
\emph{session coverage} (the share of the program's products the whole sequence ever demands) and
\emph{invalidation} (the share of tickets whose upstream input changed) as explicit axes. Every arm is reset
identically between tickets, artifacts survive only when reuse is enabled, and invalidation voids the
changed node and its whole downstream cone for every arm.

\section{Implementation and methodology}
\label{sec:impl}
\label{sec:method}

\paragraph{Engine.}
One execution engine runs every condition. It receives the graph, goal, materializers, cost models, guards,
commit barriers, event log, and selector; only $\pi$ changes between arms. In sequential mode the main loop
ranks the dependency-ready set, offers its first node to the policy, incorporates any resulting observation,
and then forms the next offer. Demand-aware permission therefore refreshes the closure before each offer.
A node is offered at most once per progress epoch and re-offered when progress occurs, which is how a
deferred node can later become demanded as guards resolve. The default order is alphabetical by node
identifier. This is deterministic and neutral, but it means \code{stop\_eager}'s bill depends on node
naming --- Corollary~\ref{cor:order} made visible. A
controlled renaming demonstrates the effect: sorting the undemanded branch early costs
\code{stop\_eager} 415, sorting it late costs 15, and \code{lazy} costs 15 either way. Parallel mode
evaluates and launches a whole ready wave before any member completes, then refreshes state and demand
between waves. This weakens goal stopping, so we report the sequential comparison --- the harder case for
us.

\paragraph{The \code{react} arm.}
Where a \code{react} row appears (Section~\ref{sec:live}, Appendix~\ref{sec:prune}) it is not a policy over
the shared program and is not run through the matched-control harness, because not having a program is the
definition of ReAct. It is a separate ReAct-style executor instrumented into the same event log: it holds no
plan, reasons one step, takes one action, observes, and repeats, using the same materializers so that an
action costs what the same action costs under every policy. We make it deliberately strong --- an
oracle-competent agent that notices at once when an observation makes a downstream node pointless, as a
guard-respecting DAG would --- and we charge it what a planned executor never pays: a deliberation turn
before every action that re-reads the growing trajectory, priced as a declared token formula that grows with
the step count and kept in the token resource so it never enters a CPU or wall-clock comparison. Because it
plans only what it reaches, its materialization ratio has no shared denominator and is never reported; it is
compared on absolute cost per resource. On single-ask verification work it lands beside \code{stop\_eager},
since stopping when the goal is met is what both do; what it cannot do is hold a plan and refuse part of it,
so under an open goal it materializes everything it reaches and ties the eager arms.

\paragraph{Pairing and statistics.}
Every comparison is paired at the task level: the same program under two policies. We report the mean
per-task saving with a bootstrap 95\% confidence interval, the median, win/loss/tie counts, a Wilcoxon
signed-rank $p$, and rank-biserial effect size. Goal rows drawn from one task are clustered within it, and
product requests from one portfolio are averaged within the program before resampling across programs. When
every paired delta is zero we report ``identical on all $n$'' rather than a $p$-value. A pooled ratio of
totals weights expensive tasks and is usually larger than the paired mean (62.2\% versus 42.0\% on the
workflow arena); we quote paired means and label pooled figures as such.

\paragraph{Ex-ante cost and quality.}
Replay arenas record what work actually cost. That number is a legitimate outcome measure but never enters a
selector, because ranking by measured duration is hindsight; the authored gyms write cost before execution
for the same reason. Permission is cost-blind, so this does not touch the permission claim. Quality is
judged identically for every arm (an LLM judge on the live agentic arena), and a cost is never reported
without matched quality. Costs are vectors; every comparison reduces one to a scalar, and
Appendix~\ref{sec:prune} shows an operation whose verdict depends on which component survives. Elapsed time
is among the reported resources; adding it changed no archived number, because every arena except the
lead-time family declares CPU only.

\paragraph{The ordering grid.}
An ordering number measured only against \code{stop\_eager} is ambiguous, because a stopping rule is already
a weak permission proxy. We therefore cross three termination rules --- none (\code{guard\_eager}), a
stopping rule (\code{stop\_eager}), and demand (\code{lazy}) --- with three orderings (fixed, cheapest node,
cheapest branch) on the five priced arenas --- the two authored gyms plus SimulCost, WfCommons and
DataSciBench, which is a different set from Table~\ref{tab:arenas}'s six, because an ordering grid needs a
cost column a selector can rank and the pool replays do not supply one per node. The identity check on the
``none'' row is per task, not pooled, because a sum can hide two tasks moving in opposite directions.

\section{Results: permission}
\label{sec:results}

Table~\ref{tab:glance} is the permission argument in one page; every row is reported against
\code{stop\_eager}, the strongest baseline. Two regions organize the results. We lead with
\emph{separation}: when a program is wider than the current request, demand excludes off-path work that an
eager rule may bill. We then give the \emph{equivalence} boundary: when the request's closure covers the
whole graph, demand has nothing to exclude and the policies must tie exactly. The remaining subsections move
the request and program shape between those regions.

\begin{table}[t]
\centering
\caption{Evidence at a glance. One row per study, reported against
\code{stop\_eager}. The last column is the control that had to pass for the row to mean anything; every one
of them did.}
\label{tab:glance}
\footnotesize
\begin{tabular}{p{0.15\linewidth}p{0.055\linewidth}p{0.10\linewidth}p{0.28\linewidth}p{0.24\linewidth}}
\toprule
\textbf{Study} & \textbf{Mode} & \textbf{$n$} &
\textbf{Headline vs \code{stop\_eager}} & \textbf{Control}\\
\midrule
WfCommons permission & replay & 303 requests &
\textbf{42.0\%} [37.5, 46.7]; 211 / 0 / 92 &
68 single-product instances give exactly 0.0\%\\
\addlinespace
WfCommons shape law & replay & 235 observations &
50.3\% (one product), 21.4\% (half), \textbf{0.0\%} (all) &
86 / 86 exact ties when the goal names everything\\
\addlinespace
Pool arenas (SWE-bench, SimulCost, Defects4J) & replay &
3{,}199 asks & \textbf{0.0\%}, bit-identical bills &
Predicted by Prop.~\ref{prop:equiv}; a saving here would be a bug\\
\addlinespace
DataSciBench, agent-authored & live & 95 goals, 28 tasks &
\textbf{12.8\%} clustered [7.7, 18.2]; 15 / 0 / 13 &
Quality matched on every task\\
\addlinespace
Portfolio position sweep & replay & 3{,}572 rows &
\textbf{0.0\%} asking first $\to$ \textbf{85.5\%} asking last &
893 / 893 all-products ties; 0 losses\\
\addlinespace
Program width, $N{=}3..6$ & replay & 10{,}680 rows &
capture \textbf{100\%} (\code{lazy}) vs \textbf{0\%} open / $\approx${}\textbf{50\%} settleable &
2{,}538 / 2{,}538 controls tie; deltas 100\% off-path\\
\addlinespace
Live release gate & live & 24 rows, 4 repos + Django &
saving \textbf{51.7\%} / \textbf{52.1\%} of container time; capture
\textbf{100\%} (\code{lazy}) vs \textbf{0\%} open / \textbf{46.5\%},
\textbf{41.5\%} settleable &
8 / 8 controls tie; quality matched; 0 losses\\
\addlinespace
Bazel native targets (exploratory) & replay & 419 targets, 2 packages &
\textbf{96.0\%} [95.0, 96.8]; 418 / 1 / 0; kind skew 99.1\% (test) $\to$
28.2\% (package) &
capture 100\% vs $\approx${}40\% for \code{stop\_eager}\\
\bottomrule
\end{tabular}
\end{table}

\subsection{The separation region}
\label{sec:sep}

Production scientific workflows contain ready work a goal does not require: an instance often has several
output products and a consumer wants some of them. We replay 154 WfInstances traces, 86 with more than one
output product, and issue 303 seeded product requests using measured CPU seconds as the cost.

Paired against \code{stop\_eager}, \code{lazy} saves a mean of \textbf{42.0\%} (95\% CI [37.5\%, 46.7\%]),
median 37.0\%, with \textbf{211 wins, 0 losses, 92 ties}, $p < 0.001$, rank-biserial $+1.00$. The wasted
materialization ratio is Proposition~\ref{prop:sep} as a measurement: \code{stop\_eager} spends 62.2\% of
what it materializes on work that never fed the answer, and \code{lazy} spends none. The negative control is
the single-product stratum: on the 68 requests whose instance has one output product, the saving is exactly
0.0\% (cheaper on 0/68), because there the requested product's closure is the whole workflow; on the 235
multi-product requests it is 54.2\% (cheaper on 211/235).

\paragraph{Aggregate bills for this study.}
\begin{center}
\small
\begin{tabular}{lrrr}
\toprule
\textbf{Policy} & \textbf{Materialized CPU\,(s)} & \textbf{MR} & \textbf{Wasted MR}\\
\midrule
\code{guard\_eager} & 29{,}199{,}411 & 1.000 & 0.761\\
\code{stop\_eager} & 18{,}490{,}856 & 0.633 & 0.622\\
\code{lazy} & 6{,}980{,}658 & 0.239 & 0.000\\
\bottomrule
\end{tabular}
\end{center}

The 92 ties partition exactly as Corollary~\ref{cor:order} predicts. Against \code{guard\_eager} all 235
multi-product requests are cheaper, so every one of them contains off-path work. Sixty-eight ties are
therefore \emph{forced} --- the single-product stratum, where there was nothing to decline --- and the
remaining 24 are \emph{order-contingent}: off-path work existed and the alphabetical visit order happened to
close the goal before \code{stop\_eager} was offered it. None of the 92 is evidence that a stopping rule
suffices.

\subsection{The equivalence region}
\label{sec:equiv}
\label{sec:boundary}

Proposition~\ref{prop:equiv} predicts exact ties on flat disjunctive pools, and all three software and
simulation arenas have that native shape.

\begin{center}
\small
\begin{tabular}{lrrr}
\toprule
\textbf{Arena} & \textbf{$n$} & \textbf{vs \code{stop\_eager}} & \textbf{vs \code{guard\_eager}}\\
\midrule
SWE-bench Verified & 500 tasks & 0.0\%, identical on all 500 & 78.8\%, cheaper on 442/500\\
SimulCost & 2{,}304 problems & 0.0\%, identical on all 2{,}304 & 73.0\% mean, cheaper on 2{,}208\\
Defects4J & 395 bugs & 0.0\%, identical on all 395 & 29.9\% pooled; 70.3\% on 185 repairable\\
\bottomrule
\end{tabular}
\end{center}

On SWE-bench Verified the two policies materialize the identical integer cost, 574{,}739, on every task;
quality is matched in all three arenas (445 solved, matched, and 185 solved respectively). Set equality
predicts the identical integer; statistical similarity would not. Had we reported a saving here it would
have been a defect: the only way to skip a pool candidate before attempting it is to consult an outcome the
policy is not entitled to know. The \code{guard\_eager} column --- 78.8\%, 73.0\%, 29.9\% --- shows what a
paper reporting only a naive-eager comparison would be measuring: the absence of a cheap predicate rather
than the presence of a new mechanism.

The coincidence has classical precedent. Pingali and Arvind showed that data-driven evaluation of a
suitably transformed program ``performs exactly the same computation as a demand-driven evaluation of the
original program''~\cite{pingali1985demand}, and Weiser's slice is ``a minimal form which still produces
that behavior''~\cite{weiser1984slicing}. Those results explain our equivalence region; \system{} adds the
agent-runtime semantics and experimentally maps the boundary where equivalence ends. A pool arena is a
program equal to its own slice, and the zeros follow.

\subsection{The shape law}
\label{sec:shape}
Setting the goal on the 86 multi-product instances to one product, half the products, or all products --- a
controlled intervention --- gives 235 observations:

\begin{center}
\small
\begin{tabular}{lrrr}
\toprule
\textbf{Goal scope} & \textbf{$n$} & \textbf{Mean off-path CPU} & \textbf{Mean saving vs \code{stop\_eager}}\\
\midrule
One product & 86 & 55.3\% & 50.3\%\\
Half the products & 63 & 24.9\% & 21.4\%\\
All products & 86 & 0.0\% & 0.0\% (86/86 exact zeros)\\
\bottomrule
\end{tabular}
\end{center}

The all-products row is the falsification test and it passes to the bit. Within one-product scopes the
Spearman correlation between off-path CPU fraction and saving is $+0.931$: instances under 25\% off-path
save 3.3\% ($n{=}28$), instances above 75\% save 85.1\% ($n{=}40$). Sweeping eight seeded ready-wave orders
isolates Proposition~\ref{prop:orderinv} and Corollary~\ref{cor:order}. The sweep is over a fixed slice ---
the first 60 one-product requests of the population above, every arm on the same seed --- and its bills are
counted in \emph{materialized tasks} rather than CPU seconds, because the question is whether the executed
\emph{set} moves, and a count answers it without weighting by task duration. \code{lazy}'s pooled count is
bit-identical across all eight seeds (4{,}082 tasks), \code{guard\_eager}'s is too (10{,}394), and
\code{stop\_eager} moves from 7{,}652 to 8{,}132 --- a 6.0\% spread that is scheduling luck. \code{lazy}'s
margin over \code{stop\_eager} clears that spread on every seed (46.7--49.8\%, median 48.6\%), so the
separation is not an ordering artifact.

\subsection{Agent-authored graphs}
\label{sec:datasci}

DataSciBench carries the mechanism into live, agent-authored graphs, and is unusually well suited to it: its
Data Interpreter emits an explicit dependency-structured plan, and the benchmark evaluates generated
data-science code under task-specific criteria~\cite{zhang2026datascibench}. That supplies what a replayed
workflow cannot --- an agent-authored graph and executable work rather than a final answer alone. In our
adaptation, an LLM plans once, real code executes each step, and individual deliverables become separate
goals; an LLM judges their quality.

The measured population is reached in two steps: of 40 candidate tasks, 3 are skipped before any arm runs
because their benchmark inputs are absent or unreadable, and of the 37 attempted, 9 yield no measurement
because the planner's own generated program fails at the task level. That leaves \textbf{28 tasks and 95
goals}.

\begin{center}
\small
\begin{tabular}{p{0.46\linewidth}p{0.48\linewidth}}
\toprule
\textbf{Quantity} & \textbf{Value}\\
\midrule
Wins / losses / ties vs \code{stop\_eager} & 33 / 0 / 62\\
Task-clustered mean saving over all 95 goals & 12.8\% (95\% CI [7.7\%, 18.2\%]),
15/0/13, $p<0.001$, rb $+1.00$\\
Mean over 33 goals with nonzero observed savings & 39.4\%\\
Best single goal & 86.7\%\\
vs \code{guard\_eager} & 45.1\% pooled, 41.6\% mean per task\\
Quality matched & every task\\
\bottomrule
\end{tabular}
\end{center}

The two statistics answer two questions. Among the \textbf{33 of 95 goals with nonzero observed savings},
permission saves a mean of \textbf{39.4\%} of real tokens, and up to 86.7\% on one goal. This is the
positive-difference subset, not every goal whose plan contains off-path work. Averaged over all goals, the
task-clustered mean is 12.8\%. The first figure describes the magnitude of observed wins; the second
describes the population average. Both are reported, and Section~\ref{sec:discussion} returns to why public
corpora sit close to the fully-demanded corner. The tie structure explains the difference. Of the 62 ties,
\textbf{13 are forced}:
the goal's closure covered the whole plan. The other \textbf{49 are order-contingent}: 133 planned nodes
were left unrun by both arms under the recorded visit order, so \code{stop\_eager} was never offered work it
had no principled reason to refuse. Those 62 are not evidence the mechanism is unnecessary; they are
evidence that a predicate-based executor is one renaming away from paying.

\subsection{Planning incentives and graph breadth}
\label{sec:planning-incentive}

Here off-request work being free means \emph{free to describe}, not free to execute. Under demand billing, a
branch outside today's closure stays in the program but remains abstract and incurs no current bill; every
node inside the closure still runs and is paid. This creates an economic reason to favor recall --- include
every plausibly useful branch --- over precision --- include only today's branch. Under eager execution the
same surplus runs and becomes expensive.

We tested whether stating that incentive was enough. In a pre-registered, planning-only intervention on the
same 28 tasks, both prompts asked \code{gpt-5.6-terra} for a reusable program; one billed every ready step
and the other only the requested closure. Three paired repeats produced \textbf{168 of 168} valid plans, but
no broadening: readiness billing put 4 of 482 nodes off path (0.8\%) and demand billing 0 of 483 (0.0\%),
with mean plan size unchanged at 5.74 against 5.75. The paired difference in each plan's own off-path
fraction is $-0.9$ percentage points (95\% CI [$-2.4$, $0.0$], $p=0.501$).

The null concerns the planner's response, not the execution economics: Section~\ref{sec:width} measures the
price of one additional product as $+22.5\%$ under \code{stop\_eager} and $+0.0\%$ under \code{lazy}.
However, these single-request tasks anchor every named deliverable, so extra breadth can appear only as
unnamed internal work. A decisive test needs benchmarks with broad standing programs and sequences of
narrower asks, where optional branches are concrete future deliverables --- the regime formed when authored
agent graphs are reused across scheduled or event-driven requests
(Sections~\ref{sec:intro} and~\ref{sec:future}).

\subsection{Both regions on one program}
\label{sec:portfolio}

The composed portfolios (Section~\ref{sec:arenas}) put a disjunctive pool inside a conjunctive program, so
both regions appear under one cost column. At $N=3$ there are 893 programs (764 SimulCost, 129 Defects4J)
and 3{,}572 rows. Controls first: \textbf{893 of 893} all-products rows tie exactly, there are \textbf{0}
quality mismatches, and \textbf{0} losses anywhere. Products within a program are ordered, and we sweep
which one the goal requests. The table below is the \emph{settleable} stratum only --- products some
candidate can close, which is the stratum where \code{stop\_eager} has a stopping event available and the
position effect is therefore visible at all. The open stratum, where no candidate settles, is reported
separately below and shows no position dependence, because a goal that never closes gives goal stopping
nothing to do at any position.

\begin{center}
\small
\begin{tabular}{llrrrr}
\toprule
\textbf{Arena} & \textbf{Requested position} & \textbf{$n$} & \textbf{Mean} & \textbf{Median} & \textbf{Exact ties}\\
\midrule
Defects4J & 0 (first) & 65 & 0.0\% & 0.0\% & 65 / 65\\
Defects4J & 1 & 57 & 74.3\% & 78.0\% & 0\\
Defects4J & 2 (last) & 60 & 85.5\% & 87.3\% & 0\\
SimulCost & 0 (first) & 743 & 7.5\% & 0.0\% & 674 / 743\\
SimulCost & 1 & 735 & 67.3\% & 79.8\% & 68\\
SimulCost & 2 (last) & 732 & 83.5\% & 91.4\% & 2\\
\bottomrule
\end{tabular}
\end{center}

Program, policy, cost column, and graph are fixed; the \emph{only} thing that changes down the table is
which deliverable was asked for (Figure~\ref{fig:position}). Asking for the product that sorts first is the
equivalence region --- an exact tie on 65 of 65 Defects4J programs and 674 of 743 SimulCost programs,
because \code{stop\_eager} usually closes the goal before it is offered anything else. Asking for a
later product gives 74--86\%, because \code{stop\_eager} has already paid for the earlier ones. Over the
order-fair sweep the settleable stratum saves 52.5\% (SimulCost, $n{=}2{,}210$) and 51.5\% (Defects4J,
$n{=}182$). The open stratum --- products no candidate ever settles, declared in the freeze before the
numbers were seen --- saves 70.1\% (SimulCost, $n{=}82$) and 65.5\% (Defects4J, $n{=}205$), with no ties and
no losses at any position, because a goal that never closes gives goal stopping nothing to stop on wherever
the product sits. The two strata are reported apart and never blended. Restricting instead to the
lexicographically first product --- one row per program, the most \code{stop\_eager}-favorable ask in the
suite --- gives 12.8\%, and we give both that and the sweep mean, since quoting either alone is the same
error in opposite directions. Varying the derived shared-setup share across $0\times$, $1\times$, and
$2\times$ the program median moves the SimulCost settleable mean only from 53.2\% to 51.9\%.

\begin{figure}[t]
\centering
\begin{tikzpicture}
\begin{axis}[
  ybar, width=0.66\linewidth, height=5.3cm,
  bar width=13pt,
  xmin=-0.35, xmax=2.35,
  xtick={0,1,2},
  xticklabels={first,middle,last},
  ymin=-7, ymax=112, ytick={0,25,50,75,100},
  ylabel={mean saving vs \code{stop\_eager} (\%)},
  xlabel={which of the three products the goal requested},
  nodes near coords={\pgfmathprintnumber[fixed,precision=1]{\pgfplotspointmeta}},
  nodes near coords style={font=\scriptsize},
  tick label style={font=\scriptsize}, label style={font=\scriptsize},
  legend style={font=\scriptsize, draw=none, fill=none,
                at={(0.02,0.99)}, anchor=north west},
  legend cell align=left,
  axis lines=left,
]
\addplot[fill=black!15, draw=black, point meta=explicit,
         nodes near coords style={anchor=south east, font=\scriptsize}]
  coordinates {(0,1.0) [0.0] (1,74.3) [74.3] (2,85.5) [85.5]};
\addlegendentry{Defects4J}
\addplot[fill=black!50, draw=black, point meta=explicit,
         nodes near coords style={anchor=south west, font=\scriptsize}]
  coordinates {(0,7.5) [7.5] (1,67.3) [67.3] (2,83.5) [83.5]};
\addlegendentry{SimulCost}
\end{axis}
\end{tikzpicture}
\caption{Same program, same policies, same cost column --- only the ask
changes. Requesting the product that sorts first is the equivalence region; requesting a later product is
the separation region. A Defects4J product is a repair target; a SimulCost product is a design study. The
zero-valued first Defects4J bar is drawn as a 1-point stub solely for visibility; its label and reported
value remain 0.0\%.}
\label{fig:position}
\end{figure}

\subsection{Program width: opportunity versus capture}
\label{sec:width}

Program width is the dial that separates what an experiment manufactures from what a policy achieves. Asking
for one product of three places roughly two thirds of the program outside the goal, and one of six places
five sixths, so a raw percentage must rise with $N$ even where the policy has not improved. Rerunning the
construction at $N=3,4,5,6$ (2{,}679, 2{,}668, 2{,}675, 2{,}658 one-product rows; all \textbf{2{,}538}
all-products controls tie exactly) and applying the event-level accounting of Section~\ref{sec:theory}
separates four facts (Figure~\ref{fig:width}):

\begin{enumerate}[topsep=2pt,itemsep=1pt]
  \item \textbf{Experiment-created opportunity} $O$ rises with $N$, tracking
  $(N-1)/N$ almost exactly --- 65.1\% at $N=3$ to 82.7\% at $N=6$ in the open Defects4J stratum, against
  66.7\% and 83.3\%. This is arithmetic we chose.
  \item \textbf{\system{} captures 100\%} of executable off-path cost in every
  arena, stratum, and width.
  \item \textbf{\code{stop\_eager} captures 0\%} under open goals and
  approximately \textbf{50\%} under settleable goals.
  \item \textbf{100\%} of every nonzero cost difference between the policies
  is off-path work; shared setup and requested-product costs do not move.
\end{enumerate}

\begin{center}
\small
\begin{tabular}{rrrrr}
\toprule
$N$ & \textbf{\code{lazy}, all strata} & \textbf{stopping, open} &
\textbf{stopping, D4J repairable} & \textbf{stopping, SC settleable}\\
\midrule
3 & \textbf{100.0\%} & 0.0\% & 52.4\% & 50.3\%\\
4 & \textbf{100.0\%} & 0.0\% & 46.5\% & 50.5\%\\
5 & \textbf{100.0\%} & 0.0\% & 47.9\% & 50.5\%\\
6 & \textbf{100.0\%} & 0.0\% & 49.7\% & 50.7\%\\
\bottomrule
\end{tabular}
\end{center}

\begin{figure}[t]
\centering
\begin{tikzpicture}
\begin{axis}[resultaxis,
  title={(a) Opportunity: set by the experiment},
  xlabel={products per program $N$},
  ylabel={off-path \% of guard-eager bill},
  xmin=2.75, xmax=6.25, ymin=58, ymax=90,
  xtick={3,4,5,6}, ytick={60,70,80,90},
  legend pos=north west, legend cell align=left,
]
\addplot[black!50, dashed, thick, mark=none]
  coordinates {(3,66.7)(4,75.0)(5,80.0)(6,83.3)};
\addlegendentry{$(N{-}1)/N$ reference}
\addplot[black, thick, mark=*, mark size=1.7pt]
  coordinates {(3,65.1)(4,74.1)(5,79.4)(6,82.7)};
\addlegendentry{Defects4J, open}
\addplot[black, thick, mark=square*, mark size=1.7pt, densely dotted]
  coordinates {(3,66.0)(4,74.4)(5,79.6)(6,83.0)};
\addlegendentry{SimulCost, settleable}
\end{axis}
\end{tikzpicture}\hfill
\begin{tikzpicture}
\begin{axis}[resultaxis,
  title={(b) Capture: set by the policy},
  xlabel={products per program $N$},
  ylabel={\% of opportunity avoided},
  xmin=2.75, xmax=6.25, ymin=-8, ymax=152,
  xtick={3,4,5,6}, ytick={0,25,50,75,100},
  legend style={font=\scriptsize, draw=none, fill=none,
                at={(0.5,1.0)}, anchor=north, column sep=6pt},
  legend columns=2, legend cell align=left,
]
\addplot[black, very thick, mark=*, mark size=1.7pt]
  coordinates {(3,100)(4,100)(5,100)(6,100)};
\addlegendentry{\code{lazy}, every stratum}
\addplot[black!60, thick, mark=square*, mark size=1.7pt, densely dotted]
  coordinates {(3,50.3)(4,50.5)(5,50.5)(6,50.7)};
\addlegendentry{\code{stop\_eager}, settleable}
\addplot[black!60, thick, mark=triangle*, mark size=2.2pt, dashed]
  coordinates {(3,52.4)(4,46.5)(5,47.9)(6,49.7)};
\addlegendentry{\code{stop\_eager}, repairable}
\addplot[black, thick, mark=o, mark size=1.7pt]
  coordinates {(3,0)(4,0)(5,0)(6,0)};
\addlegendentry{\code{stop\_eager}, open}
\end{axis}
\end{tikzpicture}
\caption{Separating what the experiment manufactures from what the policy
achieves. Panel (a), the share of the guard-eager bill outside the demanded product, tracks $(N{-}1)/N$.
Panel (b), the fraction of that opportunity each policy avoids, is flat in $N$: \code{lazy} captures all of
it everywhere; goal stopping captures none when the demanded product cannot settle and about half when it
can.}
\label{fig:width}
\end{figure}

The 100\%/0\% split under an open goal follows directly from the two rules: when no candidate can settle the
demanded product the goal never closes, so \code{stop\_eager} has nothing to stop on. The near-50\% split
under settleable goals is a robust pattern under this exhaustive order-fair construction, not a constant; a
deployment whose requests name early products will favor \code{stop\_eager}, and one naming late products
will favor \code{lazy}. Program-clustered inference gives no losses and $p<0.001$ in every stratum.

\paragraph{The sweep in absolute units.}
Both panels are ratios. Re-reading the same frozen rows in the unit they were measured in answers the
question a planner faces: if one more product is described, who pays? On Defects4J measured wall seconds,
program-clustered:

\begin{center}
\small
\begin{tabular}{rrrrr}
\toprule
$N$ & programs & \code{guard\_eager} & \code{stop\_eager} & \code{lazy}\\
\midrule
3 & 129 & 41{,}002 & 30{,}991 & 10{,}119\\
4 &  96 & 54{,}525 & 42{,}175 & 10{,}125\\
5 &  77 & 67{,}917 & 52{,}901 & 10{,}128\\
6 &  63 & 81{,}173 & 62{,}786 & 10{,}121\\
\midrule
\multicolumn{2}{l}{slope per product} & $+13{,}391$ & $+10{,}611$ & $\mathbf{+0.98}$\\
\multicolumn{2}{l}{as \% of own mean} & $+21.9\%$ & $+22.5\%$ & $\mathbf{+0.0\%}$\\
\bottomrule
\end{tabular}
\end{center}

\code{lazy} varies by a factor of 1.0009 across four widths, because it pays shared setup plus the one
requested product --- a quantity with no $N$ in it. Every eager rule carries $N$ as a factor, and
\textbf{the strongest eager baseline is linear too}: \code{stop\_eager} grows at $+22.5\%$ per described
product, essentially \code{guard\_eager}'s slope. Goal stopping lowers the intercept, not the exponent,
because a stopping predicate refers to what has been computed and never to what was requested. SimulCost is
concordant in direction (eager $+21.6\%$, \code{lazy} $+0.2\%$ per product) and much noisier; we quote
Defects4J as the witness. Under demand-driven semantics, program breadth stops being a budget decision.

\subsection{A live money witness across four repositories}
\label{sec:live}

The live release gate (Section~\ref{sec:arenas}) puts real spending on the clock: each candidate patch runs
in a container and we meter its execution seconds. Two of the three issues are off-path, so the question is
concrete --- does a policy pay for real containers running real test suites for issues nobody asked about?

Both strata appear here, on the same cost column. A requested issue is \emph{settleable} on a row if some
candidate patch passes the project's test suite during our measurement, and \emph{open} if none does. The
distinction is not cosmetic: goal stopping can decline work only after the goal closes, so on an open row
\code{stop\_eager} coincides with \code{guard\_eager} node for node, while on a settleable row its predicate
fires and it declines whatever the visit order had not yet reached. Two runs are reported --- four gates
across four repositories, and four further gates on Django alone with a different patcher --- and the
strata are never pooled.

\begin{center}
\small
\begin{tabular}{lrrr}
\toprule
\textbf{Repository} & \textbf{$n$} & \textbf{Mean saving} & \textbf{Range}\\
\midrule
\multicolumn{4}{l}{\emph{Four repositories, one gate each}}\\
astropy & 3 & 49.4\% & 0.0--98.7\%\\
django & 3 & 65.8\% & 15.7--93.2\%\\
matplotlib & 3 & 66.5\% & 9.6--95.2\%\\
requests & 3 & 24.9\% & 0.0--57.7\%\\
All four & 12 & \textbf{51.7\%} & 0.0--98.7\%\\
\midrule
\multicolumn{4}{l}{\emph{Django, four further gates}}\\
django & 12 & \textbf{52.1\%} & 0.0--93.2\%\\
\bottomrule
\end{tabular}
\end{center}

Saving is against \code{stop\_eager}, the strongest eager arm, and the two runs give \textbf{10 wins, 2
ties, 0 losses} each. Every tie is a row where the requested issue settled early enough that goal stopping
declined the same work permission did; none is a row where \system{} paid more. Quality is matched on every
row, all eight all-products controls tie at exactly 0.0000\%, and \code{react} (Section~\ref{sec:method})
ties \code{stop\_eager} on all twenty-four rows, since it too avoids undemanded work only by reaching the
goal first.

The capture decomposition separates what the request made unnecessary from who actually avoided it, and it
is where the two strata diverge.

\begin{center}
\small
\begin{tabular}{llrrr}
\toprule
\textbf{Run} & \textbf{Stratum} & \textbf{$n$} & \textbf{\code{stop\_eager}} & \textbf{\system{}}\\
\midrule
Four repositories & settleable & 6 & 46.5\% & 100\%\\
                  & open       & 6 & \textbf{0.0\%} & 100\%\\
\midrule
Django            & settleable & 6 & 41.5\% & 100\%\\
                  & open       & 6 & \textbf{0.0\%} & 100\%\\
\bottomrule
\end{tabular}
\end{center}

\system{} captures every unit of off-path container time on all twenty-four rows. Goal stopping captures
a little under half of it where the goal closes, nothing at all where it does not, and the open-stratum
zero is an identity rather than a small effect --- the predicate cannot refer to what was requested, only to
what has been computed, so with the goal still open the strongest eager arm and the naive one are literally
the same arm. This is the live counterpart of the replayed strata in Sections~\ref{sec:portfolio}
and~\ref{sec:width}, and it reproduces their 40--50\% on metered container seconds, at 46.5\% and 41.5\%.

Three qualifications. The settleable means are averages over a split distribution rather than typical rows:
the six four-repository rows read $0, 0, 0, 79.2, 100, 100$ and the six Django rows read $0, 0, 0,
49.2, 100, 100$. Goal stopping recovers everything when the settle precedes the off-path work and nothing
when it follows, so the mean describes the arena's visit orders, not a rate any single gate should be
expected to show. This remains a case study rather than a population claim: four program clusters per run
bound the smallest exact Wilcoxon $p$ at 0.125, and the raw percentages carry the $N=3$ width dependence of
Section~\ref{sec:width}. Finally the two runs draw on disjoint issue pools, so they are two measurements of
the same structure rather than a repetition, and we report them apart and never pool them. Both were
materialized by the same patcher as every other live number in this paper.

\subsection{A second structural domain: native build targets}
\label{sec:bazel}

Bazel~\cite{bazel} is a build system that already computes the dependency closure of a requested target. We
use it as a second structural domain to check that the mechanism concerns graph shape rather than agents:
what we replay is the union graph of a whole package, asking the engine for one native target at a time so
the arena's off-path work is the rest of the package. Two production packages, \code{bazel-skylib} (284
targets) and \code{buildtools} (135 targets), are profiled once under a full-package build with measured
action CPU seconds, and every one of their 419 native targets is requested in turn as an
\textbf{AllOutputs} goal. The permission spine --- \code{guard\_eager}, \code{stop\_eager}, \code{lazy} ---
is run on each; unprofiled administrative parent actions are admitted at zero cost rather than priced by us.

\begin{center}
\small
\begin{tabular}{lrrr}
\toprule
\textbf{Slice} & \textbf{$n$} & \textbf{Mean saving vs \code{stop\_eager}} & \textbf{+ / = / $-$}\\
\midrule
Both packages & 419 & \textbf{96.0\%} [95.0, 96.8] & 418 / 1 / 0\\
\quad \code{bazel-skylib} & 284 & 97.9\% & \\
\quad \code{buildtools} & 135 & 91.9\% & \\
\midrule
\multicolumn{4}{@{}l}{\emph{By product kind}}\\
\quad test & 249 & 99.1\% & 249 / 0 / 0\\
\quad docs & 58 & 96.4\% & 58 / 0 / 0\\
\quad library or binary & 108 & 91.0\% & 108 / 0 / 0\\
\quad distribution package & 4 & 28.2\% & 3 / 1 / 0\\
\bottomrule
\end{tabular}
\end{center}

\paragraph{Kind skew.}
The saving runs from 99.1\% for a test target to 28.2\% for a distribution package, and the absolute bills
on \code{bazel-skylib} say why. A test target bills 0.18 CPU-s under \code{lazy} against 29.1 under
\code{stop\_eager} and 45.2 under \code{guard\_eager}: its closure is a few source files and one runner, so
nearly the whole union is off-path and stopping, which must build in visit order until the target lands,
pays for most of it. A docs target bills 0.41 against 15.8. A distribution package bills 0.78 against 1.19,
because its closure \emph{is} most of the package, so there is little outside it to refuse, and the one tie
in the table is a package whose closure covers the union. This is Section~\ref{sec:shape}'s law --- saving
tracks the share of the graph outside the requested closure --- read off product kinds rather than off a
controlled scope sweep, and it is what makes the arena informative: the same union graph yields four
different answers depending on what kind of thing was asked for.

\paragraph{Opportunity and capture.}
Capture is the quantity that transfers, and it carries over intact: \code{lazy} takes 100\% of the
executable opportunity in both packages, \code{stop\_eager} 42.3\% and 38.7\% --- the 40--50\% pattern of
the settleable stratum (Section~\ref{sec:width}) again, here because every build target does settle and
stopping fires somewhere in the alphabetical order. The 96.0\% headline is the larger number but the
smaller claim: a one-target ask against a whole-package union manufactures a very large opportunity
(99.5\% of full eager cost on \code{bazel-skylib}, 93.8\% on \code{buildtools}), so that percentage is
mostly a fact about request shape and should not be read against WfCommons's 42.0\%. Two production
packages are not the Bazel ecosystem, the targets within a package are clustered, and none of this is
agentic; the row is labeled exploratory throughout. What it establishes is that the two regions and the
capture split appear unchanged in a domain whose graphs, costs, and request language were none of them
authored by us.

\subsection{Contingent irreversible actions}
\label{sec:contingent}

The commit barrier required by invariant I2 (Section~\ref{sec:invariants}) is independent of permission: it
blocks an unauthorized irreversible act under every policy, but does not avoid the cost of preparing that
act. On an authored family of $K = 1 \ldots 4$ contingent commit paths, every irreversible node has its own
barrier, released only by evidence the run itself produces. Each path is an investigation costing 20 feeding
an irreversible order costing 5; the goal names the one response the incident invoked, and we sweep which
one that is.

\begin{center}
\small
\begin{tabular}{lrrrrr}
\toprule
& $K{=}1$ & $K{=}2$ & $K{=}3$ & $K{=}4$ & slope\\
\midrule
Irreversible acts (every arm) & 1 & 1 & 1 & 1 & ---\\
\code{guard\_eager} cost & 26 & 46 & 66 & 86 & $+20.0$\\
\code{stop\_eager} cost & 26 & 46 & 66 & 86 & $+20.0$\\
\code{lazy} cost & 26 & 26 & 26 & 26 & $\mathbf{0.00}$\\
\bottomrule
\end{tabular}
\end{center}

Exactly one irreversible action occurs on every arm at every $K$: safety belongs to the barrier and is never
credited to laziness. The preparation cost is the claim. Describing one more contingency costs 20 on every
eager arm --- \code{stop\_eager} does not even lower the intercept, billing 86 at $K=4$ like the naive rule,
because every investigation is offered before any order can close the goal --- and nothing under permission.
Two controls hold: with authorization refused after the investigations are paid, no arm acts and the eager
arms have bought all four cones for nothing (81 versus 21); with the barrier removed, every arm is refused
identically. Prices are authored, so only the two slopes transfer.

\section{Results: physical operations}
\label{sec:physops}

Section~\ref{sec:results} evaluated permission --- the constitutive component of the policy tuple in
Definition~\ref{def:policy} --- which decides whether a ready node may run. This section holds permission
fixed and evaluates order and projection, the tuple's other two components, alongside prune, reuse,
fidelity, fusion, and speculation; Table~\ref{tab:opsglance} summarizes their matched-control results, while
commit was measured with permission in Section~\ref{sec:contingent}.

Projection and fidelity lead because their positive remainders add to permission by narrowing how much of a
demanded node becomes concrete. Reordering follows as the strongest substitute for permission and as an
exact boundary on when scheduling can change a bill. Prune, reuse, fusion and speculation are developed in
Appendix~\ref{app:ops}: pruning is a demand-independent terminal action, reuse and fusion mostly substitute
for work permission already removed, and speculation moves in the opposite direction by buying unasked work
to reduce latency. Section~\ref{sec:physops-summary} reads all seven together.

\begin{table}[tp]
\centering
\caption{Physical operations at a glance. Remainder is demand-specific: lever
worth to \code{lazy} minus lever worth to the matched eager arm. A negative remainder means the eager
control captured more of the lever's absolute effect than \code{lazy} did, not that the lever failed. The
final block is not a remainder: speculation's rows report a cost/latency pair against the no-speculation
lazy arm, where positive latency is faster and negative cost is more expensive. Speculation's latency is
measured in \emph{rounds containing real work}, never in seconds; Appendix~\ref{sec:speculate} says why a
clock cannot be used here.}
\label{tab:opsglance}
\scriptsize
\setlength{\tabcolsep}{3.5pt}
\begin{tabular}{L{0.10\linewidth}L{0.21\linewidth}L{0.085\linewidth}L{0.26\linewidth}L{0.23\linewidth}}
\toprule
\textbf{Operation} & \textbf{Arena} & \textbf{Costs} & \textbf{Result} & \textbf{Read it as}\\
\midrule
Project & Live DataSciBench, 95 goals & live & \textbf{$+3.2\%$}; narrow asks bill $-14.3\%$; no significant blind-judge preference ($p=0.29$) & Fewer tokens; no detected preference\\
Project & Authored isolation pair & authored & \textbf{$+84.3\%$} (named consumer) vs \textbf{$+0.0\%$} (no consumer) & Pays iff the dropped field has another consumer\\
Project & CI verify job, 2 third-party repos & \textbf{measured cpu} & \textbf{$+89.1\%$}, \textbf{$+80.1\%$} (named, off-path); \textbf{$+0.0\%$} with no consumer and when both demanded & Same shape, prices from a clock; controls zero independently\\
Project & Authored sessions, coverage sweep & authored & $+34.3\%$, $+35.6\%$; $+35.4\% \to +72.9\%$ & Grows as session coverage falls\\
\addlinespace
Fidelity & Authored ladder, named consumer / no consumer / needed & authored & $+96.1\%$ / $+0.0\%$ / $+0.0\%$ & Projection one granularity down; sign pattern, not magnitude\\
\addlinespace
Reorder & No rule may stop the run; 5 arenas, every task & ex-ante, replay, live & \textbf{0.0\%} bill change, incl.\ 8 random orders & Exact identity under C1--C3\\
Reorder & Authored screening, 3 programs & true ex-ante & \textbf{$+0.0\%$} (both arms take 57\%) & Disjunctive pool: permission \emph{is} the stopping rule\\
Reorder & Live DataSciBench, 95 goals & live & \textbf{$-4.2\%$} node, $-5.6\%$ branch; recovers 21.4\% of the gap & A fifth of what permission refuses\\
Reorder & Authored lifecycle, inversion pair & authored & \textbf{$-180.7\%$}, $-231.9\%$ to \code{stop\_eager}; cheapest-first demand $\equiv$ \code{lazy} & High-variance substitute; permission immune\\
Reorder & Per task, 178 conjunctive tasks, 3 arenas & replay, live & Eager inverted on 11.7\% (CPU) / \textbf{16.8\%} (tokens), worst $-72.2\%$; \textbf{lazy inverted 0} & Inversion is real at real prices; lazy never\\
Reorder & Live materializer, own calls per leg & live & Executed set identical 8/8; bill $0.6\%$ across orderings vs $2.0\%$ within one & Identity survives nondeterminism\\
Reorder & Unpriced live plan & live & Every selector picks one schedule, 8/8 & Cost-ranked order is \emph{undefined} on a first pass\\
\addlinespace
Prune & Authored screening, 3 programs & authored & \textbf{$+0.0\%$}; $28\to203$ when needed & New terminal; not a disguised stop\\
Prune & Authored lead time, deadline 5 days & authored cpu + wall & Remainder \textbf{$+0.0\%$}, cost $-625\%$, on time \textbf{1/1} vs \textbf{0/1} & Large absolute worth, zero remainder --- both true\\
\addlinespace
Reuse & Live-priced DataSciBench follow-ups, 28 tasks & archived live & \textbf{$-7.6\%$}; flat vs session length & Corpus sits at coverage $\approx 1.0$\\
Reuse & Authored coverage sweep & authored & \textbf{$-73.8\% \to -15.3\%$}, no zero crossing; arms bit-identical at 25\% & Partial substitute for permission\\
Reuse & Swept to the floor: 3 widths $\times$ 3 lengths & authored & Never positive; \textbf{exactly $+0.0\%$} at one touched product & Touched count, not coverage, is the axis\\
\addlinespace
Fusion & Authored pair, both / off-path & authored & $+0.0\%$ / $\mathbf{-32.0\%}$ & Work permission declined is duplicate work fusion cannot merge\\
\addlinespace
Speculate & Dial corners, both directions & authored & Budget $0$ $\equiv$ \code{lazy} on cost \emph{and} materialized-node set; budget $\infty$ reaches \code{eager}'s set & Eager is a setting of speculation\\
Speculate & Standing sessions, coverage $\ge 50\%$, nothing invalidated & authored & Cost \textbf{exactly $+0.0\%$}, latency \textbf{$+11$ to $+22\%$} of rounds & Free latency\\
Speculate & Same sweep, coverage $<50\%$ or any invalidation & authored & Cost $-8$ to $-25\%$, latency $+8$ to $+20\%$ & A priced trade\\
Speculate & Same sweep, coverage $12.5\%$ & authored & Cost $-30.1\%$, latency \textbf{$+0.0\%$} & Strictly dominated\\
Speculate & Single-ask family, 3 programs & authored & Hit rate \textbf{0\%} by construction & Every single-ask benchmark reports speculation as dominated\\
\bottomrule
\end{tabular}
\end{table}

\subsection{Projection}
\label{sec:project}

\paragraph{Live LLM steps.}
Across \textbf{126 of the 590 measured DataSciBench nodes} --- every one whose consumers wanted less than it
produced --- whole asks consumed 290{,}471 tokens and narrow asks \textbf{248{,}968}, a \textbf{14.3\%}
saving. The two figures answer two questions, as with permission in
Section~\ref{sec:datasci}: 14.3\% is what projection is worth where the precondition holds, and the
\textbf{$+3.2\%$} remainder is that saving diluted across a corpus where it holds on roughly a fifth of
nodes. Projection is worth 2.2\% to \code{stop\_eager} and 5.5\% to \code{lazy}, and the reason the
remainder is positive is structural: what a node may
omit depends on who is asking, so the narrowing an eager runtime can justify from the program text alone is
strictly smaller than the narrowing a goal licenses.

That saving is admissible only if the narrowed field is as good as the same field asked alongside its
siblings, which no token count can check. Judged blind over all \textbf{151} field pairs --- provenance
hidden, and which answer appears first assigned by a hash of the pair rather than by which one is cheap ---
the verdicts are 94 equivalent, 33 favoring the \emph{narrow} answer and 24 the whole one; an exact
two-sided sign test on the 57 decisive pairs gives \textbf{$p=0.29$}. The blind judge therefore detects no
significant preference between the arms; this is not an equivalence or non-inferiority claim. We report the
14.3\% token reduction alongside that result. The unblinded figure is not measuring the answers: an earlier
instrument labeled the candidates and always showed the projected one first, and
re-scored blind, identical content is judged at least as good 89.5\% of the time in the first slot against
78.7\% in the second --- an eleven-point slot effect inside a single model. Any LLM-judged quality column
that fixes the order or names the arm is reporting some of that.

\paragraph{The precondition is one edge.}
The profile program's shared node emits a cheap pass/fail summary and an expensive execution trace, and the
program text names a coverage report reading the trace. The static eager projection control reads the
program without today's goal and must keep the trace because a real consumer edge names it; \code{lazy}'s
goal-scoped demand need not, since the coverage report is outside today's closure. Authored programs isolate
this to the single edge
(Appendix~\ref{app:ops}, Table~\ref{tab:projauthored}): programs differing only in whether the undemanded
half has a downstream consumer give $+84.3\%$ when it does and exactly $+0.0\%$ when it does not, and the
remainder grows from $+35.4\%$ to $+72.9\%$ as a standing session's coverage falls from 100\% to 25\%.

\emph{A narrower ask pays only when the undemanded field has another consumer in the program text.} Then no
static pass may drop it, an eager arm keeps paying for it every ticket, and only a live goal-scoped demanded
set declines it. With no consumer it is dead output, an ordinary dead-field pass removes it, and the eager
arm takes the identical saving. Both conditions must hold --- the materializer must physically do less, and
the dropped field must be named for somebody else. DataSciBench supplies a smaller, planner-bound witness:
narrow asks used 14.3\% fewer tokens and produced a $+3.2\%$ remainder on one planner. The controlled
Jinja/Flask pair isolates both conditions directly. The structure is common wherever one expensive step
serves two audiences on different schedules: a test run emitting a pass/fail a gate turns on and a coverage
report a weekly dashboard reads; a workflow stage emitting a summary for one product and a full intermediate
dataset for another.

\paragraph{Measured prices.}
The magnitudes above are arithmetic on constants we chose, so we built the same shape over code we did not
write and let a clock set the prices. We instantiate a continuous-integration verify job in two pinned
third-party repositories: Jinja, Pallets' Python template engine, and Flask, its web application
framework~\cite{pallets2026jinja,pallets2026flask}. They form a controlled pair: the same
organization, language, and pytest toolchain, but distinct codebases and suite sizes (911 and 494 passing
tests). Both have modular tests, so the same job can target one affected module or the full suite --- the
physical decomposition projection requires.

The job emits three deliverables: a \emph{verdict} for the module under change, a \emph{coverage} report,
and a \emph{regression} result for the whole suite. Its rule is mechanical: whole suite if regression was
asked for and one module otherwise, instrumentation on only if coverage was asked for. Nothing is computed
and then discarded. The four distinct command lines were each run seven times and the median kept, before
any policy object existed, and the seven non-empty field subsets collapse onto those four, so every ask is
billed from its own measurement rather than interpolated. The fields interact --- coverage over the whole
suite costs more than over one module --- so the node publishes no per-field decomposition, which would
overstate what dropping two of them saves.

\begin{center}
\small
\begin{tabular}{llrrr}
\toprule
\textbf{Repository} & \textbf{Cell} & \textbf{Worth to \code{stop\_eager}} & \textbf{Worth to \code{lazy}} & \textbf{Remainder}\\
\midrule
Jinja & named consumer, off-path & $0.0\%$ & $+89.1\%$ & \textbf{$+89.1\%$}\\
Jinja & no-consumer control & $+89.1\%$ & $+89.1\%$ & \textbf{$+0.0\%$}\\
Jinja & both demanded & $0.0\%$ & $+0.0\%$ & \textbf{$+0.0\%$}\\
\addlinespace
Flask & named consumer, off-path & $0.0\%$ & $+80.1\%$ & \textbf{$+80.1\%$}\\
Flask & no-consumer control & $+80.1\%$ & $+80.1\%$ & \textbf{$+0.0\%$}\\
Flask & both demanded & $0.0\%$ & $+0.0\%$ & \textbf{$+0.0\%$}\\
\bottomrule
\end{tabular}
\end{center}

Three cells rather than two, because each of the claim's two preconditions must be switchable on its own:
the no-consumer control removes the second consumer, while the all-demanded control keeps that consumer and
puts it inside today's request. Both remainders are exactly zero, and the no-consumer control reaches zero
while projection still saves $+89.1\%$ for \emph{both} arms. That zero means the static pass took the whole
saving, not that narrowing did nothing.

Two qualifications belong with the number. The zeros are structural rather than empirical: in the
no-consumer control both arms resolve to the same demanded field set and bill the same measured entry, so
the remainder is zero by construction, and the control's force is that it excludes an accounting leak
rather than that it surveyed a population. And the magnitude is a property of the repository, the ratio of
one test module to a whole suite plus coverage overhead, moving between $76\%$ and $89\%$ across the two
repositories and across repetition counts --- so the sign and the control pattern are the result, and the
magnitude is not a constant.

Finally, the boundary this control draws. The static eager projection control reads the program text without
today's goal, and a stronger static pass specialized to a known root would trace backward from the gate
check, find the
coverage and regression deliverables unreachable, and take the same saving --- against \emph{that} baseline
the remainder collapses to zero, and we do not claim otherwise. The separation is between a program compiled
once and run under a sequence of asks and one recompiled per ask, which is the distinction permission itself
draws: the demanded set is a function of the goal, and a static pass matches it only by being handed the
goal and re-run. Where recompiling per ticket is free, projection's remainder belongs to whoever ships the
compiler. Where the program is fixed and the ask varies --- a standing CI pipeline, a release gate, a
service --- it does not.
\subsection{Fidelity}
\label{sec:fidelity}

Fidelity is projection at precision granularity: both compare a static reading as the eager control with a
live, goal-relative reading as the lazy treatment. We test it in an authored controlled scenario, not a
SimulCost replay. A standing workflow serves two consumers: today's gate needs a coarse yes/no answer, while
a later published metric needs the precise result. Its materializer ladder is a cached estimate (cost 2,
accuracy 0.25), a sampled run (30, 0.80), and a full run (200, 0.98), plus shared checkout and output costs
of 1 and 5. The static eager control sees the precise consumer in the program and pays 206; goal-relative
fidelity sees only today's gate and pays 8. Thus the predicted remainder is
$96.1\%=(206-8)/206=(200-2)/206$. The table switches off each precondition in turn.

\begin{center}
\small
\begin{tabular}{L{0.34\linewidth}L{0.14\linewidth}rrc}
\toprule
program & today's goal & worth to baseline & remainder & sign \\
\midrule
Precise consumer outside today's goal & gate check &  $0.0\%$ & $+96.1\%$ & $\mathbf{+}$ \\
No precise consumer                    & gate check & $96.1\%$ &  $+0.0\%$ & $\mathbf{0}$ \\
Precise result requested today         & report     &  $0.0\%$ &  $+0.0\%$ & $\mathbf{0}$ \\
\bottomrule
\end{tabular}
\end{center}

The magnitude is arithmetic on constants we chose; the claim is the sign pattern, which reproduces
projection's consumer-present and no-consumer controls. Only when the precise consumer exists and the goal
does not reach it do the arms separate, and permission alone moves nothing there. We cannot supply a corpus
row, and the reason is a finding: SimulCost, the one public multi-fidelity benchmark, records error as a
residual against an \emph{executed} refined run, so learning that the cheap answer sufficed requires
computing the expensive one --- verification is 66.0\% of spend over the 31{,}092 attempts that bill it
separately, and the benchmark excludes it from the cost its agents are scored on.

Appendix~\ref{app:ops} completes the factorial for the four operations held back from this section,
including the two whose worth is real but arrives outside the resource a remainder is computed in: pruning
under a deadline, where the deciding resource is calendar time rather than spend, and speculation's dial,
whose corners are verified as identities against \code{lazy} and \code{eager}.

\subsection{Reordering}
\label{sec:reorder}

Reordering has an exact structural boundary, not merely a negative average. Under three conditions and with
no rule able to terminate the run early, every ready, guard-valid node eventually runs; a selector changes
\emph{when} work runs, never \emph{whether}, so the executed set is invariant; with fixed additive node
charges, the bill is too. The conditions are \emph{(C1) deterministic materializer payloads},
\emph{(C2) payload-only guards} that do not read the clock or execution history, and \emph{(C3) a fixed
graph during the comparison}. Under C1--C3 the executed set is an order-independent fixpoint. Five priced
arenas verify the bill identity at \textbf{0.0\% on every task}, under cheapest-node, cheapest-branch, and
eight random orders; every replay arena satisfies C1--C3 exactly because its payloads and graph are fixed.

However, outside that boundary reordering can be valuable whenever visit order changes the termination
event: which candidate first satisfies a disjunctive goal, or whether a stopping rule reaches the goal
before off-path work. In the disjunctive screening family it saves 57\% for both permission arms; with
predicate stopping it recovers part of permission's gap. For conjunctive goals under \code{lazy}, its value
remains exactly zero for the opposite reason: permission removes off-path work under every order, leaving
no executed-set difference for ordering to create.

\paragraph{How much a price heuristic recovers.}
Define overlap as \emph{recovered}/\emph{gap}: of the difference between the stopping rule and demand, the
share a scheduler picks up with no notion of what was asked for. Across the four priced populations that
have a gap to recover --- SimulCost's is zero, so no ratio is defined there --- it runs
from 21.4\% on DataSciBench in real tokens to 70.7\% on an authored lifecycle program
(Appendix~\ref{app:ops}, Table~\ref{tab:overlap}), authored programs flattering ordering because their
prices correlate with off-path-ness by construction. On the live population cheapest-node ordering was worth
4.2\% of \code{stop\_eager}'s pooled bill and 0.0\% of \code{lazy}'s, cheapest-branch 5.6\% and 0.0\%, so
permission does roughly five times what a price heuristic reaches; Section~\ref{sec:planner} reports what
that ratio does under a change of planner.

That fifth is concentrated rather than typical: cheapest-node ordering changes nothing on 63 of the 95
goals, helps on 17, backfires on 15, and three goals supply 93\% of every token it saves. It is not a rate a
scheduler earns broadly but a few large wins net of comparable small losses, divided by a gap permission
closes on every goal at once.

\paragraph{Price is not relevance.}
Hand both arms the same reasonable heuristic, cheapest ready node first. On authored lifecycle programs the
eager arm pays roughly three times what arbitrary order cost it ($-180.7\%$ and $-231.9\%$) while
the cheapest-first lazy arm is bit-identical to \code{lazy}: an eager arm's only brake is goal
satisfaction, which arrives from the on-path node, and cheapest-first walks it through every cheap off-path
branch while deferring the expensive on-path node that would have closed the goal. Permission asks whether
the node is required for what was asked, and is structurally immune.

The inversion is not an authored artifact. It appears on 7 of 60 WfCommons tasks in recorded CPU (worst
$-47.6\%$) and 16 of 95 DataSciBench goals in real tokens (worst $-72.2\%$, $5{,}048 \to 8{,}691$), against
12 of 23 authored lifecycle tasks (Appendix~\ref{app:ops}, Table~\ref{tab:invert}). Across 178 conjunctive
tasks in three arenas and two real cost units, the lazy bill moved \textbf{zero times}; any such inversion
would violate the order-invariance prediction.

\paragraph{The live first pass.}
With each arm making its own model calls and a repeat of each ordering as control, no selector changed the
executed set on 8/8 tasks --- but no task produced competing schedules, because a freshly planned live graph
carries no prices: a cost-ranked selector sees one price everywhere, ties, and breaks ties by identifier,
which is what arbitrary order already does. Permission needs only the goal, known before anything runs. On a
priced authored shape materialized by real model calls the selectors did diverge, and the bill moved
less across orderings ($0.6\%$) than between two runs of the same ordering ($2.0\%$) --- the identity exact
for the work performed and, for tokens, inside the noise floor of asking one model the same question twice.

\subsection{Which rows survive a change of planner}
\label{sec:planner}

Every live number above was produced by a language model writing the program being measured. That raises a
question the rest of the paper cannot answer from inside a single run: which results are properties of the
mechanism, and which are properties of the model. We ran the live agentic arena under three planners ---
\code{gemini-3.1-pro-preview}, \code{gpt-5.6-terra} and \code{claude-sonnet-5} --- with the benchmark, the
harness, the arms and the temperature held fixed.

Each planner completes tasks the others cannot, so the three runs measure overlapping rather than identical
populations. A DataSciBench task yields a measurement only if the planner's own generated code runs, and the
three fail on \emph{disjoint} sets: the only tasks failing under all three are one missing a library from
the image and one no planner handled. We therefore compare on the 77 goals all three measured, since
comparing each run against its own survivors would confound a change in the result with a change in what
was measured. How much of that
disjointness is the planner is settled below by repeating one of them.

\begin{center}
\small
\begin{tabular}{lrrr}
\toprule
\textbf{On 77 shared goals} & \textbf{Gemini 3.1 Pro} & \textbf{GPT-5.6-terra} & \textbf{Sonnet 5}\\
\midrule
Permission saving & 13.4\% & 18.2\% & 11.0\%\\
Goals where the arms differ & 18 & 24 & 17\\
\textbf{\system{} order-invariant} & \textbf{exactly} & \textbf{exactly} & \textbf{exactly}\\
\midrule
Ordering's worth to the eager arm & $-0.1\%$ & $+2.4\%$ & $+1.2\%$\\
Overlap ratio & $-0.7\%$ & 13.3\% & 11.2\%\\
Projection remainder (own population) & $-0.26\%$ & $+3.2\%$ & $+3.7\%$\\
\bottomrule
\end{tabular}
\end{center}

The split falls exactly where Section~\ref{sec:operations} said it must. Above the rule, permission saves
under every planner, never approaching zero and never changing sign, and \system{} is order-invariant
\emph{exactly} --- its materialized cost is bit-identical under fixed, cheapest-node and cheapest-branch
order in all three runs. On each planner's own population the task-clustered permission mean is $+9.8\%$,
$+12.8\%$ and $+10.1\%$, all with $p<0.001$ and, in every run, \textbf{not one task where \system{} lost}.
Across all three runs, blind comparisons detect no significant preference between projected and whole
asks. Establishing that required re-judging: the frozen experiment image predates the blind judge of
Section~\ref{sec:project}, so the third run's own quality line reads ``17 of 20, not matched'' off the biased
instrument, and blind re-scoring of its 56 field pairs returns 85.7\% [74.3\%, 92.6\%] with 8 discordant
pairs against 7 and a sign test at $p=1.000$. The archive marks the in-run line not-to-be-quoted.

Below the rule the numbers move, exactly where Section~\ref{sec:theory} said they could. Ordering's worth
ranges from slightly negative to $+2.4\%$; the overlap
ratio is not interpretable at all under Gemini, whose cheapest-first made the eager arm marginally
\emph{worse}; and projection's remainder is negative under Gemini against positive under both others. These
are the operations that re-run a node under a changed ask rather than refusing it, and their sign is a fact
about how a given model answers a narrower request.

The overlap ratio deserves a note, because read the other way it looks like the most stable number in the
paper and is not. On each planner's \emph{own} population it reads 20.5\%, 21.4\% and 19.9\% --- within a
point and a half, while its numerator and denominator each move by roughly half (2.8\% and 13.6\% under the
first planner against 4.2\% and 19.8\% under the second). That steadiness does not survive restriction to
the shared goals, where the same ratio spans $-0.7\%$ to 13.3\%. A per-planner population is complete for
itself, which flatters the comparison; ``same goal'' across planners does not mean the same program was
written to reach it. We therefore treat the ratio as arena-bound, and quote the matched figures.

\paragraph{Between models against between rounds.}
A spread across planners means nothing without a noise floor, so we repeated \code{gpt-5.6-terra} once under
conditions identical to its first run --- same command, same frozen image, same pins, temperature zero in
both. Two effects separate cleanly. \emph{The pooled magnitude is reproducible}: on the goals both rounds
measured the permission saving moves 20.5\% to 18.2\%, and on the goals all planners share, 18.8\% to
18.9\%, so run-to-run movement is bounded by about $2.3$ points and the difference does not keep a
consistent sign across subsets. Against 4.6 points between the two planners on those same goals, and 7--8
points across all three, the between-model spread is the larger effect and the per-planner column above
survives. \emph{Membership does not reproduce}: both rounds measured 28 tasks, but not the same 28. Six of
31 tasks flip, three in each direction, for a Jaccard of 81\%. Two of the three tasks that had looked like a
capability difference between planners are simply tasks the same planner fails and then passes on being
asked again, so the disjointness above is substantially this churn rather than the model.

The stability lives in the aggregate rather than in the row: 58\% of shared goals reproduce to
within $0.01$ points, while 31\% move by more than 5 and 10\% by more than 20, the largest by 47.5 --- and
these cancel to a 1.3\% difference in total tokens. The reading rule is therefore that an aggregate over a
benchmark is reproducible while any single row of it is not, which is the opposite of how per-task tables
are normally read. Both invariants held exactly in both rounds, and the blind quality rate, the one
instrument we can apply identically everywhere, agrees more tightly than any cost figure here: 84.1\%,
83.0\% and 85.7\% across two planners and a repeat, with sign tests at $p=0.289$, $1.000$ and $1.000$.

The conclusion is narrow and firm. Three planners and one repeat show that the paper's central claim and the
identity it rests on are not artifacts of one vendor, that the physical operations are more model-bound than
the permission result they are measured against, and that the set of tasks a live agentic benchmark yields
is itself a run-level variable that single-run tables silently condition on. They do not establish a
variance, and we do not report one. A reader carrying our numbers to a different model
should carry the signs above the rule, re-measure everything below it, and treat any one task's figure as
unreproducible.

\subsection{Reading the rows together}
\label{sec:physops-summary}

Six of the seven operations yield a demand-specific remainder, and they split along the factorization of
Definition~\ref{def:policy}. The levers that act on
$\pi_{\mathrm{proj}}$ --- projection, and fidelity as projection one granularity down --- are the ones with
positive remainders: they narrow \emph{how much} of a demanded node must become concrete, and demand is what
tells them which part is wanted, so their worth is largest exactly where a shared program names a second
consumer for the part the current ask does not need. The levers that act on which nodes run or in what order
--- ordering, prune, reuse, fusion --- remove off-path work, and off-path work is exactly what permission
removes structurally. Those are therefore not gains stacked on permission; they are, mostly, cheaper and
less reliable ways of reaching the same target --- one mechanism with several faces, falsifiable in a way a
tally of wins would not be, since a single positive remainder from a lever that does not touch demand would
break it.

\begin{itemize}[topsep=2pt,itemsep=1pt]
  \item \textbf{Projection}: a positive remainder ($+84.3\%$ on the isolation
  pair, $+78.1\%$ on the gate-only profile, growing to $+72.9\%$ as session coverage falls; $+89.1\%$ and
  $+80.1\%$ on two third-party test suites where the prices are measured rather than authored), iff the step
  is physically decomposable and the dropped field is named for another consumer.
  \item \textbf{Fidelity}: a positive remainder ($+96.1\%$) under the same
  controls --- projection one granularity down, choosing \emph{which} materializer once permission has said
  yes.
  \item \textbf{Ordering}: an exact zero at fixed additive prices under C1--C3 without termination;
  outside that boundary it recovers 21--36\% of permission's gap on real costs and 62--71\% on authored
  programs, but can invert badly where permission is untouched and is undefined on an unpriced first pass.
  \item \textbf{Prune}: a distinct action matched at zero with a real cost
  when the pruned option would have settled the goal --- and, under a deadline, the difference between 2
  days and 21 while its remainder stays at exactly zero.
  \item \textbf{Reuse}: negative in every cell, live and authored; arms
  bit-identical at low coverage. Permission's irreducible share is the first ticket and the unrepeated work.
  \item \textbf{Fusion}: worth the same to both arms when all sharers are
  demanded, and nothing to the lazy arm when they are not.
  \item \textbf{Speculation}: no matched control; a dial with \code{lazy} and
  \code{eager} at its ends; a free-latency corner, counted in rounds rather than seconds, that closes as
  soon as coverage falls or the world moves.
\end{itemize}

Projection now has two kinds of measured support: a smaller live-token remainder on DataSciBench whose sign
varies by planner, and the controlled Jinja/Flask pair whose prices come from execution rather than from us.
The latter keeps the predicted sign at $+89.1\%$ and $+80.1\%$ and both controls at exactly zero. Authoring
the program is therefore not the same as assuming the answer: the construction states a precondition, and
the clock, not the author, decides whether it pays. Fidelity retains only the authored version of that
pattern. The census explains the gap in a way that is about the corpus: public
single-ask benchmarks sit at session coverage $\approx 1.0$, the corner where every substitute for
permission looks best and permission has least to do, and under a single ask speculation's hit rate is zero
by construction. The region in which the operations differ from permission --- a standing program, a ticket
sequence that never asks for all of it, a step that can physically do less, a chain deep enough for
prepayment to save a wave, a resource the decision is actually about --- has to be built into a program.

\section{Discussion}
\label{sec:discussion}
\label{sec:threats}

\paragraph{What the results say.}
Permission determines whether a ready node may run; projection and fidelity determine how much of an
admitted node becomes concrete. Ordering, prune, reuse, and fusion mostly substitute for work permission
already removes, while speculation deliberately buys undemanded work. In the separation region, permission
saves 42.0\% on measured workflow CPU, tracks excluded work at Spearman $+0.931$, and falls to zero when
everything is demanded. In the equivalence region, all 3{,}199 flat-pool asks tie exactly. Reordering adds a
second boundary: under C1--C3 without termination it cannot change the executed set, or the bill at fixed
additive prices; outside that boundary, termination can make it valuable or harmful.

The width and contingency sweeps give this an engineering consequence: each additional product or
contingency raises the eager bill while \system{} remains flat. A stopping predicate refers to what has been
computed rather than to what was requested, so it lowers the intercept, not the slope. These measurements
establish the execution price of breadth; they do not show that planners will choose broader programs.

\paragraph{Near-complete coverage, revisited with the intervention in hand.}
The DataSciBench pair --- 39.4\% among the 33 goals with nonzero observed savings against 12.8\% averaged
over all 95 goals --- separates the magnitude of observed wins from the population average. The 39.4\% is
descriptive of the positive-difference subset, not an estimate over every goal with off-path opportunity.
Both figures belong in the report, with that distinction explicit.

Benchmark design supplies a structural account of near-complete coverage: a plan elicited and validated for
one request is anchored to that request. The paired intervention of
Section~\ref{sec:planning-incentive} tested an additional billing account. Its demand-billed prompt made
off-request branches free to describe by leaving them abstract until demanded --- not free to execute once
demanded --- but produced 0.0\% off-path work against 0.8\% under readiness billing ($p=0.501$). We
therefore do not attribute the observed narrowness to billing. This prompt intervention neither supports
that explanation in this setting nor identifies the historical cause of benchmark design.

\paragraph{The zeros are the load-bearing half.}
For controlled permission, projection, and fidelity cells, the declared preconditions predict separation
or exact equivalence (Table~\ref{tab:preconditions}). Permission and projection now have measured-price
witnesses; fidelity reproduces the sign and controls on authored prices, so its $96.1\%$ magnitude is
arithmetic rather than an empirical estimate. Where a required condition fails, the result is exactly zero:
a fully demanded graph leaves permission nothing to remove, and a field with no second consumer leaves
projection no demand-specific gap. The other operations require their own controls and may produce zero or
negative remainders. A theory that predicted gains everywhere would be unfalsifiable; the zeros show where
the mechanism ends.

\begin{table}[t]
\centering
\caption{Selected controlled separation and equivalence cells. \emph{Ready-but-undemanded} means the
program contains off-path work; \emph{separable} means that work can be skipped physically; and
\emph{material} means its cost survives the accounting. Projection and fidelity additionally require a
demand-specific gap relative to the static control.}
\label{tab:preconditions}
\footnotesize
\begin{tabular}{L{0.28\linewidth}L{0.15\linewidth}L{0.25\linewidth}L{0.16\linewidth}}
\toprule
\textbf{Cell} & \textbf{Cost unit} & \textbf{Preconditions} & \textbf{Result}\\
\midrule
\multicolumn{4}{l}{\emph{All three hold --- separation predicted}}\\
Live release gate, SWE-bench & container seconds & all three &
$51.7\%$, $52.1\%$\\
Live projection, Jinja and Flask & measured CPU\,s & all three &
$+89.1\%$, $+80.1\%$\\
Composed portfolios, Defects4J and SimulCost & native arena costs & all three &
separation region\\
DataSciBench projection & real tokens & all three & $+3.2\%$ pooled; $14.3\%$ on the 126 narrowable nodes (\code{gpt-5.6-terra}; sign is planner-bound, \S\ref{sec:planner})\\
Fidelity ladder & authored & all three &
$+96.1\%=(206-8)/206$; magnitude authored\\
\midrule
\multicolumn{4}{l}{\emph{One fails --- exact zero predicted}}\\
Projection, Jinja/Flask no-consumer control & measured CPU\,s &
\textbf{no demand-specific gap} & $0$\\
Projection, Jinja/Flask all-demanded control & measured CPU\,s &
\textbf{no undemanded part} & $0$\\
Fidelity controls & authored & \textbf{no precise consumer / precise result demanded} & $0$\\
Pool arenas, equivalence region & native arena costs &
\textbf{no off-path work} & exact tie\\
\bottomrule
\end{tabular}
\end{table}

\paragraph{A design criterion for the next benchmark.}
Public task-solving benchmarks generally optimize for answer correctness rather than independently varying
avoidable work. WfCommons and Bazel become informative here only after the requested closure is varied
against a broader graph. A benchmark designed for execution policy should vary three quantities directly:
ready work outside each request's closure, whether that work is physically separable, and whether its cost
is large enough to survive accounting. Projection and fidelity should additionally toggle whether an
omitted field or precision level has another consumer, preserving ordinary dead-output elimination as a
matched control. Section~\ref{sec:future} turns these requirements into a buildable ask.

\paragraph{What the semantics makes possible, as distinct from cheaper.}
Two capabilities follow from the width invariance just stated --- a bill that carries no factor of how much
program was described. \emph{Open-ended programs}: a planner may describe a ladder whose length is unknown
when written --- next candidate patch, next configuration, refine until the tolerance is met. Eager
execution cannot exhaust an unbounded generator without an external bound or stopping rule; demand-driven
graph expansion could instead materialize only the consumed prefix. This transposes Hughes's separation of
candidate generation from selection~\cite{hughes1989why} to agentic programs. The current implementation
uses finite graphs and does not yet expand a generator on demand, so this remains a design implication.
\emph{Contingent irreversible actions} are the measured version
(Section~\ref{sec:contingent}): the barrier makes the act safe under every policy, while permission makes
the preparation for unrequested contingencies free.

\paragraph{Threats to validity.}
Appendix~\ref{app:limits} registers 21 limitations, each with the claim it constrains and how we handle it.
Four bound the reading of this paper more than the rest. The
largest permission-separation result is \emph{replayed} scientific workflows rather than LLM programs, and
its product requests are seeded rather than observed; the agentic witness is smaller and sits at coverage
$\approx 1.0$. Each live gate run is twelve rows from four clusters --- a case study, not a population ---
and its settleable capture is bimodal rather than centered, so that mean describes those runs' visit orders
and not a rate to expect from one gate. The positive measured operation evidence is currently projection:
real DataSciBench tokens and Jinja/Flask CPU. Fidelity's $96.1\%$ is authored ladder arithmetic, not a
SimulCost result; only its sign and two zero controls transfer. Reordering uses live and replayed prices,
but its central set identity under C1--C3 without termination is structural; the exact bill identity follows
for fixed additive prices. Finally, the noise floor is measured for one planner only: repeating
\code{gpt-5.6-terra} moves the pooled saving by about $2.3$ points against 4.6--8 between planners, while
6 of 31 tasks flip in or out, so aggregates here are citable where single rows are not.

\paragraph{Evidence boundary.}
The live release gate supplies measured container prices for permission, and the controlled Jinja/Flask
pair supplies measured CPU prices and independent zero controls for projection. Fidelity remains authored;
a measured fidelity ladder would test whether its external magnitude is material. Reordering is
set-invariant only under C1--C3 without termination, and bill-invariant there only at fixed additive prices;
it can be valuable or harmful outside that boundary. A real-clock speculation study could change, not merely
rescale, the rounds-based frontier. Learned selection remains future work.

\section{Future directions}
\label{sec:future}

The demanded set is goal-derived, linear-time to compute, and cost-blind. Under a fixed graph and ask it can
only contract (Lemma~\ref{lem:mono}); graph growth or a changed ask can enlarge it. Relative to the declared
graph and goal, the set is sound but not hindsight-tight: it contains every node the goal may still need,
including candidates later outcomes reveal were unnecessary. Closure itself is therefore not a learning
target. The learnable surfaces lie inside it, below it, and above it, with different contracts.

\paragraph{The residual decision problem.}
For scheduling inside the closure, the legal node actions at state $\sigma$ are the ready members of
$\demanded(R,\sigma)$, not the whole closure. Ordering, fidelity, and reuse choose among or parameterize
that work without changing demand membership, although fidelity still owes matched quality and reuse owes
provenance and validity. Learned pruning acts below closure by permanently declining a still-demanded
option; it can fail the task and therefore needs an explicit failure constraint. Learned speculation acts
above closure by admitting undemanded work; a budget can cap its excess spend, but is not a safety
guarantee, and commit barriers remain independent.

The closure hands a learner an unusually well-formed decision problem, and which formalism fits depends on
assumptions a deployment must establish. The problem is an MDP only
if the observable state is Markov-sufficient, or if a sufficient belief state is used; latent candidate
quality makes it partially observed. A flat independent pool with immediate feedback can reduce to a costly
bandit, but graph dependencies, cache state, fidelity, and cross-request speculation retain sequential
structure. The replay arenas provide a cheap finite simulator for alternative schedules over recorded
candidates and outcomes; they cannot generate counterfactual candidates or evaluate actions that change
the recorded environment.

Existing results calibrate baselines rather than bound every chooser. Cheapest-first recovered 21.4\% of
permission's gap on its live population, $-0.7\%$ to 13.3\% on goals shared across planners, and 36.0\% on
replayed WfCommons. A genuine ceiling needs a hindsight optimum or regret bound under stated assumptions,
with hindsight excluded from policy features. A large ordering gain would show that cheapest-first was
weak; a large matched-quality pruning gain would quantify how much hindsight slack can be predicted
ex ante.

\paragraph{Planning for recall rather than for precision.}
Demand-driven execution changes the price of valid surplus breadth, not intrinsic plan quality. In the
authored width family, one additional product raises \code{stop\_eager}'s bill by $+22.5\%$ and
\system{}'s by $+0.0\%$ (Section~\ref{sec:width}); this is the execution price of a correctly represented
branch outside today's closure. It does not show that planning, validation, or storage is free, or that a
planner will produce broader graphs.

That motivates a testable hypothesis: a smaller planner habituated to demand billing may preserve recall
while describing more valid optional work, trading precision for breadth without charging today's request.
The prompt-level test found no such response: 0.0\% off-path work under demand billing against 0.8\% under
readiness billing ($p=0.501$). Its single-request tasks and deliverable-anchoring validator do not test the
stronger hypothesis. Doing so first requires standing programs under sequences of narrower asks, with
ground truth for required deliverables, dependencies, graph validity, and task quality. The observed
Spearman correlation of $+0.931$ between off-path fraction and saving motivates measuring the relationship
again; it does not imply that changing planner breadth moves proportionally along the same curve. Recall,
structural validity, and delivered quality must be held fixed before cost is compared.

\paragraph{Describing what is not authorized.}
The commit barrier is independent of permission: it blocks an unauthorized irreversible act under every
policy, while permission decides whether preparation for an unrequested contingency materializes. The
experiment therefore does not show that eager execution is unsafe or that demand permission makes an act
safe. It shows a narrower affordance: contingencies can remain represented in an inspectable standing graph
while preparation outside the current request remains abstract.

In the authored family, each additional contingency costs both eager arms 20 units and \system{} zero; when
authorization is withheld after investigation, the corresponding totals are 81 and 21. Only those slopes
and controls transfer from the authored prices. The barrier supplies authorization; permission reduces the
cost of retaining uninvoked contingencies. Whether the artifact is actually audited belongs to the
surrounding process, not the execution semantics.

\paragraph{A corpus that could falsify this.}
A benchmark should vary the three conditions current corpora largely hold fixed: how much ready work lies
outside each request's closure, whether that work is physically separable, and whether its cost is large
enough to survive accounting. Projection and fidelity should additionally toggle whether an omitted field
or precision level has another consumer, so ordinary dead-output elimination remains a matched control. For
the recall-over-precision hypothesis, the same corpus needs standing programs, sequences of narrower asks,
and optional named deliverables whose validity and contribution can be scored. Benchmark construction is
therefore a prerequisite for that test, not a parallel activity. The prompt-level null does not explain
current near-complete coverage; it only rules out the tested intervention as evidence for a billing
response.

\paragraph{Closure overlap when one graph serves many asks.}
A natural next deployment combines authored workflows and long-horizon agents, capabilities now appearing
in products~\cite{langgraph2026,copilotstudio2026,agentforce2026}, by reusing one graph across many
requests. If a platform does so, the demanded set becomes a per-request mask over that graph and its overlap
becomes a scheduling resource. Given a set of pending asks
whose closures partially coincide, what materialization order is optimal, and can shared setup be amortized
across requests without breaking either isolation or the denominator discipline of invariant I3? Whether the
width scaling law holds per-request or only in aggregate is an open question with direct capacity
consequences for such a platform. Sharing exploratory work across related
jobs~\cite{castrofernandez2018metadataflows} and sharing state across requests at the serving
layer~\cite{kwon2023pagedattention} are the closest neighbors, and neither reasons about a mask derived
from a goal.

\paragraph{One implication we state without claiming.}
If branches outside the current closure incur no materialization cost, static program breadth need not be
the usage unit. One alternative is to charge for work actually materialized on behalf of a request, with
closure supplying attribution rather than automatically supplying the bill. The measurements do not show
that planning, validation, storage, or every candidate inside a disjunctive closure is free, and we have no
deployment evidence for any pricing design.

\clearpage
\section{Conclusion}
\label{sec:concl}

\system{} establishes a measured separation between readiness and materialization. In the separation
region, permission saves 42.0\% on measured workflow CPU and tracks excluded work at Spearman $+0.931$;
each additional product raises the eager bill by $22.5\%$ while \system{} remains flat, each contingency
adds 20 units against zero, and permission captures 100\% of executable off-path cost at every tested width.
At the boundary where every candidate is demanded, \system{} and goal-stopping eager execution tie exactly
on all 3{,}199 pool asks, confirming the predicted equivalence region.

The framework contribution is a live demanded-set semantics that recomputes each request's backward closure
after observations and gives that set, rather than the ready set, to physical operations in one graph and
engine. Projection adds a distinct measured benefit: $+3.2\%$ on live DataSciBench tokens and $+89.1\%$
and $+80.1\%$ on measured Jinja/Flask CPU, with both controls exactly zero. Fidelity reproduces the sign
and controls, although its $96.1\%$ magnitude is authored arithmetic; reordering is set-invariant under
C1--C3 without termination and bill-invariant there at fixed additive prices, but may help or harm outside
that boundary. Safety belongs independently to the commit barrier. Authored graphs, scheduled triggers, and
long-horizon goals are already shipping capabilities; their combination motivates this regime without
claiming its deployment frequency.

These results turn a limitation of current agentic benchmarks into a concrete research program. Existing
plans have 98.6\% request coverage, and 27 of 28 contain no off-request work; a paired prompt intervention
produced no broader graphs under demand billing, so we neither attribute this narrowness to billing nor
claim that planners already respond to cheaper surplus breadth. The next benchmark should instead provide
standing programs, sequences of narrower asks, and optional named deliverables whose validity and
contribution can be scored. By making valid surplus breadth economically executable at materialization time
and its value directly falsifiable, \system{} supplies both the runtime foundation and the benchmark
question needed to test the next generation of agentic programs.

% Flush every pending float before the references, so no result table can land
% inside the bibliography or an appendix.
\clearpage

\bibliographystyle{plainurl}
\bibliography{references}

\begin{thebibliography}{10}

\bibitem{abadi2016tensorflow}
Mart{\'i}n Abadi et~al.
\newblock {TensorFlow}: A system for large-scale machine learning.
\newblock In {\em 12th USENIX Symposium on Operating Systems Design and
  Implementation}, pages 265--283, 2016.
\newblock URL:
  \url{https://www.usenix.org/conference/osdi16/technical-sessions/presentation/abadi}.

\bibitem{abadi2015differentialfoundations}
Mart{\'\i}n Abadi, Frank McSherry, and Gordon~D. Plotkin.
\newblock Foundations of differential dataflow.
\newblock In {\em Foundations of Software Science and Computation Structures
  (FoSSaCS)}, volume 9034 of {\em Lecture Notes in Computer Science}, pages
  71--83, 2015.
\newblock \href {https://doi.org/10.1007/978-3-662-46678-0_5}
  {\path{doi:10.1007/978-3-662-46678-0_5}}.

\bibitem{acar2005selfadjusting}
Umut~A. Acar.
\newblock {\em Self-Adjusting Computation}.
\newblock PhD thesis, Carnegie Mellon University, 2005.
\newblock Technical Report CMU-CS-05-129.

\bibitem{acar2009experimental}
Umut~A. Acar, Guy~E. Blelloch, Matthias Blume, Robert Harper, and Kanat
  Tangwongsan.
\newblock An experimental analysis of self-adjusting computation.
\newblock {\em ACM Transactions on Programming Languages and Systems},
  32(1):1--53, 2009.
\newblock \href {https://doi.org/10.1145/1596527.1596530}
  {\path{doi:10.1145/1596527.1596530}}.

\bibitem{stepfunctions2026}
{Amazon Web Services}.
\newblock Orchestration models: From rule-based to {AI}-native, 2026.
\newblock Accessed September 16, 2026.
\newblock URL:
  \url{https://docs.aws.amazon.com/prescriptive-guidance/latest/agentic-ai-serverless/orchestration-models.html}.

\bibitem{clauderoutines2026}
{Anthropic}.
\newblock Automate work with routines, 2026.
\newblock Accessed September 16, 2026.
\newblock URL: \url{https://code.claude.com/docs/en/routines}.

\bibitem{avnur2000eddies}
Ron Avnur and Joseph~M. Hellerstein.
\newblock Eddies: Continuously adaptive query processing.
\newblock In {\em Proceedings of the 2000 ACM SIGMOD International Conference
  on Management of Data}, pages 261--272, 2000.
\newblock \href {https://doi.org/10.1145/335191.335420}
  {\path{doi:10.1145/335191.335420}}.

\bibitem{bazel}
{Bazel Project}.
\newblock The {Bazel} query reference, 2026.
\newblock Accessed September 16, 2026.
\newblock URL: \url{https://bazel.build/query/language}.

\bibitem{bean2025construct}
Andrew~M. Bean et~al.
\newblock Measuring what matters: Construct validity in large language model
  benchmarks.
\newblock In {\em Advances in Neural Information Processing Systems (NeurIPS),
  Datasets and Benchmarks Track}, 2025.
\newblock URL:
  \url{https://papers.neurips.cc/paper_files/paper/2025/hash/1967e0fc3aa6cbbace562f5cb8e3954e-Abstract-Datasets_and_Benchmarks_Track.html}.

\bibitem{simulcost2026}
Yadi Cao, Sicheng Lai, Jiahe Huang, Yang Zhang, Zach Lawrence, Rohan Bhakta,
  Izzy~F. Thomas, Mingyun Cao, Chung-Hao Tsai, Zihao Zhou, Yidong Zhao, Hao
  Liu, Alessandro Marinoni, Alexey Arefiev, and Rose Yu.
\newblock {SimulCost}: A cost-aware benchmark and toolkit for automating
  physics simulations with {LLMs}, 2026.
\newblock Version 3, arXiv:2603.20253.
\newblock URL: \url{https://arxiv.org/abs/2603.20253v3}.

\bibitem{castrofernandez2018metadataflows}
Raul Castro~Fernandez, William Culhane, Pijika Watcharapichat, Matthias
  Weidlich, Victoria Lopez~Morales, and Peter Pietzuch.
\newblock Meta-dataflows: Efficient exploratory dataflow jobs.
\newblock In {\em Proceedings of the 2018 International Conference on
  Management of Data}, pages 1157--1172, 2018.
\newblock \href {https://doi.org/10.1145/3183713.3183760}
  {\path{doi:10.1145/3183713.3183760}}.

\bibitem{chen2023frugalgpt}
Lingjiao Chen, Matei Zaharia, and James Zou.
\newblock {FrugalGPT}: How to use large language models while reducing cost and
  improving performance.
\newblock {\em Transactions on Machine Learning Research}, 2024.
\newblock URL: \url{https://openreview.net/forum?id=cSimKw5p6R}.

\bibitem{coleman2022wfcommons}
Tain{\~a} Coleman, Henri Casanova, Lo{\"i}c Pottier, Manav Kaushik, Ewa
  Deelman, and Rafael Ferreira~da Silva.
\newblock {WfCommons}: A framework for enabling scientific workflow research
  and development.
\newblock {\em Future Generation Computer Systems}, 128:16--27, 2022.
\newblock \href {https://doi.org/10.1016/j.future.2021.09.043}
  {\path{doi:10.1016/j.future.2021.09.043}}.

\bibitem{feldman1979make}
Stuart~I. Feldman.
\newblock {MAKE}---a program for maintaining computer programs.
\newblock {\em Software: Practice and Experience}, 9(4):255--265, 1979.
\newblock \href {https://doi.org/10.1002/spe.4380090402}
  {\path{doi:10.1002/spe.4380090402}}.

\bibitem{wfinstances}
Rafael Ferreira~da Silva.
\newblock {WfInstances}: v1.5, 2024.
\newblock Version 1.5; experiment pinned at Git commit 467f5c6.
\newblock \href {https://doi.org/10.5281/zenodo.12510982}
  {\path{doi:10.5281/zenodo.12510982}}.

\bibitem{gim2024promptcache}
In~Gim, Guojun Chen, Seung-seob Lee, Nikhil Sarda, Anurag Khandelwal, and Lin
  Zhong.
\newblock Prompt cache: Modular attention reuse for low-latency inference.
\newblock In {\em Proceedings of Machine Learning and Systems (MLSys)},
  volume~6, pages 325--338, 2024.
\newblock URL:
  \url{https://proceedings.mlsys.org/paper_files/paper/2024/hash/a66caa1703fe34705a4368c3014c1966-Abstract-Conference.html}.

\bibitem{graefe1994volcano}
Goetz Graefe.
\newblock Volcano---an extensible and parallel query evaluation system.
\newblock {\em IEEE Transactions on Knowledge and Data Engineering},
  6(1):120--135, 1994.
\newblock \href {https://doi.org/10.1109/69.273032}
  {\path{doi:10.1109/69.273032}}.

\bibitem{hammer2014adapton}
Matthew~A. Hammer, Yit~Phang Khoo, Michael Hicks, and Jeffrey~S. Foster.
\newblock {Adapton}: Composable, demand-driven incremental computation.
\newblock In {\em Proceedings of the 35th ACM SIGPLAN Conference on Programming
  Language Design and Implementation}, pages 156--166, 2014.
\newblock \href {https://doi.org/10.1145/2594291.2594324}
  {\path{doi:10.1145/2594291.2594324}}.

\bibitem{hughes1989why}
John Hughes.
\newblock Why functional programming matters.
\newblock {\em The Computer Journal}, 32(2):98--107, 1989.
\newblock \href {https://doi.org/10.1093/comjnl/32.2.98}
  {\path{doi:10.1093/comjnl/32.2.98}}.

\bibitem{jimenez2024swebench}
Carlos~E. Jimenez, John Yang, Alexander Wettig, Shunyu Yao, Kexin Pei, Ofir
  Press, and Karthik Narasimhan.
\newblock {SWE-bench}: Can language models resolve real-world {GitHub} issues?
\newblock In {\em International Conference on Learning Representations (ICLR)},
  2024.
\newblock URL: \url{https://openreview.net/forum?id=VTF8yNQM66}.

\bibitem{just2014defects4j}
Ren{\'e} Just, Darioush Jalali, and Michael~D. Ernst.
\newblock {Defects4J}: A database of existing faults to enable controlled
  testing studies for {Java} programs.
\newblock In {\em Proceedings of the 2014 International Symposium on Software
  Testing and Analysis (ISSTA)}, pages 437--440, 2014.
\newblock \href {https://doi.org/10.1145/2610384.2628055}
  {\path{doi:10.1145/2610384.2628055}}.

\bibitem{kim2024llmcompiler}
Sehoon Kim, Suhong Moon, Ryan Tabrizi, Nicholas Lee, Michael~W. Mahoney, Kurt
  Keutzer, and Amir Gholami.
\newblock An {LLM} compiler for parallel function calling.
\newblock In {\em Proceedings of the 41st International Conference on Machine
  Learning}, volume 235 of {\em Proceedings of Machine Learning Research},
  pages 24370--24391. PMLR, 2024.
\newblock URL: \url{https://proceedings.mlr.press/v235/kim24y.html}.

\bibitem{kwon2023pagedattention}
Woosuk Kwon, Zhuohan Li, Siyuan Zhuang, Ying Sheng, Lianmin Zheng, Cody~Hao Yu,
  Joseph~E. Gonzalez, Hao Zhang, and Ion Stoica.
\newblock Efficient memory management for large language model serving with
  {PagedAttention}.
\newblock In {\em Proceedings of the 29th ACM Symposium on Operating Systems
  Principles}, pages 611--626. ACM, 2023.
\newblock \href {https://doi.org/10.1145/3600006.3613165}
  {\path{doi:10.1145/3600006.3613165}}.

\bibitem{langgraph2026}
{LangChain}.
\newblock Workflows and agents, 2026.
\newblock Accessed September 16, 2026.
\newblock URL:
  \url{https://docs.langchain.com/oss/python/langgraph/workflows-agents}.

\bibitem{launchbury1993lazy}
John Launchbury.
\newblock A natural semantics for lazy evaluation.
\newblock In {\em Proceedings of the 20th ACM SIGPLAN-SIGACT Symposium on
  Principles of Programming Languages}, pages 144--154, 1993.
\newblock \href {https://doi.org/10.1145/158511.158618}
  {\path{doi:10.1145/158511.158618}}.

\bibitem{leviathan2023speculative}
Yaniv Leviathan, Matan Kalman, and Yossi Matias.
\newblock Fast inference from transformers via speculative decoding.
\newblock In {\em Proceedings of the 40th International Conference on Machine
  Learning}, volume 202 of {\em Proceedings of Machine Learning Research},
  pages 19274--19286. PMLR, 2023.

\bibitem{mcsherry2013differential}
Frank McSherry, Derek~G. Murray, Rebecca Isaacs, and Michael Isard.
\newblock Differential dataflow.
\newblock In {\em Conference on Innovative Data Systems Research (CIDR)}, 2013.

\bibitem{copilotstudio2026}
{Microsoft}.
\newblock Workflows overview, 2026.
\newblock Accessed September 16, 2026.
\newblock URL:
  \url{https://learn.microsoft.com/en-us/microsoft-copilot-studio/workflows-experience/flows-overview}.

\bibitem{mokhov2019selective}
Andrey Mokhov, Georgy Lukyanov, Simon Marlow, and Jeremie Dimino.
\newblock Selective applicative functors.
\newblock {\em Proceedings of the ACM on Programming Languages},
  3(ICFP):90:1--90:29, 2019.
\newblock \href {https://doi.org/10.1145/3341694} {\path{doi:10.1145/3341694}}.

\bibitem{mokhov2018build}
Andrey Mokhov, Neil Mitchell, and Simon Peyton~Jones.
\newblock Build systems {\`a} la carte.
\newblock {\em Proceedings of the ACM on Programming Languages},
  2(ICFP):79:1--79:29, 2018.
\newblock \href {https://doi.org/10.1145/3236774} {\path{doi:10.1145/3236774}}.

\bibitem{nie2026agenticquery}
Lunyiu Nie, Yilin Xia, Yiren Liu, Christopher Jermaine, and Swarat Chaudhuri.
\newblock Cost-aware optimization for agentic query execution.
\newblock {\em arXiv preprint arXiv:2606.03152}, 2026.
\newblock URL: \url{https://arxiv.org/abs/2606.03152}.

\bibitem{ong2025routellm}
Isaac Ong, Amjad Almahairi, Vincent Wu, Wei-Lin Chiang, Tianhao Wu, Joseph~E.
  Gonzalez, M.~Waleed Kadous, and Ion Stoica.
\newblock {RouteLLM}: Learning to route {LLMs} from preference data.
\newblock In {\em International Conference on Learning Representations (ICLR)},
  2025.
\newblock URL:
  \url{https://proceedings.iclr.cc/paper_files/paper/2025/hash/5503a7c69d48a2f86fc00b3dc09de686-Abstract-Conference.html}.

\bibitem{chatgpttasks2026}
{OpenAI}.
\newblock Scheduled tasks in {ChatGPT}, 2026.
\newblock Accessed September 16, 2026.
\newblock URL:
  \url{https://help.openai.com/en/articles/10291617-scheduled-tasks-in-chatgpt}.

\bibitem{pallets2026flask}
{Pallets Projects}.
\newblock Flask: A lightweight web application framework.
\newblock GitHub repository, 2026.
\newblock Pinned revision: Git commit d73fa1cdcbd8b1465c151db8924ba58b1dd14e35.
\newblock URL: \url{https://github.com/pallets/flask}.

\bibitem{pallets2026jinja}
{Pallets Projects}.
\newblock Jinja: A template engine for python.
\newblock GitHub repository, 2026.
\newblock Pinned revision: Git commit 5ef70112a1ff19c05324ff889dd30405b1002044.
\newblock URL: \url{https://github.com/pallets/jinja}.

\bibitem{patel2025lotus}
Liana Patel, Siddharth Jha, Melissa Pan, Harshit Gupta, Parth Asawa, Carlos
  Guestrin, and Matei Zaharia.
\newblock Semantic operators and their optimization: Enabling {LLM}-based data
  processing with accuracy guarantees in {LOTUS}.
\newblock {\em Proceedings of the VLDB Endowment}, 18(11):4171--4184, 2025.
\newblock URL: \url{https://www.vldb.org/pvldb/vol18/p4171-patel.pdf}, \href
  {https://doi.org/10.14778/3749646.3749685}
  {\path{doi:10.14778/3749646.3749685}}.

\bibitem{pingali1985demand}
Keshav Pingali and Arvind.
\newblock Efficient demand-driven evaluation. part 1.
\newblock {\em ACM Transactions on Programming Languages and Systems},
  7(2):311--333, 1985.
\newblock \href {https://doi.org/10.1145/3318.3480}
  {\path{doi:10.1145/3318.3480}}.

\bibitem{agentforce2026}
{Salesforce}.
\newblock {Salesforce} expands {Agentforce} with a new portfolio of {AI} agents
  built for high-value work, September 2026.
\newblock URL:
  \url{https://www.salesforce.com/news/stories/agentforce-job-ready-ai-agents/}.

\bibitem{sobreira2018defects4j}
Victor Sobreira, Thomas Durieux, Fernanda Madeiral, Martin Monperrus, and
  Marcelo~A. Maia.
\newblock Dissection of a bug dataset: Anatomy of 395 patches from {Defects4J}.
\newblock In {\em Proceedings of the 25th IEEE International Conference on
  Software Analysis, Evolution and Reengineering}, pages 130--140, 2018.
\newblock \href {https://doi.org/10.1109/SANER.2018.8330203}
  {\path{doi:10.1109/SANER.2018.8330203}}.

\bibitem{swebenchverified}
{SWE-bench Project}.
\newblock {SWE-bench Verified}, 2024.
\newblock Accessed September 16, 2026.
\newblock URL: \url{https://www.swebench.com/verified}.

\bibitem{wadlom2026helium}
Noppanat Wadlom, Junyi Shen, and Yao Lu.
\newblock Efficient {LLM} serving for agentic workflows: A data systems
  perspective.
\newblock {\em Proceedings of the ACM on Management of Data},
  4(3):169:1--169:29, 2026.
\newblock Article 169.
\newblock \href {https://doi.org/10.1145/3802046} {\path{doi:10.1145/3802046}}.

\bibitem{wadsworth1971lazy}
Christopher~P. Wadsworth.
\newblock {\em Semantics and Pragmatics of the Lambda-Calculus}.
\newblock PhD thesis, University of Oxford, 1971.

\bibitem{weiser1984slicing}
Mark Weiser.
\newblock Program slicing.
\newblock {\em IEEE Transactions on Software Engineering}, SE-10(4):352--357,
  1984.
\newblock \href {https://doi.org/10.1109/TSE.1984.5010248}
  {\path{doi:10.1109/TSE.1984.5010248}}.

\bibitem{tocall2026}
Qinyuan Wu, Soumi Das, Mahsa Amani, Arijit Nag, Seungeon Lee, Krishna~P.
  Gummadi, Abhilasha Ravichander, and Muhammad~Bilal Zafar.
\newblock To call or not to call: A framework to assess and optimize {LLM} tool
  calling.
\newblock {\em arXiv preprint arXiv:2605.00737}, 2026.
\newblock URL: \url{https://arxiv.org/abs/2605.00737}.

\bibitem{xiang2026las}
Dawei Xiang, Kexin Chu, Wenyan Xu, Wenhui Zhang, and Wei Zhang.
\newblock {LLM}-as-scheduler: Agentic workflow dynamic scheduling.
\newblock In {\em Proceedings of the 64th Annual Meeting of the Association for
  Computational Linguistics (Volume 1: Long Papers)}, pages 12752--12763, 2026.
\newblock \href {https://doi.org/10.18653/v1/2026.acl-long.581}
  {\path{doi:10.18653/v1/2026.acl-long.581}}.

\bibitem{xu2023rewoo}
Binfeng Xu, Zhiyuan Peng, Bowen Lei, Subhabrata Mukherjee, Yuchen Liu, and
  Dongkuan Xu.
\newblock {ReWOO}: Decoupling reasoning from observations for efficient
  augmented language models.
\newblock {\em arXiv preprint arXiv:2305.18323}, 2023.
\newblock URL: \url{https://arxiv.org/abs/2305.18323}.

\bibitem{yao2023react}
Shunyu Yao, Jeffrey Zhao, Dian Yu, Nan Du, Izhak Shafran, Karthik Narasimhan,
  and Yuan Cao.
\newblock {ReAct}: Synergizing reasoning and acting in language models.
\newblock In {\em The Eleventh International Conference on Learning
  Representations}, 2023.
\newblock URL: \url{https://openreview.net/forum?id=WE_vluYUL-X}.

\bibitem{zhang2026datascibench}
Dan Zhang, Sining Zhoubian, Min Cai, Fengzu Li, Lekang Yang, Wei Wang, Tianjiao
  Dong, Ziniu Hu, Jie Tang, and Yisong Yue.
\newblock {DataSciBench}: An {LLM} agent benchmark for data science.
\newblock In {\em Findings of the Association for Computational Linguistics:
  ACL 2026}, pages 3685--3728, 2026.
\newblock URL: \url{https://aclanthology.org/2026.findings-acl.181/}, \href
  {https://doi.org/10.18653/v1/2026.findings-acl.181}
  {\path{doi:10.18653/v1/2026.findings-acl.181}}.

\bibitem{zheng2024sglang}
Lianmin Zheng, Liangsheng Yin, Zhiqiang Xie, Chuyue Sun, Jeff Huang, Cody~Hao
  Yu, Shiyi Cao, Christos Kozyrakis, Ion Stoica, Joseph~E. Gonzalez, Clark
  Barrett, and Ying Sheng.
\newblock {SGLang}: Efficient execution of structured language model programs.
\newblock In {\em Advances in Neural Information Processing Systems},
  volume~37, pages 62557--62583, 2024.
\newblock URL:
  \url{https://proceedings.neurips.cc/paper_files/paper/2024/file/724be4472168f31ba1c9ac630f15dec8-Paper-Conference.pdf},
  \href {https://doi.org/10.52202/079017-2000}
  {\path{doi:10.52202/079017-2000}}.

\end{thebibliography}

\clearpage
\appendix
\section{Limitations register}
\label{app:limits}

This register makes 21 identified evidence limits explicit and names the claim each constrains. The four
that most bound the paper appear in Section~\ref{sec:threats};
Tables~\ref{tab:limits-scope} and~\ref{tab:limits-instr} separate limits on evidence coverage from limits on
the instruments that produced it.

\begin{table}[htbp]
\centering
\caption{Limits on what the evidence covers.}
\label{tab:limits-scope}
\footnotesize
\begin{tabular}{L{0.30\linewidth}L{0.14\linewidth}L{0.48\linewidth}}
\toprule
\textbf{Limitation} & \textbf{Where} & \textbf{How we handle it, or why it stands}\\
\midrule
Largest separation result is replayed workflows, not LLM programs; product requests are seeded, not observed & \S\ref{sec:sep} & The agentic arena carries the live version of the same claim at smaller magnitude; we never pool the two\\
\addlinespace
Near-complete coverage has a structural benchmark-design account; its historical cause is not identified & \S\ref{sec:arenas}, \S\ref{sec:planning-incentive} & The census (98.6\% demanded, 27 of 28 plans with no off-path work) is measured. Across 168 valid plans from three paired repeats, demand billing produced 0.0\% off-path work against 0.8\% under readiness billing (paired per-plan difference $-0.9$ points [${-2.4}$, $0.0$], $p=0.501$); the prompt-level null rules out the tested billing intervention as evidence, not the benchmark-design account\\
\addlinespace
Agentic corpus sits at session coverage $\approx 1.0$, so the effect there is small & \S\ref{sec:datasci} & Reported as a property of the corpus, with a node-level census; the conditional and the average are both given\\
\addlinespace
Portfolio topology is authored & \S\ref{sec:portfolio} & Native costs, exhaustive position sweeps, retained zeros, exact-tie controls, setup-share sensitivity, normalization by opportunity\\
\addlinespace
The four widths regroup one task pool rather than forming four datasets & \S\ref{sec:width} & Stated; capture rates rather than raw savings are the quantity compared across widths\\
\addlinespace
Each live gate run is 12 rows from 4 clusters & \S\ref{sec:live} & Case study, never a population claim; smallest exact Wilcoxon $p$ at 4 clusters is 0.125 and no $p$ is quoted off it\\
\addlinespace
Settleable capture is bimodal, not centered & \S\ref{sec:live} & Its mean describes those runs' visit orders; the order-fair sweep is the population version\\
\addlinespace
Whether an issue settles is partly a property of the patcher, not the issue & \S\ref{sec:live} & On Django our writer closed 3 of 8 issues no ranked submission resolved, because the harness shows failing tests; \emph{settleable} therefore means settleable-when-shown-the-tests, and resolve rates are not comparable to published SWE-bench numbers\\
\addlinespace
Near-50\% settleable capture is a pattern, not a law & \S\ref{sec:width} & Will move when requests are position-biased or costs correlate with position\\
\addlinespace
Bazel rests on two packages with clustered targets, and its headline is mostly manufactured opportunity & \S\ref{sec:bazel} & Labeled exploratory; only the kind skew and the capture split are meant to carry weight\\
\bottomrule
\end{tabular}
\end{table}

\begin{table}[htbp]
\centering
\caption{Limits on the instruments.}
\label{tab:limits-instr}
\footnotesize
\begin{tabular}{L{0.30\linewidth}L{0.14\linewidth}L{0.48\linewidth}}
\toprule
\textbf{Limitation} & \textbf{Where} & \textbf{How we handle it, or why it stands}\\
\midrule
Fidelity's $+96.1\%$ magnitude is authored, not measured & \S\ref{sec:fidelity} & Projection has measured support: $+3.2\%$ on live DataSciBench tokens and $+89.1\%$ / $+80.1\%$ on measured Jinja/Flask CPU. Fidelity remains authored ladder arithmetic, so only its sign and two zero controls transfer\\
\addlinespace
Every remainder is computed in one resource & \S\ref{sec:physops} & Prune already shows the resource can decide the verdict while the factorial stays internally consistent; the other rows are not audited for it, and reordering --- a latency lever scored on spend --- is the obvious suspect\\
\addlinespace
Speculation counts rounds rather than seconds, on uniform priors, leaning on one deep program & App.~\ref{sec:speculate} & A clock in a replay would time the harness; the priors are optimistically ordered relative to an indifferent speculator, and both are stated\\
\addlinespace
Live leg of the ordering identity is one authored program at one width, one vendor & \S\ref{sec:reorder} & Reported as the identity on the executed set, with the bill allowed to move by the materializer's own run-to-run noise\\
\addlinespace
Ties are classified order-contingent from one recorded visit order & \S\ref{sec:sep} & A shuffled-selector run is done for the operation rows but not for every permission tie\\
\addlinespace
Measured population varies from run to run & \S\ref{sec:planner} & Repeating one planner flips 6 of 31 tasks in or out (Jaccard 81\%), so the disjoint sets across planners are substantially run churn and not a capability ranking; shared-goal comparisons use the 77-goal intersection, while own-population figures, including projection, are labeled as such\\
\addlinespace
One repeat, of one planner & \S\ref{sec:planner} & The noise floor is measured only for \code{gpt-5.6-terra}: pooled movement under repetition is bounded by $2.3$ points against 4.6--8 between planners, but two runs give a difference and not a variance, and whether the other planners churn alike is untested\\
\addlinespace
Per-goal figures do not reproduce & \S\ref{sec:planner} & Repetition leaves 58\% of shared goals within $0.01$ points but moves 10\% by more than 20, canceling to 1.3\% in the pool; aggregates are citable, single rows are not\\
\addlinespace
Overlap ratios are scope-bound, and their apparent cross-planner stability does not survive matching & \S\ref{sec:reorder} & The WfCommons cell is quoted where the gap is 5.0\%, not the archived 42.0\%; on matched goals the ratio ranges from $-0.7\%$ to 13.3\%, so we treat it as arena-bound\\
\addlinespace
LLM-judged quality is sensitive to slot and label & \S\ref{sec:project} & Judged blind, candidates unlabeled and order assigned by a hash of the pair; the same content scores 89.5\% in the first slot against 78.7\% in the second, so any instrument that fixes order or names the arm reports part of that\\
\addlinespace
The controlled Jinja/Flask projection pair concedes its baseline and is narrow in scope & \S\ref{sec:project} & The static eager projection control is not given today's goal, and a static pass that were given it would take the same saving, so the remainder is a claim about programs compiled once and asked many times; its two controls are structural rather than a survey, and each magnitude comes from one test module against one suite in one of two sibling pure-Python projects on one machine\\
\bottomrule
\end{tabular}
\end{table}

\clearpage
\section{Operations measured in full}
\label{app:ops}

This appendix gives the authored projection controls and detailed reordering populations supporting
Section~\ref{sec:physops}, followed by the four operations summarized but not developed there. Together they
complete the operation map: prune isolates demand-independent refusal, reuse and fusion quantify mechanisms
that substitute for permission, and speculation traces the cost/latency frontier beyond demand.
Table~\ref{tab:scenario-vocabulary} defines the controlled scenario terms.

\begin{table}[H]
\centering
\caption{Reader-facing vocabulary for the controlled scenarios in this appendix. These terms describe
structural conditions, not software identifiers.}
\label{tab:scenario-vocabulary}
\small
\begin{tabular}{L{0.25\linewidth}L{0.67\linewidth}}
\toprule
\textbf{Scenario term} & \textbf{Meaning}\\
\midrule
Named-consumer case & An output omitted from today's request is still required by another consumer recorded
in the standing program, so a static eager control cannot discard it.\\
No-consumer control & No node reads the omitted output, so ordinary dead-output elimination is available to
both eager and lazy arms.\\
All-demanded control & Today's request needs every output, so neither arm may project any of them away.\\
Standing session & Several requests arrive over time against one persistent graph; prior artifacts may be
reused when still valid.\\
Session coverage & Fraction of the standing program's products requested at least once during the session.\\
Invalidation & Fraction of requests whose changed input makes an earlier artifact and its downstream
artifacts stale.\\
Settleable / open & A requested goal does / does not obtain an acceptable result during the run.\\
Combined policy names & A row such as ``stopping + pruning'' means the base permission policy plus the named
operation; the corresponding lazy and eager rows receive that operation as a matched pair.\\
\bottomrule
\end{tabular}
\end{table}

\subsection{Projection: the authored magnitudes}
\label{app:projauthored}

\begin{table}[H]
\centering
\caption{Projection on authored programs, where we set the prices. The named-consumer/no-consumer pair is
the one-edge isolation: the programs differ only in whether the expensive undemanded output has a downstream
consumer. The gate-output/all-demanded pair separately varies whether today's goal requests that output.
The session rows carry the same shape into a standing program under a ticket sequence. Magnitudes are
authored; the sign and control pattern are the claim, and the measured version is in
Section~\ref{sec:project}.}
\label{tab:projauthored}
\footnotesize
\begin{tabular}{L{0.38\linewidth}C{0.18\linewidth}C{0.14\linewidth}C{0.16\linewidth}}
\toprule
\textbf{Scenario} & \textbf{Worth to \code{stop\_eager}} & \textbf{Worth to \code{lazy}} & \textbf{Remainder}\\
\midrule
Gate output only; trace has another consumer (bills 256 / 256 / 56) &
$0.0\%$ & $+78.1\%$ & \textbf{$+78.1\%$}\\
Gate and trace both requested (bills 261 / 261 / 261) &
$0.0\%$ & $+0.0\%$ & \textbf{$+0.0\%$}\\
\addlinespace
Named-consumer case & 0.0\% & $+84.3\%$ & \textbf{$+84.3\%$}\\
No-consumer control & $+84.3\%$ & $+84.3\%$ & \textbf{$+0.0\%$}\\
\addlinespace
Release-package standing session & 0.0\% & $+34.3\%$ & \textbf{$+34.3\%$}\\
Model-refresh standing session & 0.0\% & $+35.6\%$ & \textbf{$+35.6\%$}\\
Session-coverage sweep, $100\% \to 25\%$ & 0.0\% & --- & \textbf{$+35.4\% \to +72.9\%$}\\
\bottomrule
\end{tabular}
\end{table}

Before projection, \code{stop\_eager} and \code{lazy} have identical costs in both isolation programs.
Adding the one consumer edge leaves projection worth 0.0\% to \code{stop\_eager} but raises its worth to
\code{lazy} from 0.0\% to 84.3\%.

\subsection{Reordering details}
\label{app:reorder}

\begin{table}[H]
\centering
\caption{Overlap by arena --- of the cost difference between the stopping rule
and demand, the share a price heuristic recovers with no notion of what was asked for. All four rows come
from the one ordering grid, so the columns are commensurable.}
\label{tab:overlap}
\small
\begin{tabular}{llrrrrr}
\toprule
\textbf{Arena} & \textbf{Cost unit} & \textbf{$n$} & \textbf{Stop rule} & \textbf{Demand} & \textbf{Perm.\ gap} & \textbf{Overlap}\\
\midrule
Release-package session & authored CPU & 5 & 42.0\% & 0.0\% & 59.5\% & \textbf{70.7\%}\\
Model-refresh session & authored CPU & 5 & 25.8\% & 0.0\% & 41.7\% & \textbf{61.8\%}\\
WfCommons (60 req., 200-task cap) & recorded CPU & 60 & 1.8\% & 0.0\% & 5.0\% & \textbf{36.0\%}\\
DataSciBench & real tokens & 95 & 4.2\% & 0.0\% & 19.8\% & \textbf{21.4\%}\\
\bottomrule
\end{tabular}
\end{table}

\begin{table}[H]
\centering
\caption{Cheapest-first handed to both arms. Top: the authored pair, where the
eager arm pays about three times what arbitrary order cost it. Bottom: how often the inversion appears per
arena, and the lazy column that is zero everywhere.}
\label{tab:invert}
\small
\begin{tabular}{L{0.35\linewidth}rrr}
\toprule
\textbf{Scenario} & \textbf{Stopping} & \textbf{Stopping + cheapest-first} & \textbf{Worth}\\
\midrule
Named-consumer projection case & 332 & 932 & \textbf{$-180.7\%$}\\
No-consumer projection control & 332 & 1{,}102 & \textbf{$-231.9\%$}\\
\bottomrule
\end{tabular}

\vspace{6pt}
\begin{tabular}{llrrrr}
\toprule
\textbf{Arena} & \textbf{Cost unit} & \textbf{Tasks} & \textbf{Eager inverted} & \textbf{Worst} & \textbf{Lazy inverted}\\
\midrule
WfCommons & recorded CPU-s & 60 & 7 (11.7\%) & $-47.6\%$ & \textbf{0}\\
DataSciBench & \textbf{real tokens} & 95 & 16 (16.8\%) & $\mathbf{-72.2\%}$ & \textbf{0}\\
Authored lifecycle & authored cpu-s & 23 & 12 (52.2\%) & $-231.9\%$ & \textbf{0}\\
\bottomrule
\end{tabular}
\end{table}

\subsection{Explicit prune}
\label{sec:prune}

Pruning permanently declines a node that is demanded, ready, and passes every guard --- a live, legal option
--- for a reason the graph does not encode (a deadline, a policy on what may be touched, or a budget). It is
not early stopping, which fires once the goal is met, and not a guard, which encodes a condition in the
program text. This experiment uses the same three methods as the earlier screening control but visits the
static check first, so fixed order reaches external review before the suite and an unpruned run settles at
cost 28 rather than 201. The two families are therefore not compared bill-for-bill. The prune rule removes
external review whenever the static check is inconclusive, identically on both permission arms:

\begin{center}
\small
\begin{tabular}{L{0.28\linewidth}rrL{0.18\linewidth}L{0.24\linewidth}}
\toprule
\textbf{Scenario} & \textbf{Unpruned} & \textbf{Pruned} & \textbf{Pruned option} & \textbf{Quality}\\
\midrule
Static check already succeeds & 3 & 3 & none & matched\\
Static check is inconclusive; suite succeeds & 28 & 203 & external review & matched --- paid the suite instead\\
Only external review succeeds & 28 & 203 & external review & \textbf{goal not solved}\\
\bottomrule
\end{tabular}
\end{center}

Remainder \textbf{$+0.0\%$} on every matched program: the rule reads the calendar and the budget, not the
demanded set, so nothing demand-driven is left to measure. Burning the review when it would have settled the
goal for 28 costs 203 instead, and when the suite does not settle either, the pruned arms fail the goal
outright. That negative control is what distinguishes prune from a disguised early stop. Prune is a
capability the semantics must name, so that a risky refusal cannot later be smuggled in as a guard --- not a
saving to apply by default.

\paragraph{The resource decides the verdict.}
In authored CPU seconds, prune is a $7.25\times$ regression. But a pruner exists to decline an option for a
reason the CPU column does not contain, while calendar lead time was absent from the printed resource table.
A lead-time variant keeps the structure and prices both resources: static check 2 CPU / same day, review 25
CPU / 21 days, suite 200 CPU / 2 days, release in 5 days. Fixed order reaches review before the suite, so an
unpruned run settles on review --- cheaply, and three weeks late.

\begin{center}
\small
\begin{tabular}{lrrc}
\toprule
\textbf{Policy} & \textbf{CPU cost} & \textbf{Lead time (days)} & \textbf{On time}\\
\midrule
\code{eager}, \code{guard\_eager} & 228 & 23.0 & 0/1\\
\code{stop\_eager}, \code{lazy}, \code{react} & 28 & 21.0 & 0/1\\
Stopping + pruning, \code{lazy} + pruning & 203 & \textbf{2.0} & \textbf{1/1}\\
\bottomrule
\end{tabular}
\end{center}

CPU accounting reports prune as $-625.0\%$ on both arms with remainder $+0.0\%$; calendar lead time reports
2 days versus 21, and only the pruned arms deliver on time. Two controls: with the ship date at 30 days the
rule does not fire and every arm is on time at CPU cost 28; when the static check settles, prune never fires
and every arm bills 3. \code{stop\_eager} obtains the same 21 days as \code{lazy}, so no lead-time remainder
exists. A remainder measures demand-awareness, not absolute worth, and the resource can decide the verdict
while every arm is measured identically.

\subsection{Reuse}
\label{sec:reuse}

Reuse asks whether an artifact on hand can satisfy a later ask. On the 28 multi-goal DataSciBench follow-up
tasks, one graph is shared across a task's deliverables; a cache hit is priced at zero against the
first-seen whole-request token cost, and artifact retention is toggled identically on both arms.

\begin{center}
\small
\begin{tabular}{lr}
\toprule
\textbf{Quantity} & \textbf{Value}\\
\midrule
Mean reuse worth to \code{stop\_eager} & \textbf{55.7\%}\\
Mean reuse worth to \code{lazy} & \textbf{48.1\%}\\
Mean demand-specific remainder & \textbf{$-7.6\%$}\\
Pooled tokens, stopping / stopping + reuse & 681{,}485 / 282{,}465\\
Pooled tokens, \code{lazy} / \code{lazy} + reuse & 583{,}967 / 282{,}465\\
Tasks with exact remainder 0 & 12 / 28\\
\bottomrule
\end{tabular}
\end{center}

The remainder is negative for the same asymmetry as ordering's: the eager arm did the extra work that
populated the cache, so the cache has more to give it. Both arms land at the same pooled bill, and the lazy
arm with reuse is not more expensive anywhere. The standing-session experiment sweeps session coverage and
invalidation rate, and the remainder is negative in every cell:

\begin{center}
\small
\begin{tabular}{L{0.16\linewidth}C{0.22\linewidth}C{0.22\linewidth}C{0.22\linewidth}}
\toprule
\textbf{Coverage} & \textbf{No invalidation} & \textbf{25\% invalidated} & \textbf{50\% invalidated}\\
\midrule
100\% & $-73.8\%$ & $-59.7\%$ & $-44.9\%$\\
75\%  & $-73.8\%$ & $-59.7\%$ & $-44.9\%$\\
50\%  & $-35.8\%$ & $-58.1\%$ & $-45.4\%$\\
25\%  & $-15.3\%$ & $-30.7\%$ & $-46.0\%$\\
\bottomrule
\end{tabular}
\end{center}

At 25\% coverage the two reuse arms are \textbf{bit-identical} (191 vs.\ 191 CPU-s). In the no-invalidation
column, the remainder rises as coverage falls ($-73.8\% \to -15.3\%$); the invalidated columns are not
monotone. Swept to its floor --- one touched product of eight, three widths, three session lengths --- it
reaches exactly $+0.0\%$ and never passes it. The cells agree by \emph{touched count}, not coverage: 8 of 32
products bills what 8 of 8 bills. A cache is a partial substitute for permission, whose irreducible value is
the first ticket and work a session never repeats. Lengthening the same eight-task cohort from two through
five tickets yields remainders of $-1.7\%$, $-1.4\%$, $-3.0\%$, and $-3.0\%$, with no movement toward a
crossover.

\subsection{Fusion}
\label{sec:fusion}

Fusion pays shared physical work once. It is equally available under eager permission, so we expected no
demand-specific gain and measured that prediction. A single static rewrite is not a valid matched
intervention here: fusing a demanded node with an undemanded one would re-bill the undemanded work to
\code{lazy} and change the common planned denominator. Fusion is therefore represented by an execution-time
ledger --- both nodes stay planned at full cost, and the second member of an equivalence class is not
charged for work the first already did. A hand-merged control bills 17, exactly what the two-node graph
bills with the ledger.

\begin{center}
\small
\begin{tabular}{L{0.24\linewidth}L{0.16\linewidth}C{0.18\linewidth}C{0.13\linewidth}C{0.12\linewidth}}
\toprule
\textbf{Scenario} & \textbf{Today's goal} & \textbf{Worth to \code{stop\_eager}} & \textbf{Worth to \code{lazy}} & \textbf{Remainder} \\
\midrule
Both computations requested & both reports & $32.0\%$ & $32.0\%$ & $+0.0\%$ \\
Second computation outside request & metrics report only & $32.0\%$ & $0.0\%$ & $\mathbf{-32.0\%}$ \\
\bottomrule
\end{tabular}
\end{center}

The zero confirms fusion is not secretly a demand mechanism. The negative row is the sharpest confirmation
in the section: fusion is worth 32.0\% to a baseline that ran the addendum and has two solver setups to
merge, and nothing to an arm that never ran it, because work permission declined is duplicate work fusion
cannot merge. A positive number here would have meant permission had left duplicated unrequested
work on the table. In the archived DataSciBench plans, 21 of 32 contain a product read by two or more steps,
so the structure occurs. This is a separate pre-Terra census, not the final 28-task Terra population; what
those corpora cannot price is the shared half.

\subsection{Speculation}
\label{sec:speculate}

Speculation materializes an eligible node outside the current demanded set, betting that a later ticket will
demand it. It does not enlarge the demanded set; it admits work outside that set. Its matched eager control
is degenerate because an eager arm already materializes every ready node.

\paragraph{Eager is a setting of this operation, not a baseline for it.}
The speculator takes a budget (spend permitted outside demand) and a floor on the authored odds it will
accept, and the two interpolate between the permission rules this paper has spent its length separating:

\begin{center}
\small
\begin{tabular}{llll}
\toprule
budget & prior floor & behavior & role \\
\midrule
$0$      & ---   & bet on nothing      & \code{lazy}: lower corner \\
bounded  & $0.5$ & the row under test  & the interior \\
$\infty$ & $0$   & bet on every eligible node & \code{eager}: upper corner \\
\bottomrule
\end{tabular}
\end{center}

Both corners are verified as \emph{identities}: at budget $0$ the speculative arm is indistinguishable from
\code{lazy} on cost \emph{and on the set of paid node identifiers}; opened fully it reaches \code{eager}'s
materialized set over the nodes that declare odds. ``Eager versus lazy'' becomes a comparison of two
settings of one operation.

\paragraph{Speculation cannot reduce cost, so its case is latency.}
Reuse already stops a session paying twice, so a bet can only move work earlier, and a wrong bet adds work.
If $C_{\mathrm{spec}}$ and $C_0$ are session costs with and without speculation, then
$C_{\mathrm{spec}}\ge C_0$; equivalently, every signed cost saving in the table is at most zero. Latency is
measured as \emph{rounds containing real work} under concurrent execution,
and every latency figure in this subsection is a percentage of those rounds rather than of elapsed time. The
reason is not convenience: replayed materializers return instantly, so a wall clock here would time our
harness rather than the work, and a seconds figure would be an artifact of the replay. A round is the unit a
scheduler actually saves when a bet lands --- one concurrent wave that no longer has to wait --- so the
proxy is the right shape, but it is a proxy, and the conversion to seconds depends on a real materializer's
service time, which this instrument does not supply. The frontier compares parallel lazy execution with
reuse, with and without speculation, over standing programs driven by a ticket sequence:

\begin{center}
\small
\begin{tabular}{L{0.30\linewidth}rrr}
\toprule
\textbf{Scenario} & \textbf{Coverage} & \textbf{Cost} & \textbf{Latency}\\
\midrule
No invalidation & 100\% & $+0.0\%$  & $\mathbf{+15.4\%}$ \\
No invalidation & 75\%  & $+0.0\%$  & $\mathbf{+15.4\%}$ \\
No invalidation & 50\%  & $+0.0\%$  & $\mathbf{+22.2\%}$ \\
No invalidation & 25\%  & $-15.7\%$ & $+20.0\%$ \\
No invalidation & 12.5\% & $-30.1\%$ & $+0.0\%$ \\
50\% invalidated & 100\% & $-25.4\%$ & $+13.3\%$ \\
50\% invalidated & 50\%  & $-17.3\%$ & $+20.0\%$ \\
50\% invalidated & 12.5\% & $-30.1\%$ & $+0.0\%$ \\
Model refresh; no upstream change & --- & $+0.0\%$ & $\mathbf{+11.1\%}$ \\
Model refresh; upstream changes mid-session & --- & $-7.9\%$ & $+8.3\%$ \\
\bottomrule
\end{tabular}
\end{center}

Three regimes, and the boundary is whether the bet is vindicated rather than invalidation alone.
\textbf{Free latency}: at coverage $\ge 50\%$ with nothing invalidated, cost ties \emph{exactly} and latency
improves 11--22\%; every bet is vindicated, so the session pays the same total sooner. \textbf{A priced
trade}: once coverage falls below 50\% \emph{or} anything is invalidated, 8--25\% more cost buys 8--20\%
less waiting; whoever prepaid the widest set loses most when the world moves, which is the sentence this
paper already wrote about the eager arm. The 25\%-coverage row enters this regime despite zero invalidation:
a bet can go unredeemed simply because the session never returns for it.
\textbf{Strictly dominated}: at coverage 12.5\%, 30\% more cost buys nothing, because the session never
revisits enough for a bet to land. Speculation only has somewhere to spend because permission left the node
\textsc{planned}; an eager runtime bought the same work at full price on ticket one with no way to decline
it.

In this fixed-graph, single-ask family, the demanded set may contract after observations but cannot grow. A
node speculated while outside that set therefore cannot become demanded later in the run, giving a 0\% hit
rate by construction. The speculative case pays 5 units above the lazy arm's checkout, profile, and gate
work, so every fixed-graph, single-ask benchmark of this form reports speculation as strictly dominated.
Three refusals bound the claim: effectful nodes are never bet on at any odds; a node with no declared prior
is declined even at unbounded budget; and skill is refused --- every bettable node declares the same prior
($0.5$), so the frontier is what a \textbf{skill-free} speculator achieves and the floor a learned one must
clear.

\section{Full proofs}
\label{app:proofs}

The proofs use the execution model of Section~\ref{sec:model}. At each decision, $G$ is finite and its
demand roots belong to $N(G)$; unresolved guards fail open; and compared policies receive the same graph,
goal, projection, materializers, barriers, selector, and execution mode unless one is the variable under
study. ``Offered'' means selected from the dependency-ready set for policy evaluation; ``executable'' means
its guards have not failed and it is reversible or covered by a released barrier. Set comparisons couple
realized outputs only for nodes invoked in both runs. Where a result fixes $G$ and $R$, transformation,
graph growth, request change, and artifact invalidation are outside that run segment and trigger a fresh
application of the result.

\paragraph{Lemma~\ref{lem:mono} (monotonicity).}
\begin{proof}[Full proof]
Fix $G$ and $R$ for the run segment stated in the lemma. Let $P_t$ be the predecessor relation available to
the backward traversal at state
$\sigma_t$: a data, guard, or ordering edge belongs to $P_t$ exactly when its producer is present,
nonterminal, and not guarded false under $O_t$. The root set named by $R$ is fixed. A transition can make a
producer terminal or resolve a guard. A terminal producer and a guard resolving false remove predecessor
edges; a guard resolving true removes none, because an unresolved guard already failed open. Thus
$P_{t+1}\subseteq P_t$. The demanded set is the set reachable backward from the fixed roots under this
relation, and reachability under a subrelation cannot add nodes. Therefore
$\demanded(R,\sigma_{t+1})\subseteq\demanded(R,\sigma_t)$. By induction, a node outside the demanded set at
time $t$ remains outside it at every later state of that segment.
\end{proof}

\paragraph{Proposition~\ref{prop:equiv} (flat-pool equivalence).}
\begin{proof}[Full proof]
Before the \textbf{AnyAcceptable} goal is satisfied, every candidate endpoint is a root demand. By the
proposition's execution-domain assumption, every node that can be offered lies on a candidate chain and
belongs to $\demanded(R,\sigma)$. Consider the next node selected from a shared ready wave. If the goal is
already satisfied, both policies stop. Otherwise membership is true, so both invoke materialization on the
same node. Coupling its realized output gives both runs the same new observation and next ready wave.
Induction over selected nodes gives identical invocations until the same candidate first satisfies the
goal; both then refuse all remaining work. Hence
$M_{\code{lazy}}=M_{\code{stop\_eager}}$.
\end{proof}

\paragraph{Proposition~\ref{prop:sep} (structural separation).}
\begin{proof}[Full proof]
Fix $G$ and $R$ and let $n\notin\demanded(R,\sigma)$. By Lemma~\ref{lem:mono}, $n$ remains outside the
demanded set throughout the fixed-program segment. Demand membership is necessary for \code{lazy} to invoke
materialization, so $n\notin M_{\code{lazy}}(\sigma)$ under every selector. The rule for
\code{stop\_eager} contains no membership test. If $n$ is offered while the goal is open, its guards have
not failed, and it is legal under the shared barrier, so \code{stop\_eager} invokes it and
$n\in M_{\code{stop\_eager}}(\sigma)$. Graph growth or a new request begins a new segment with a recomputed
demanded set.
\end{proof}

\paragraph{Corollary~\ref{cor:order} (order contingency).}
\begin{proof}[Full proof]
Let $n$ be executable off-path work that becomes ready and is eventually offered.
Proposition~\ref{prop:sep} excludes $n$ from the lazy invocation set. If the offer occurs while the goal is
open, \code{stop\_eager} invokes $n$; if it occurs after closure, \code{stop\_eager} refuses it. Thus
inclusion in the stopping executor's set is contingent on position relative to the closing event, while the
lazy decision is not. The sets can coincide only if every such node is postponed past that event. Under
fixed, strictly positive additive node costs, any extra invocation increases the bill, so the same condition
is necessary for bill equality. If no closing event occurs, \code{stop\_eager} invokes every such node it is
offered.
\end{proof}

\paragraph{Proposition~\ref{prop:orderinv} (order invariance).}
\begin{proof}[Full proof]
Under C1, a materialized node produces the same payload in every visit order. Under C2, each guard therefore
resolves identically in every order. Under C3, the graph is fixed. Consequently every fair selector sees the
same guard-valid dependency relation and backward closure from the conjunctive roots. By assumption every
demanded invocation succeeds, and the goal cannot close before its executable closure has run, so stopping
removes no member early. Demand permission admits the same executable, nonterminal closure members and
rejects every node outside them. A selector may choose a different topological order among simultaneously
ready members, but fairness eventually offers every member. Hence every selector invokes materialization on
the same set and $M_{\code{lazy}}$ is invariant.

For \textbf{EachOf} over disjunctive pools, the statement applies only to the across-product choice: an
unrequested product is outside the roots in every order. Within a requested pool, the first passing
candidate may change with order, so the stronger whole-set invariance does not apply, as stated after the
proposition in Section~\ref{sec:theory}.
\end{proof}

\end{document}